\documentclass{article}

\usepackage[a4paper, total={6in, 9.4in}]{geometry}
\usepackage[skip=7pt plus1pt, indent=0pt]{parskip}
\usepackage[accepted]{tmlr}

\usepackage[colorlinks = true,
            linkcolor = blue,
            urlcolor  = blue,
            citecolor = blue,
            anchorcolor = blue]{hyperref}

\usepackage{amsmath,amsfonts,bm}

\def\eqref#1{equation~\ref{#1}}

\def\1{\bm{1}}

\def\vzero{{\bm{0}}}

\def\vc{{\bm{c}}}

\def\vp{{\bm{p}}}

\def\vr{{\bm{r}}}
\def\vs{{\bm{s}}}

\def\vu{{\bm{u}}}
\def\vv{{\bm{v}}}

\def\vx{{\bm{x}}}
\def\vy{{\bm{y}}}
\def\vz{{\bm{z}}}
\def\v1{{\bm{1}}}

\DeclareMathAlphabet{\mathsfit}{\encodingdefault}{\sfdefault}{m}{sl}
\SetMathAlphabet{\mathsfit}{bold}{\encodingdefault}{\sfdefault}{bx}{n}

\def\gD{{\mathcal{D}}}

\def\gF{{\mathcal{F}}}

\def\gL{{\mathcal{L}}}

\def\gU{{\mathcal{U}}}

\DeclareMathOperator*{\argmax}{arg\,max}
\DeclareMathOperator*{\argmin}{arg\,min}

\usepackage{comment}
\usepackage{cancel}
\usepackage{xcolor}
\usepackage{enumitem}
\usepackage{amssymb}
\usepackage{mathtools}
\usepackage{amsthm}
\usepackage{thmtools, thm-restate}
\usepackage{optidef}
\usepackage{algorithm, float}
\usepackage{algpseudocode}
\usepackage{wrapfig}
\usepackage{xcolor}
\usepackage{mdframed}
\usepackage{multirow, booktabs}

\theoremstyle{plain}

\theoremstyle{definition}

\theoremstyle{plain}

\theoremstyle{plain}

\newcommand\numberthis{\addtocounter{equation}{1}\tag{\theequation}}

\newcommand{\norm}[1]{\left\lVert#1\right\rVert}

\algnewcommand\algorithmicforeach{\textbf{forg each}}
\algdef{S}[FOR]{ForEach}[1]{\algorithmicforeach\ #1\ \algorithmicdo}

\title{Efficient and Accurate Optimal Transport with \\ Mirror Descent and Conjugate Gradients}

\author{\name Mete Kemertas \email kemertas@cs.toronto.edu \\
      \addr University of Toronto, Department of Computer Science \\
      Vector Institute
      \AND
      \name Allan D. Jepson \email jepson@cs.toronto.edu\\
      \addr \addr University of Toronto, Department of Computer Science
      \AND
      \name Amir-massoud Farahmand \email farahmand@mila.quebec \\
      \addr Polytechnique Montréal \\
      Mila - Quebec AI Institute \\
      University of Toronto, Department of Computer Science \\
      }

\begin{document}

\maketitle

\begin{abstract}
We propose \emph{Mirror Descent Optimal Transport} (MDOT), a novel method for solving discrete optimal transport (OT) problems with high precision, by unifying temperature annealing in entropic-regularized OT (EOT) with mirror descent techniques. In this framework, temperature annealing produces a sequence of EOT dual problems, whose solution gradually gets closer to the solution of the original OT problem. We solve each problem efficiently using a GPU-parallel nonlinear conjugate gradients algorithm (PNCG) that outperforms traditional Sinkhorn iterations under weak regularization. Moreover, our investigation also reveals that the theoretical convergence rate of Sinkhorn iterations can exceed existing non-asymptotic bounds when its stopping criterion is tuned in a manner analogous to MDOT.

Our comprehensive ablation studies of MDOT-PNCG affirm its robustness across a wide range of algorithmic parameters. Benchmarking on 24 problem sets of size $n=4096$ in a GPU environment demonstrate that our method attains high-precision, feasible solutions significantly faster than a representative set of existing OT solvers—including accelerated gradient methods and advanced Sinkhorn variants—in both wall-clock time and number of operations. Empirical convergence rates range between $O(n^2 \varepsilon^{-1/4})$ and $O(n^2 \varepsilon^{-1})$, where $\varepsilon$ is the optimality gap. For problem sizes up to $n=16\,384$, the empirical runtime scales as $\widetilde{O}(n^2)$ for moderate precision and as $\widetilde{O}(n^{5/2})$ at worst for high precision. These findings establish MDOT-PNCG as a compelling alternative to current OT solvers, particularly in challenging weak-regularization regimes.
\end{abstract}

\section{INTRODUCTION}

When a statistical distance is required for an event space equipped with a metric, optimal transport (OT) distances, such as the Wasserstein metric, provide an intuitive means to account for the inherent structure of the metric space. Consequently, fast, scalable, and accurate computation of OT distances is a major problem encountered in various
scientific fields. Example application areas include point cloud registration \citep{shen2021}, color transfer \citep{pitie2005, ferradans2014colortransfer, rabin2014adaptive}, shape matching \citep{feydy2017optimal}, texture mixing \citep{Ferradans2011Texture, bonneel2015sliced} and meshing \citep{digne2014feature} in computer vision and graphics, quantum mechanics \citep{leonard2012schrodinger}, astronomy \citep{frisch2002reconstruction, levy2021universe} and quantum chemistry \citep{bokanowski1996deformations} in physics, and generative modeling \citep{gulrajani2017improved, genevay2018learning}, reinforcement learning \citep{Ferns2004Metrics, dadashi2021primal}, and neural architecture search \citep{kandasamy2018neural} in machine learning. Exact solvers for the discrete OT problem encounter significant computational hurdles in high dimensions, with theoretical complexity $\widetilde{O}(n^{5/2})$ and practical complexity $\widetilde{O}(n^3)$ \citep{lee2014path, pele2009fast}. 

Entropic regularization, as pioneered by \citet{cuturi2013sinkhorn}, has mitigated challenges in scalability by regularizing the classical problem, thereby allowing approximate solutions in $\widetilde{O}(n^2)$ time via the Sinkhorn-Knopp (SK) matrix scaling algorithm. This advancement, together with GPU parallelization, has yielded substantial speed improvements, making it several orders of magnitude faster than conventional CPU-based solvers (e.g., linear programming) in high dimensions \citep{peyre2019computational}. However, these methods necessitate a delicate balance between regularization strength and convergence speed, a trade-off that can compromise the precision of the solution. Despite significant progress in recent years, many state-of-the-art solvers still struggle to strike a better trade-off than aggressively tuned Sinkhorn iterations in practice \citep{dvurechensky2018computational, jambulapati2019, lin19a}. Although they offer superior theoretical guarantees, their practical performance is often less compelling, particularly in terms of speed and scalability. Existing algorithms either suffer from high computational complexity or do not take advantage of modern hardware capabilities, such as GPU parallelization \citep{tang2024accelerating}. To understand and combat these challenges, we make the following contributions:
\begin{enumerate}
    \item We empirically show that in a GPU environment the decades-old Sinkhorn-Knopp algorithm for OT can still outperform many theoretically grounded recent OT algorithms in practice, especially when tuned with a seemingly unconventional stopping criterion formula proposed here (Fig. \ref{fig:wallclock-time}).
    \item We introduce mirror descent optimal transport (MDOT), a method 
    which generalizes temperature annealing in entropic OT (EOT) \citep{schmitzer2019scaling, feydy2020}, and connects temperature annealing to mirror descent (Alg. \ref{alg:mdot}). 
    \item We introduce an instantiation of MDOT that empirically improves speed and robustness to temperature (regularization strength) decay rate compared to $\varepsilon$-scaling of \citet{schmitzer2019scaling} (Fig. \ref{fig:warm-start-and-convergence}).
    \item We show that MDOT can compute high precision, feasible solutions and its performance can be boosted by adopting a specialized GPU-parallel conjugate gradients (CG) algorithm developed here (Alg. \ref{alg:ncg-proj}); this method is highly competitive in practice, as we show empirically (Figs. \ref{fig:wallclock-time}, \ref{fig:problem-size-dependence}, \ref{fig:dotmark-first}-\ref{fig:dotmark-last}).
\end{enumerate}
The remainder of this paper is organized as follows. In the next section, we introduce our notation and the necessary background, followed by related work in Sec. \ref{sec:related-work}. In Sec. \ref{sec:md-for-ot}-\ref{sec:mdot}, we introduce the MDOT framework and establish its connection to temperature annealing strategies, and make some practical recommendations. In Sec. \ref{sec:cg}, we introduce the non-linear CG algorithm to be used within MDOT as an alternative to SK. In Sec. \ref{sec:experiments}, we benchmark various algorithms on upsampled MNIST ($n=4096$) under $L_1$ and squared $L_2$ costs, and a color transfer problem \textit{in terms of wall-clock time}, and further study the operation count dependence of the proposed algorithm on problem size $n$. Lastly, we present concluding remarks in Sec. \ref{sec:conclusion}.

\section{Background}
\label{sec:background}

Here, we present our notation, the basics of EOT, and the necessary background on mirror descent and CG.
\paragraph{Notation and Definitions.} We consider discrete OT, where the event space is finite with $n$ particles and ${\Delta_n \subset \mathbb{R}_{\geq 0}^{n}}$ is the $(n{-}1)$-simplex. The row sum of an ${n \times n}$ matrix $P$ is $\vr(P) \coloneqq P \1$ and the column sum is $\vc(P) \coloneqq P^\top \1$. Given marginals ${\vr, \vc \in \Delta_n}$, the transportation polytope is written as ${\gU(\vr, \vc) = \{ P \in \mathbb{R}^{n \times n}_{\geq 0} ~|~ \vr(P) = \vr, \vc(P) = \vc \}}$. Division, $\exp$ and $\log$ over vectors or matrices are element-wise. 
Vectors in $\mathbb{R}^n$ are column vectors, and $(\vx, \vy)$ denotes the concatenation of $\vx$ and $\vy$. Vector and Frobenius inner products alike are given by $\langle \cdot, \cdot \rangle$. Hadamard product is given by $A \odot B$. An ${n \times n}$ diagonal matrix with $\vx \in \mathbb{R}^n$ along the diagonal is written as $\mathbf{D}(\vx)$, and the vector formed by the diagonal entries of a matrix $Q$ is $\mathbf{diag}(Q)$. LogSumExp reductions over the rows and columns of $X \in \mathbb{R}^{n \times n}$ are given by $\mathrm{LSE}_r(X) \coloneqq \log \big(\exp \{X\} \1 \big)$ and $\mathrm{LSE}_c(X) \coloneqq \log \big(\exp\{ X^\top \} \1 \big)$. The Shannon entropy of ${\vr \in \Delta_n}$ is denoted ${H(\vr) = - \langle \vr, \log \vr \rangle}$ with the convention that $0 \cdot \log 0 = 0$. Under the same convention, the KL divergence $D_\mathrm{KL}(\vr | \vr^\prime) = \langle \vr, \log (\vr / \vr^\prime) \rangle + \langle \vr^\prime - \vr, \1 \rangle$ for $\vr, \vr^\prime \in \mathbb{R}_{\geq 0}^n$ given $\vr$ absolutely continuous with respect to $\vr^\prime$.

\subsection{Optimal Transport}

 Given a cost matrix ${C \in {[0, 1]}^{n \times n}}$, where $C_{ij}$ is the transportation cost between the $i^{\mathrm{th}}$ and $j^{\mathrm{th}}$ particles, we study the EOT problem:
 \vspace{-10pt}
\begin{mini}[2]
  {P \in \gU(\vr, \vc)}{\langle P, C \rangle - \frac{1}{\gamma}H(P),}{}{}
\label{opt:sinkhorn-primal}
\vspace{-17pt}
 \end{mini}
 where $\gamma > 0$. Here, the regularization weight $\gamma^{-1}$ is  called \textit{temperature}. The Lagrangian of (\ref{opt:sinkhorn-primal}) is strictly convex in $P$, which renders the solution $P^*(\gamma)$ unique. $P^*(\gamma)$ converges to a solution of the unregularized OT problem as $\gamma \rightarrow \infty$ and admits the following form \citep{cuturi2013sinkhorn}:
\begin{align}
    P_{ij}(\vu, \vv; \gamma) = \exp \{ u_i + v_j - \gamma C_{ij} \},
\label{eq:lagrangian-closed-form}
\vspace{-5pt}
\end{align}
where ${\vu, \vv \in \mathbb{R}^n}$. An optimal pair $(\vu, \vv)$ minimizes the following convex dual problem \citep{lin19a}: 
\begin{mini}[2]
  {\vu, \vv \in \mathbb{R}^{n}}{g(\vu, \vv; ~\gamma, \vr, \vc) = \sum_{ij}P_{ij}(\vu, \vv; \gamma) - \langle \vu, \vr \rangle - \langle \vv, \vc \rangle,}{}{}
\label{opt:sinkhorn-dual}
\vspace{-7pt}
 \end{mini}
where $\nabla_\vu g = \vr(P) - \vr$ and $\nabla_\vv g = \vc(P) - \vc$. The gradient norm naturally measures the constraint violation; $L_1$ norm is typically used to monitor convergence as it relates to the total variation metric between probability distributions \citep{altschuler2017near}. The Sinkhorn-Knopp (SK) algorithm (see Alg. \ref{alg:partial-sinkhorn} in Appx. \ref{sec:proofs-and-additional-results}) can be used to minimize $g$ with guaranteed convergence to dual-optimal variables as the number of iterations $k \rightarrow \infty$ \citep{sinkhorn1967concerning, sinkhorn1967diagonal, franklin1989scaling, knight2008sinkhorn}. \citet{dvurechensky2018computational} showed that SK can be used to compute a solution $P \in \gU(\vr, \vc)$ satisfying $\langle P - P^*, C \rangle \leq \varepsilon$ with complexity $\widetilde{O}(n^2/\varepsilon^2)$, where $P^*$ is an optimal solution of the unregularized OT problem. In particular, one first minimizes the dual objective until the $L_1$ norm of its gradient is below a prescribed threshold, then applies the rounding algorithm of \citet{altschuler2017near} on the infeasible plan given by (\ref{eq:lagrangian-closed-form}) to obtain $P \in \gU(\vr, \vc)$ with an upper bound on the primal cost increase.

\subsection{Mirror Descent}
\label{sec:background-md}
Consider a constrained convex minimization problem $\min_{P \in \gF}f(P)$, where the convex objective ${f: \gD \rightarrow \mathbb{R}}$ is defined over some domain $\gD$, and the feasible set ${\gF \subseteq \gD}$, e.g., in OT, we take ${f(P) = \langle P, C \rangle}$, ${\gF = \gU(\vr, \vc)}$ and ${\gD = \mathbb{R}^{n \times n}_{\geq 0}}$. Mirror descent (MD) method of \citet{nemirovski1983problem} generalizes \textit{projected} gradient descent (GD) by 
first choosing a strictly convex function $h: \gD \rightarrow \mathbb{R}$ called the \textit{mirror map}, which induces a \textit{Bregman divergence} ${D_h(P|Q) \geq 0}$ between points $P ,Q \in \gD$ as the difference between $h(P)$ and its first order approximation around $Q$:
\begin{align}
\label{eq:bregman-div-def}
    D_h(P | Q) = h(P) - h(Q) - \langle \nabla h(Q), P - Q \rangle.
\end{align}
Then, the variable $P^{(t)}$ is updated as follows \citep{bubeck2015convex}:\\
\begin{subequations} %
\label{eq:gradplusproj-step}
\begin{minipage}{1.0\textwidth}
  \begin{minipage}{0.54\textwidth}
    \begin{equation}
      \hat{P}^{(t+1)} = \nabla h^{-1}\left(\nabla h(P^{(t)}) - \Delta^{(t)} \nabla f(P^{(t)})\right) \label{eq:gradient-step}
    \end{equation}
  \end{minipage}
  \begin{minipage}{0.45\textwidth}
    \begin{equation}
      \quad P^{(t+1)} = \argmin_{P \in \gF \cap \gD} D_h(P | \hat{P}^{(t+1)}) \label{eq:projection-step}
    \end{equation}
  \end{minipage}
\end{minipage}
\end{subequations} 
where ${\Delta^{(t)} > 0}$ is the step size.
For $h(P) = \norm{P}_2^2/2$, projected GD is recovered as a special case since ${\nabla h(P) = P}$ and $D_h(P | Q) = \norm{P-Q}^2_2/2$. 
For certain problem geometries (e.g., if $\gD$ is the simplex), choosing a different $h$ provably accelerates convergence. 
The idea is that in (\ref{eq:gradient-step}), ${\nabla h: \gD \rightarrow \gD^*}$ maps primal variables $P$ to a dual space $\gD^*$, where applying a gradient update better aligns with the local curvature of the underlying geometry \citep{bubeck2015convex}. Once the updated point is mapped back to primal space $\gD$ via $\nabla h^{-1}: \gD^* \rightarrow \gD$, (\ref{eq:projection-step}) performs a \textit{Bregman projection} onto the feasible set $\gF$. Plugging (\ref{eq:gradient-step}) into (\ref{eq:projection-step}) and rearranging yields an equivalent form with another interpretation \citep{BECK2003MirrorDescent}:
\begin{align}
\label{eq:mirror-updates}
    P^{(t+1)} = \argmin_{P \in \gF \cap \gD} \{ \langle \nabla_P f(P^{(t)}), P \rangle + \frac{1}{\Delta^{(t)}} D_h(P | P^{(t)}) \}.
\vspace{-15pt}
\end{align}
Each MD step (\ref{eq:mirror-updates}) seeks an update jointly maximizing the inner product with the steepest descent direction,
$-\nabla_P f(P^{(t)})$, subject to (i) a penalty for \textit{diverging} from the current point $P^{(t)}$ and (ii) feasibility constraints. 

A common choice is the negative entropy mirror map ${h(P) = \langle P, \log P \rangle}$ in domain ${\gD = \mathbb{R}^{n \times n }_{\geq 0}}$, which yields ${D_h(P | Q) = D_{\mathrm{KL}}(P | Q)}$. Throughout this work, we only use this mirror map.\footnote{See \citet{di2020optimal} for other divergence-regularized optimal transport problems (using mirror maps besides negative Shannon entropy). They consider a single step of (\ref{eq:mirror-updates}), while we focus on multiple steps.} Hence, we write (\ref{eq:gradplusproj-step}) explicitly for this case, where ${\nabla h(P) = \1 + \log P}$ and $\nabla h^{-1}(R) = \exp\{R-\1\}$,\\
\begin{subequations} %
\label{eq:gradplusproj-step2}
\begin{minipage}{1.\textwidth}
  \begin{minipage}{0.54\textwidth}
    \begin{equation}
      \hat{P}^{(t+1)} = P^{(t)} \odot \exp\{-\Delta^{(t)} \nabla f(P^{(t)}) \} \label{eq:gradient-step2}
    \end{equation}
  \end{minipage}
  \hfill
  \begin{minipage}{0.45\textwidth}
    \begin{equation}
      \quad P^{(t+1)} = \argmin_{P \in \gF \cap \gD} D_\mathrm{KL}(P | \hat{P}^{(t+1)}). \label{eq:projection-step2}
    \end{equation}
  \end{minipage}
\end{minipage}
\end{subequations} 
Here, we re-weight $P^{(t)}$ entrywise by a factor of $\exp\{-\Delta^{(t)}\nabla f(P^{(t)})\}$, followed by a KL projection onto $\gF$, e.g., if $\gF$ is the probability simplex, (\ref{eq:projection-step2}) is a simple renormalization $P \mapsto P / \norm{P}_1$. We leverage the properties of this mirror map for the case $\gF = \gU(\vr, \vc)$ to design MDOT in Sec. \ref{sec:technical}.

\section{Related Work}
\label{sec:related-work}
Acceleration of approximate OT solvers has been a focus of machine learning research since the seminal work of \citet{cuturi2013sinkhorn}. For instance, \citet{altschuler2017near}  proposed the Greenkhorn algorithm, which greedily selects individual rows or columns to scale at a given step and requires fewer row/column updates than SK to converge, but performs poorly due to low GPU utilization unless $n$ is extremely large. \citet{dvurechensky2018computational} proposed an adaptive primal-dual accelerated gradient descent (APDAGD) algorithm. \citet{lin19a} later proposed adaptive primal-dual accelerated \textit{mirror} descent (APDAMD) with theoretical guarantees. \citet{lin19a} showed APDAMD to outperform APDAGD \textit{in terms of number of iterations}, but not SK. Further, these tests only covered a high relative error regime (${{>}50\%}$); we investigate a broader scope down to $10^{-9}$ error in Section \ref{sec:experiments}. Modest gains over SK in terms of number of iterations in the same regime were later obtained by \citet{lin2022efficiency}  via an accelerated alternating minimization (AAM) algorithm similar to that of \citet{guminov21aam}. Notably, APDAMD applies mirror descent to the dual (\ref{opt:sinkhorn-dual}) of the EOT problem, while we apply it to the primal of the unregularized OT problem, i.e., problem (\ref{opt:sinkhorn-primal}) as $\gamma \rightarrow \infty$.

Application of mirror descent to the primal of the OT problem has also been considered. 
\citet{yangtoh2022} propose a more general inexact mirror descent algorithm (iBPPA), which they apply to the OT problem. Our approach differs from theirs in several ways, most notably in how we decide to terminate the Bregman projection inner loop to solve (\ref{eq:projection-step2}) inexactly (details in Section \ref{sec:mdot}).
Recently, \citet{ballu2022mirror} introduced Mirror Sinkhorn (MSK), which also takes gradient steps in the dual space as in (\ref{eq:gradient-step2}), but instead of approximately projecting onto $\gU(\vr, \vc)$ as in (\ref{eq:projection-step2}) (as we do here), they alternately project onto $\gU(\cdot, \vc)$ and $\gU(\vr, \cdot)$ via Sinkhorn updates, satisfying only half of the marginal constraints at a time. Our experiments in Sec. \ref{sec:experiments} suggest that this approach is efficient only in the low precision regime. 
Furthermore, MSK requires maintaining a running average of the transport plan at each iteration, precluding a straightforward $O(n)$ memory implementation. 
In contrast, all algorithms presented here admit $O(n)$ memory implementations, assuming individual cost matrix entries can be computed on-the-fly in $O(1)$ time.
\citet{xie20ipot} previously proposed an algorithm (IPOT) similar to MSK with a fixed, even number of Sinkhorn updates (usually 2) following temperature updates.
Alg. 3.5 of \citet{feydy2020} is also similar to these algorithms in spirit and is discussed thoroughly in Sec. \ref{sec:wall-clock-detailed-discssuion}. As discussed in detail in Sec. \ref{sec:md-for-ot}, well-known $\varepsilon$-scaling strategies are also closely related \citep{kosowsky1994invisible, schmitzer2019scaling}.  Similar ideas have also been applied to the Wasserstein barycenter problem \citep{gramfort2015barycenter, xie20ipot}, which is left outside the scope of this work.

An alternative line of acceleration research focuses on multi-scale strategies, which employ clustering or grid-based methods to solve a series of coarse-to-fine OT problems and are sometimes combined with $\varepsilon$-scaling \citep{schmitzer2016sparse, schmitzer2019scaling, feydy2020}. These are known to provide performance gains when the marginals are defined over well-clustered particles or in low-dimensional event spaces \citep{peyre2019computational}. Lastly, in a similar spirit to our use of non-linear CG here, curvature-aware convex optimization techniques such as L-BFGS have also been considered for OT, e.g., \citet{merigot2011, blondel18a}; however, scalability, precision and better performance than SK on GPUs has not been demonstrated simultaneously to our knowledge. \citet{tang2024accelerating} recently adopted Newton's method with Hessian sparsification to efficiently use second order information, but their key sparsification strategy is maximally utilized only on CPUs.

\section{A Mirror Descent Framework for Optimal Transport}
\label{sec:technical}
\subsection{Temperature Annealing as Mirror Descent}
\label{sec:md-for-ot}

The OT objective $\langle P, C \rangle$ has a constant gradient ${\nabla_P \langle P, C \rangle = C}$. 
Given step sizes $\Delta^{(t)} > 0$ at time $t \geq 0$, we obtain mirror descent iterates (using the negative entropy mirror map) from (\ref{eq:gradplusproj-step2})
\vspace{-20pt}
\begin{align}
\label{eq:mirror-ot}
    P^{(t+1)} = \argmin_{P \in \gU(\vr, \vc)} D_{\mathrm{KL}}\Big(P | P^{(t)} \odot \exp\{-\Delta^{(t)}C\}\Big).
\vspace{-15pt}
\end{align}
Comparing the conditioning term $P^{(t)} \odot \exp\{-\Delta^{(t)}C\}$ above with the Lagrangian closed form solution to the EOT problem in (\ref{eq:lagrangian-closed-form}), namely $P(\vu, \vv; \gamma) = P(\vu \1^\top + \1 \vv^\top -\gamma C)$, suggests considering an MD step starting with a $P^{(t)}$ of this same form. Specifically, we find the following result (proved in Appx. \ref{sec:bregmanprojection-derivation}).

\begin{restatable}[]{lemma}{bregprojlemma}
\label{lem:minimization-bregman-equivalence}
Given any initial ${\vu, \vv \in \mathbb{R}^n}$, $\gamma \geq 0$ and $\Delta_\gamma > 0$, we have 
\vspace{-5pt}
\begin{align}
    P(\vu^*, \vv^*; \gamma + \Delta_\gamma) = \argmin_{P \in \gU(\vr, \vc)} D_{\mathrm{KL}}\Big(P|P(\vu, \vv; \gamma) \odot \exp\{-\Delta_{\gamma}C\}\Big),
\end{align} 
\vspace{-5pt}
where $\vu^*, \vv^* \in \argmin g(\gamma + \Delta_\gamma, \vr, \vc)$.
\end{restatable}
That is, if we begin an MD step for the OT problem with any $P^{(t)} = P(\vu,\vv; \gamma^{(t)})$ (i.e., in the Lagrangian form but not necessarily optimal for the EOT problem), then the next MD iterate is the unique solution to the EOT dual problem at $\gamma^{(t+1)}=\gamma^{(t)}+\Delta_\gamma^{(t)}$, 
namely $P^{(t+1)} = P^*(\gamma^{(t+1)})$. 
Therefore, by induction, if we begin the MD iteration (\ref{eq:mirror-ot}) for the OT problem with such an initial $P^{(0)}$, then the MD iterates 
$P^{(t)}$ trace solutions of the EOT problems at increasing values of $\gamma^{(t)}$.

For initialization, recall that the EOT solution at $\gamma {=} 0$ is the maximum entropy coupling, $\vr\vc^\top = \lim_{\gamma \rightarrow 0^{+}}\argmin_{P \in \gU(\vr, \vc)}\langle P, C \rangle -\frac{1}{\gamma}H(P)$. Thus, we can initialize $\gamma^{(0)} =0$ and $P^{(0)} = P(\log \vr,\log \vc; 0) = \vr \vc^\top$. Alternatively, in view of Lemma \ref{lem:minimization-bregman-equivalence}, it would suffice to use $P^{(0)} = P(\vu,\vv;0)$ for any $\vu,\vv \in \mathbb{R}^n$ at $\gamma^{(0)} =0$. That is, $P^{(0)}$ can be any rank-1 matrix in $\mathbb{R}^{n\times n}_{>0}$. 

Finally, note that for the purpose of computing $P^{(t+1)}$ in (\ref{eq:mirror-ot}), Lemma \ref{lem:minimization-bregman-equivalence} ensures that any values $\vu^{(t)}$ and $\vv^{(t)}$ can serve to provide a suitable $P^{(t)} = P(\vu^{(t)}, \vv^{(t)}; \gamma^{(t)})$. Therefore, when implementing the MD iteration in (\ref{eq:mirror-ot}) we do not need to converge to exact optimality each step (see Fig. \ref{fig:visualization}).
We formalize this discussion and provide further insight in the following proposition.
\begin{restatable}[]{proposition}{equivalenceremark}
\label{prop:equivalence}
Let ${\gamma^{(0)}=0}$ and ${\gamma^{(t+1)} = \gamma^{(t)}} + \Delta_{\gamma}^{(t)}$, which together imply $\gamma^{(t+1)} = \sum_{t^\prime=0}^t \Delta_{\gamma}^{(t^\prime)}$.  Suppose $P^{(0)} \in \mathbb{R}^{n \times n}_{>0}$ is rank-1 and $P^{(t)}$ are computed via (\ref{eq:mirror-ot}) for $t \geq 0$.  The following are true.
\begin{enumerate}[leftmargin=0.5cm]
    \item $P^{(t+1)} = P^*(\gamma^{(t+1)})$, i.e., the solution of the EOT problem (\ref{opt:sinkhorn-primal}) at $\gamma^{(t+1)}$.
    \item Given any $\vu, \vv \in \mathbb{R}^n$, $P^{(t+1)} = \argmin_{P \in \gU(\vr, \vc)} D_{\mathrm{KL}}(P|P(\vu, \vv; \gamma^{(t+1)}))$.
    \item $\langle P^{(t)} - P^*, C \rangle  \leq H_{\min}(\vr, \vc) \big/ \gamma^{(t)}$, where ${P^* \in \argmin_{P \in \gU(\vr, \vc)}\langle P, C \rangle}$ and $H_{\min}(\vr, \vc) \coloneqq \min\big(H(\vr), H(\vc)\big)$.
    \item $\langle P^{(t)} {-} P^{(t+1)}, C \rangle = \frac{1}{\Delta_\gamma^{(t)}}\Big( D_{\mathrm{KL}}\big(P^{(t)} | P^{(t+1)}\big) + D_{\mathrm{KL}}\big(P^{(t+1)} | P^{(t)}\big) \Big)$ for all $t \geq 0$.
\end{enumerate}
\end{restatable}

A detailed proof is deferred to Appx. \ref{sec:proof-of-prop41}. Next we comment on each statement in the order presented.

\textbf{\emph{1.}} proves the equivalence of MD iterates to a sequence of solutions to EOT problems with decreasing temperature, and therefore connects MD to temperature annealing.

\textbf{\emph{2.}} shows that MD iterates $P^{(t+1)}$ derived from a rank-1 initialization can be written independently of the previous solution $P^{(t)}$ (cf. \ref{eq:mirror-ot}), but rather only as a function of $\gamma^{(t+1)}$. This means that we need not start from an exact minimizer of the previous KL projection problem to reach the correct solution of the new problem. Given this result and Lemma \ref{lem:minimization-bregman-equivalence}, MD reduces in the dual space to a numerical continuation method \citep{allgower2003introduction}, in which we numerically trace the curve $\vu^*(\gamma), \vv^*(\gamma)$:\footnote{Slight abuse of terminology: in fact, there is a space of optimal curves, since $g(\vu, \vv; \gamma) = g(\vu {+} \delta \1, \vv {-} \delta \1; \gamma)$ for any finite $\delta$.}
\begin{align}
\label{eq:numerical-continuation}
    \vu^{(t)}, \vv^{(t)} \approx \argmin_{\vu, \vv \in \mathbb{R}^n}g(\vu, \vv; \gamma^{(t)}, \vr, \vc),
\end{align}
where each problem is initialized with some guess $\vu^{\prime}, \vv^{\prime} \in \mathbb{R}^{n}$. This is the key idea behind MDOT.

\textbf{\emph{3.}} bounds the primal optimality gap at a given step $t \geq 1$ in terms of the entropies of the marginals $\vr, \vc$.  Since $\min(H(\vr), H(\vc)) \leq \log n$ for $\vr, \vc$ on the ${(n{-}1)}$-simplex, this is a tighter bound than the standard upper bound $\gamma^{-1}\log n$ used in prior work \citep{altschuler2017near, dvurechensky2018computational, lin19a}.\footnote{This $\log n$ term appears in the time complexity of various algorithms, but is hidden in $\widetilde{O}$-notation.}

\textbf{\emph{4.}} shows the one-step reduction in transport cost \textit{with equality}; this is in contrast to the more standard analysis of MD where the improvement is bounded  with an inequality.

\subsection{A mirror descent method for optimal transport: MDOT}
\label{sec:mdot}
Using insights derived in the previous section,  we construct the MDOT method shown in Alg. \ref{alg:mdot}. As illustrated in Fig. \ref{fig:visualization}, MDOT minimizes a sequence of EOT dual objectives $g(\gamma^{(t)})$. At each step, MDOT uses prior approximate minimizers $\vu^{(t-k)}, \vv^{(t-k)}$ of $g(\gamma^{(t-k)})$ for $k \in [0, K]$ to extrapolate better initial guesses (warm-starting) for the next problem at $t+1$ (see Fig. \ref{fig:visualization}, where $K=1$ is depicted).
\begin{figure*}
\vspace{-25pt}
\centering    \includegraphics[width=.8\linewidth]{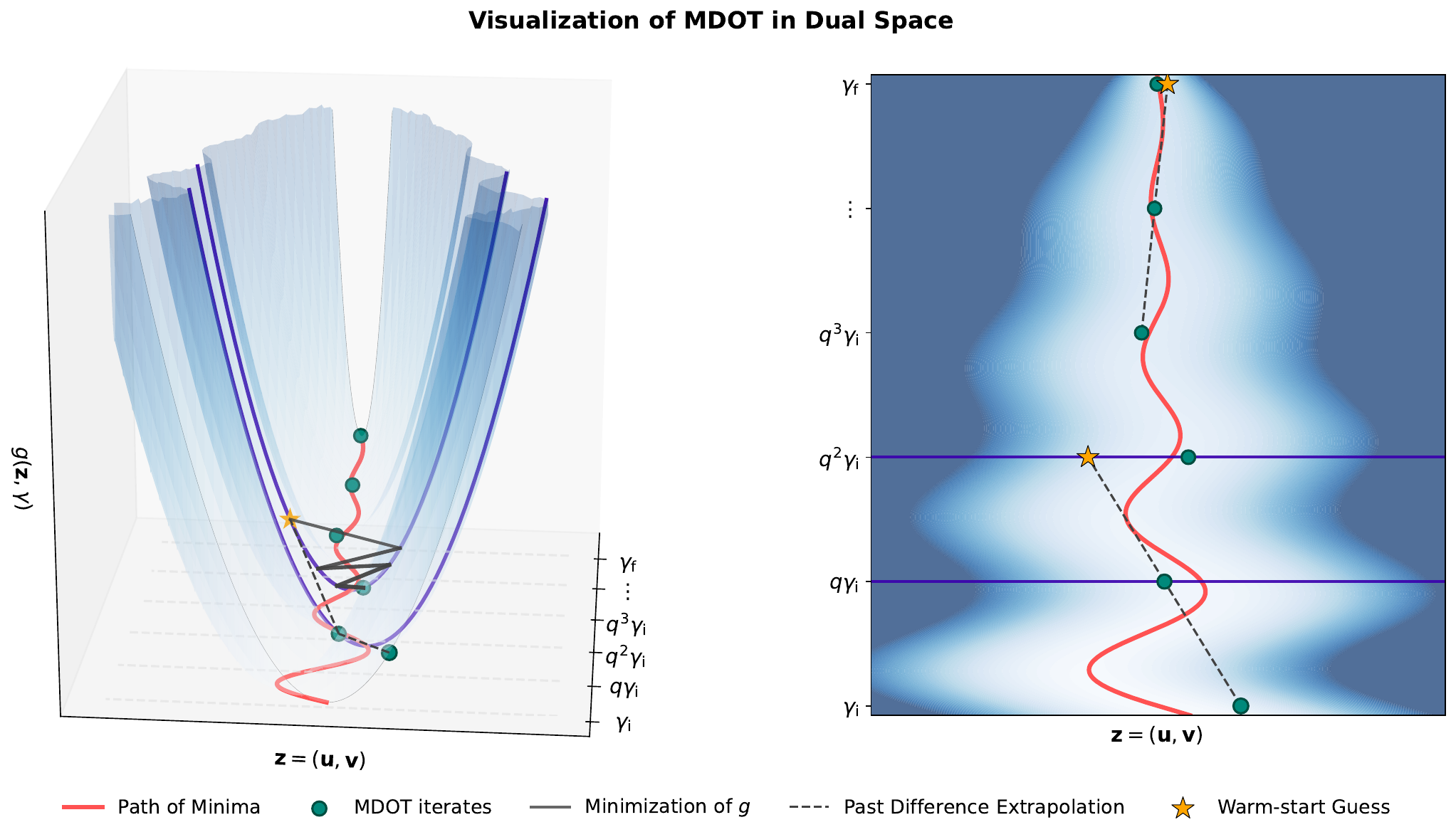}
\caption{In each iteration of Alg. \ref{alg:mdot}, MDOT approximately minimizes $g(\vu, \vv; \gamma)$ as in (\ref{eq:numerical-continuation}) to compute duals $(\vu^{(t)}, \vv^{(t)})$ (i.e., ``MDOT iterates''),  
starting initially with $\gamma_{\mathrm{i}}$ and increasing $\gamma$ by a factor of ${q>1}$ per iteration until a final value $\gamma_{\mathrm{f}}$. With increasing $\gamma$ (weaker entropic regularization), the convex dual objectives defined over $\mathbb{R}^{2n}$ become increasingly ill-conditioned (illustrated here in 1D by parabolas with increasingly high curvature) and therefore harder to solve. At each step, MDOT uses approximate solutions of prior problems to produce an initial guess for the next problem \textbf{(left)}, and generates increasingly accurate  warm-start guesses to mitigate the difficulty posed by ill-conditioning \textbf{(right).}}
\label{fig:visualization}
\end{figure*}
\begin{figure}[H]
\begin{minipage}{0.54\textwidth}
The MDOT loop is summarized with the following steps:
\vspace{3pt}
\begin{itemize}[leftmargin=0.5cm]
    \item \textbf{L3}: Choose stopping criterion $\varepsilon_{\mathrm{d}}$ for $\norm{\nabla g(\gamma, \vr, \vc)}_1$.
    \item \textbf{L4}: Obtain smoothed marginals $\tilde{\vr}, \tilde{\vc}$ to avoid zero or infinitesimal entries.
    \item \textbf{L5}: Rank-1 initial guess with positive  $\tilde{\vr}, \tilde{\vc}$.
    \item \textbf{L6}: Solve (\ref{eq:numerical-continuation}) given initial $\vu^\prime, \vv^\prime$, i.e., minimize $g(\gamma, \tilde{\vr}, \tilde{\vc})$ until ${\norm{\nabla g(\gamma, \tilde{\vr}, \tilde{\vc})}_1 \leq \varepsilon_{\mathrm{d}} / 2}$. By the triangle inequality, this implies ${\norm{\nabla g(\gamma, \vr, \vc)}_1 \leq \varepsilon_{\mathrm{d}}}$.
    \item \textbf{L7}: Exit if $\gamma$ reached user input $\gamma_{\mathrm{f}}$, otherwise continue.
    \item \textbf{L8}: Increase $\gamma$ for the next iteration.
    \item \textbf{L9}: Using previous $\vu^{(t-k)}, \vv^{(t-k)}$ for $k \geq 0$, extrapolate a warm-start guess for the new KL projection (or, EOT dual) problem of minimizing $g(\gamma^{(t+1)}, \vr, \vc)$.
\end{itemize}
\vspace{3pt}
In L13, approximation of $P^*(\gamma_{\mathrm{f}})$ is rounded via \citet{altschuler2017near} onto $\gU(\vr, \vc)$; see Alg. \ref{alg:rounding} in Appx. \ref{sec:proofs-and-additional-results}.
\end{minipage}
\hfill
\begin{minipage}{0.45\textwidth}
\vspace{-5pt}
\begin{algorithm}[H]
\caption{MDOT($C, \vr, \vc, \gamma_{\mathrm{i}}, \gamma_{\mathrm{f}}, p \geq 1, q>1$)}\label{alg:mdot}
\begin{algorithmic}[1]
\State $t \gets 1$, $\gamma \gets \min(\gamma_{\mathrm{i}}, \gamma_{\mathrm{f}})$
\While{\textrm{True}}
    \State $\varepsilon_{\mathrm{d}} \gets H_{\min}(\vr, \vc) / 
    \gamma^p$ 
    \State $(\tilde{\vr}, \tilde{\vc}) \gets (1- \frac{\varepsilon_{\mathrm{d}}}{4}) \cdot  (\vr, \vc) + \frac{\varepsilon_{\mathrm{d}}}{4n}  \cdot \1_{2n}$
    \State \algorithmicif\ $t == 1$ \algorithmicthen\ $\vu^\prime, \vv^\prime \gets \log \tilde{\vr}, \log \tilde{\vc}$
    \State $\vu^{(t)}, \vv^{(t)} ~{\gets}~ \textrm{Given initialization $\vu^\prime, \vv^\prime$} \linebreak
    \textrm{minimize } g(\gamma, \tilde{\vr}, \tilde{\vc})  \textrm{ until } \norm{\nabla g}_1 \leq {\varepsilon_\mathrm{d} / 2}$
    \State \algorithmicif\ $\gamma == \gamma_{\mathrm{f}}$ \algorithmicthen $~\mathbf{break}$
    \State $\Delta_\gamma \gets (q-1)\gamma, \quad \gamma \gets \min(\gamma + \Delta_\gamma, \gamma_{\mathrm{f}})$
    \State $\vu^\prime, \vv^\prime \gets \textrm{WarmStart}(\vu^{(t)}, \vv^{(t)}; \cdots)$
    \State $t \gets t + 1$
\EndWhile
\State $P \gets \exp\{\vu^{(t)} \1_n^\top + \1_n {\vv^{(t)}}^\top - \gamma_{\mathrm{f}} C\}$
\State Output $P \gets \textrm{Round}(P, \vr, \vc)$
\end{algorithmic}
\end{algorithm}
\end{minipage}
\end{figure}
Using (i) the guarantee that $\norm{\nabla g(\gamma_{\mathrm{f}}, \vr, \vc)}_1 \leq H_{\min}(\vr, \vc)/\gamma_{\mathrm{f}}$, (ii) the third statement of Prop. \ref{prop:equivalence}, and (iii) Lemma 7 of \citet{altschuler2017near}, we obtain a guarantee on the quality of the solution.
\vspace{-0cm}
\begin{restatable}[]{remark}{mdotremark}
\begin{mdframed}[  
  skipabove=5pt,
  skipbelow=0pt,
  innertopmargin=2pt,
  innerbottommargin=2pt]
\label{rmk:mdot-error}
If $\gamma_\mathrm{f} \geq 5 H_{\min}\big(\vr, \vc \big) / 2\varepsilon$, the output $P \in \gU(\vr, \vc)$ of Alg.~\ref{alg:mdot} satisfies $\langle P - P^*, C \rangle \leq \varepsilon + \widetilde{O}(\varepsilon^2)$.
\vspace{-0.25cm}
\end{mdframed}
\end{restatable}
The proof follows from a special case of Prop. \ref{prop:weed}, which is deferred to Sec. \ref{sec:stopping-criteria}. 
Next, we discuss key MDOT steps listed above, how they differ from existing work, and trade-offs of input parameters.

\vspace{-0.1\baselineskip}
\textbf{Stopping criterion (L3) and input $p$.} Several prior methods such as IPOT \citep{xie20ipot}, Alg. 3.5 of \citet{feydy2020} and MSK \citep{ballu2022mirror} that use MD ideas do not enforce any constraints on $\norm{\nabla g}_1$. Rather, they only take a few Sinkhorn steps after $\gamma$ is increased. \citet{feydy2020} takes increasingly bigger updates $\Delta_\gamma = (q-1)\gamma$ as we do in L8, such that when $\gamma$ is large, a few Sinkhorn steps are insufficient to prevent divergence from the optimal curve $\vu^*(\gamma), \vv^*(\gamma)$. This effectively caps their precision as we show in Sec. \ref{sec:experiments}. IPOT takes fixed steps (typically $\Delta_\gamma = 1$), which helps trace the curve $\vu^*(\gamma), \vv^*(\gamma)$ closely, but reduces the regularization weight $\gamma^{-1}$ more slowly; see Appx. \ref{sec:ipot} for further discussion and empirical data. On the other hand, MSK takes increasingly smaller $\Delta_\gamma^{(t)} = O(1/\sqrt{t})$, such that its convergence rate suffers (shown in Sec. \ref{sec:experiments}). All three rely on Sinkhorn updates, while MDOT can benefit from any (potentially better) convex optimization algorithms for minimizing $g$. 
The practical approach of \citet{yangtoh2022} also uses a fixed $\Delta_\gamma$ and Sinkhorn iteration. As the stopping criterion, they measure $D_{\mathrm{KL}}(\mathrm{Round}(P, \vr, \vc) | P)$ after each Sinkhorn step to satisfy an inexactness condition for the KL projection, which facilitates their theoretical analysis. This incurs some $O(n^2)$ overhead per inner loop iteration. In contrast, we only check the $L_1$ norm of the dual gradient as in L6 (which can be evaluated in $O(n)$ time during optimization), since by Proposition \ref{prop:equivalence} maintaining the structure $P(\vu, \vv; \gamma)$ throughout ensures that $P^*(\gamma_\mathrm{f})$ can be recovered theoretically, regardless of how inexact previous KL projections were.

\vspace{-0.1\baselineskip}
Another line of work runs (some version of) a single iteration of MDOT \citep{altschuler2017near, dvurechensky2018computational, lin19a}. In these works, $\varepsilon_{\mathrm{d}} \propto (\log n) / \gamma$, while we adapt it according to the entropy of $\vr, \vc$ using the third statement of Prop. \ref{prop:equivalence}. In Appx. \ref{sec:entropy-aware-stopping-experiments}, the use of $H_{\min}(\vr, \vc)$ instead of  $\log n$ is shown to yield speedups of order $\log n / H_{
\min}(\vr, \vc)$. Our use of $p \geq 1$ is also novel and controls how closely the curve $\vu^*(\gamma), \vv^*(\gamma)$ is traced. We study theoretical benefits and practical trade-offs in Sec. \ref{sec:stopping-criteria}. 

\vspace{-0.2\baselineskip}
\textbf{Marginal smoothing (L4).} Our ``smoothing'' of the target marginals $\vr$ and $\vc$ (mixing in the uniform distribution) follows prior work by \citet{dvurechensky2018computational} and \citet{lin19a}. 
This step helps provide convergence guarantees for certain choices of minimization algorithms for the dual $g$ in L6. Since the mixing weight used  is proportional to $\gamma^{-p}$, it gradually decreases with the temperature in our case. This scheme smoothes marginals more aggressively in earlier iterations of MDOT when $\gamma$ is small.
In Appx. \ref{sec:variable-smoothing-experiments}, we empirically show the performance benefits of this variable smoothing.

\vspace{-0.2\baselineskip}
\textbf{EOT dual minimization (L6).} The algorithm for minimizing $g(\gamma, \vr, \vc)$ is left unspecified here for generality; for example, Sinkhorn iteration can be used. In Sec. \ref{sec:cg}, we introduce a new algorithm (PNCG).

\vspace{-0.2\baselineskip}
\paragraph{Temperature decay and input $q$ (L8).} L8 effectively sets $\gamma^{(t+1)} = q \gamma^{(t)}$, which  follows prior work \citep{schmitzer2019scaling, feydy2020}. With larger MD steps (higher $q$), the extrapolation of duals for warm-starting is over a longer range, which degrades quality and poses a trade-off, investigated in Sec. \ref{sec:warm-starting}.

\vspace{-0.2\baselineskip}
\textbf{Warm-starting duals (L9).} Our explicit warm-starting approach in Sec. \ref{sec:warm-starting} is novel to our knowledge and was left unspecified in Alg. \ref{alg:mdot} for generality. We show that $\varepsilon$-scaling as implemented by \citet{schmitzer2019scaling} and \citet{feydy2020} amount to an \textit{implicit} warm-starting, which is shown to be less effective than ours.

\begin{figure*}
\vspace{-25pt}
\centering
\begin{minipage}{0.35\textwidth}
  \centering    \includegraphics[width=1.\linewidth]{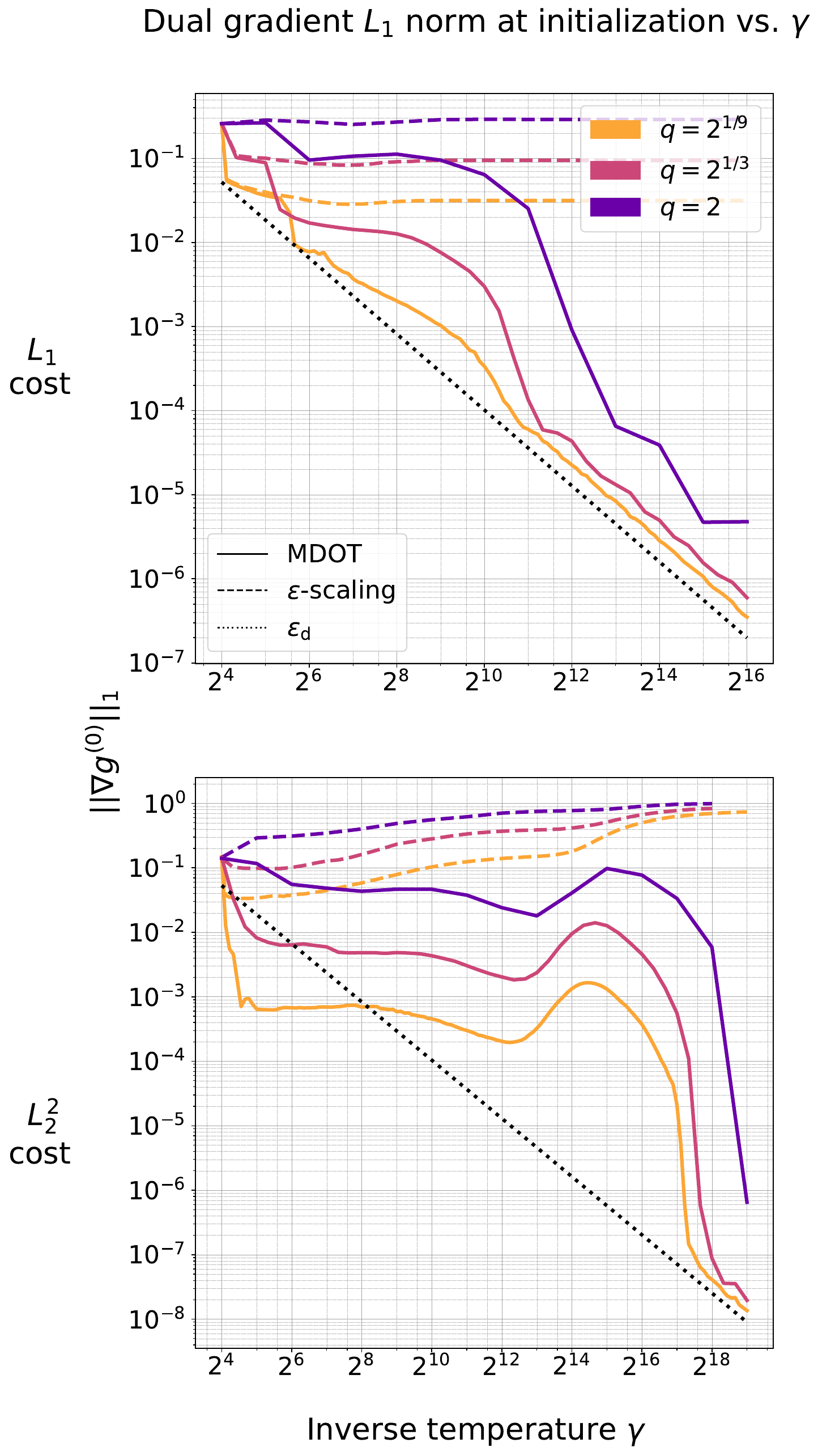}
\end{minipage}
\hspace{5pt} %
\begin{minipage}{0.62\textwidth}
  \centering   \includegraphics[width=1.\linewidth]{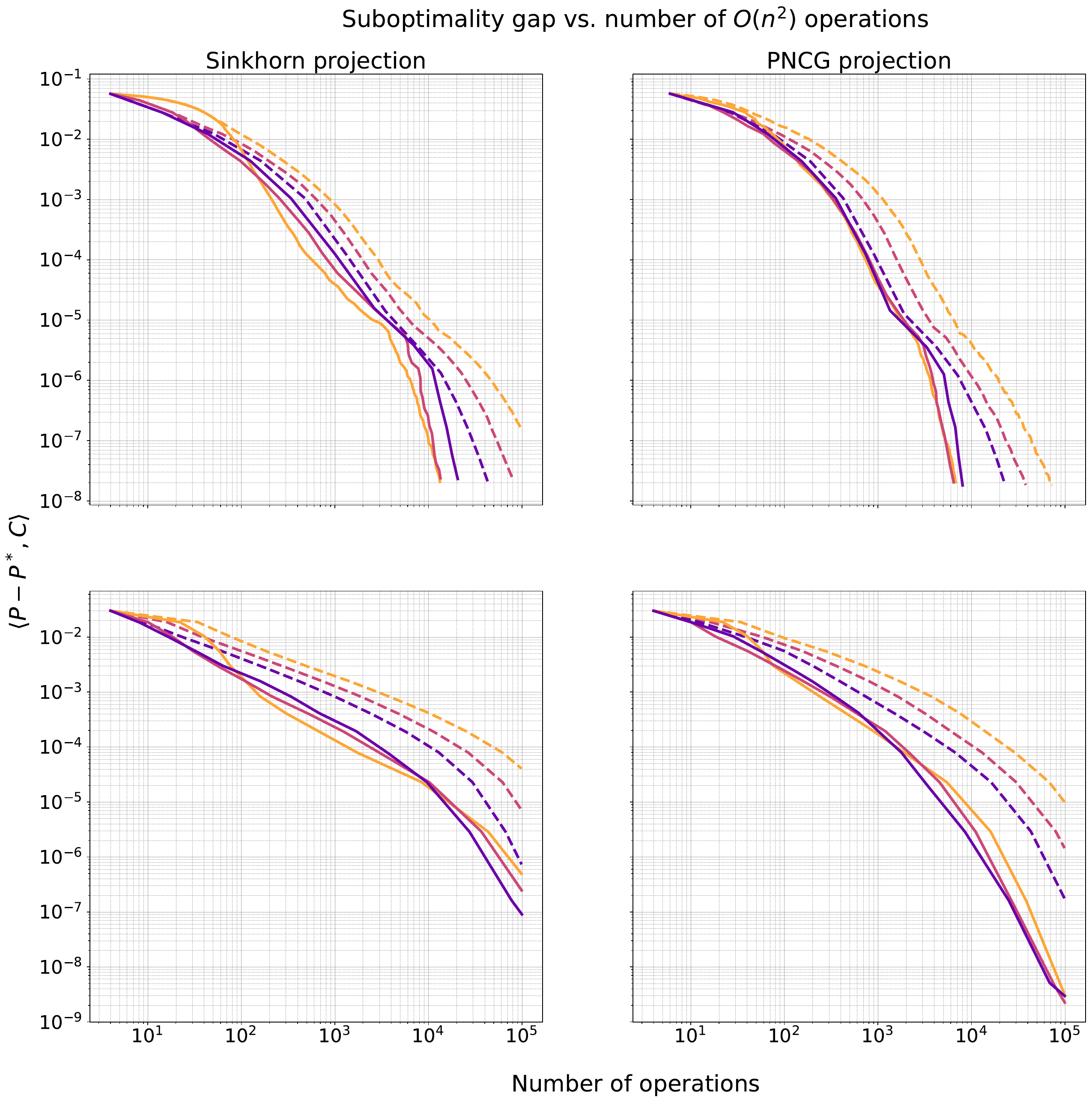}
\end{minipage}
\caption{Comparison of the MDOT warm-start proposed in Sec. \ref{sec:warm-starting} to $\varepsilon$-scaling. Curves show the median over 36 upsampled-MNIST problems  ($n{=}4096$) under $L_1$ (\textbf{top}) and $L_2^2$ (\textbf{bottom}) distance costs (see Sec. \ref{sec:experiments} for details). In all experiments, $p=1.5$ and $\gamma_{\mathrm{i}}=2^4$. For the $L_1$ cost, $\gamma_{\mathrm{f}}=2^{16}$ and for the $L_2^2$ cost, $\gamma_{\mathrm{f}}=2^{19}$.}
\label{fig:warm-start-and-convergence}
\end{figure*}
\subsubsection{Warm-starting KL Projections}
\label{sec:warm-starting}
Assume that at each prior temperature value, we obtained the dual-optimal $\vu^*(\gamma^{(t)}), \vv^*(\gamma^{(t)})$ without error. How should $\vu, \vv$ be initialized for $\gamma^{(t+1)}$? A simple, memory-efficient approach is to consider a Taylor expansion around recent  $\gamma$ to predict $\vu^*(\gamma^{(t+1)}), \vv^*(\gamma^{(t+1)})$. Letting $\vz = (\vu, \vv)$ to reduce clutter:
\begin{align}
    \vz^*(\gamma^{(t+1)}) = \vz^*(\gamma^{(t)}) + \frac{\partial \vz^*}{\partial \gamma}(\gamma^{(t)}) (\gamma^{(t+1)} - \gamma^{(t)}) + \ldots 
\label{eq:taylor}
\end{align}
As we cannot compute $\partial \vz^*/\partial\gamma$ analytically, we use a numerical approximation (backward finite differencing) $\partial{\vz^*}(\gamma^{(t)})/\partial\gamma \approx \big(\vz^*(\gamma^{(t)}) - \vz^*(\gamma^{(t-1)})\big)\big/\Delta_{\gamma}^{(t-1)}$ and keep only the first two terms in (\ref{eq:taylor}):
\vspace{-0.3em}
\begin{align}
    \vz^*(\gamma^{(t+1)}) \approx \vz^*(\gamma^{(t)}) +  \frac{\Delta_{\gamma}^{(t)}}{\Delta_{\gamma}^{(t-1)}} \Big(\vz^*(\gamma^{(t)}) - \vz^*(\gamma^{(t-1)})\Big).
\label{eq:approx-znext}
\end{align}
In contrast, the $\varepsilon$-scaling approach of \citet{schmitzer2019scaling} and \citet{feydy2020} maintains reparamatrized dual variables ${\widetilde{\vz} \coloneqq \vz / \gamma}$ as the temperature decays. Rewriting (\ref{eq:lagrangian-closed-form}) in terms of $\widetilde{\vz}$ reveals that $\varepsilon$-scaling
amounts to predicting ${\vz^*(\gamma^{(t+1)}) \approx (\gamma^{(t+1)} / \gamma^{(t)}) \vz^*(\gamma^{(t)})}$, i.e., simply scaling the dual variables 
instead of modelling the trajectory of $\vz^*$ with a Taylor approximation, which we argue is a better approach.

In Fig. \ref{fig:warm-start-and-convergence}, we present an empirical study with varying step sizes $\Delta_\gamma = (q-1)\gamma$ by ablating $q$, where the advantage of (\ref{eq:approx-znext}) over the $\varepsilon$-scaling warm-start of \citet{schmitzer2019scaling} is demonstrated. On the left, MDOT warm-start initializes each dual problem closer to the solution than the $\varepsilon$-scaling approach. The quality of initial guesses increase markedly with decreasing temperature (left); at high temperatures dual problems are initialized very close to the solution with the gradient norm just a small multiple of the target $\varepsilon_{\mathrm{d}}$. In contrast, the $\varepsilon$-scaling warm-start stays relatively fixed. For the same decay rate $q$, this translates to about $10\times$ gains in convergence speed or precision (mid-right). The performance gap widens for slow temperature decay (lower $q$), as MDOT benefits from reduced Taylor approximation errors given smaller step sizes $\Delta_\gamma$.

\subsubsection{KL Projection Stopping Criteria}
\label{sec:stopping-criteria}
Our assignment $\varepsilon_{\mathrm{d}} \propto \gamma^{-p}$ for some $p \geq 1$ in L4 of MDOT departs from the conventional wisdom of choosing $\varepsilon_{\mathrm{d}} \propto \gamma^{-1}$  \citep{altschuler2017near, dvurechensky2018computational, lin19a}. Here, we provide justification for this departure. Consider first the fixed-temperature problem (\ref{opt:sinkhorn-dual}) for simplicity. Building on the results of \citet{cominetti1994asymptotic}, \citet{weed2018explicit} showed in his Prop. 4 and Thm. 5 that there is both a uniform bound $\langle P^*(\gamma) - P^*, C \rangle \leq \log n / \gamma$ (slow rate), and a fast asymptotic rate $O(\exp(-\gamma K))$ which takes over for large enough $\gamma$, where  the constant $K > 0$ is problem-dependent. Taking these as a starting point, the following remark generalizes the third statement of Prop. \ref{prop:equivalence}.
\begin{restatable}[]{remark}{gammap}
\label{lem:gammap}
For any constant $p \in [1, \infty)$ and OT problem given by $(\vr, \vc, C)$, there exists a $\gamma_0 > 0$ such that for any $\gamma \geq \gamma_0$, we have ${\langle P^*(\gamma) - P^*, C \rangle \leq H_{\min}(\vr, \vc) / \gamma^{p}}$.
\end{restatable}
That is, below some temperature $\gamma_0^{-1}$, a stronger bound $H_{\min}(\vr, \vc) \gamma^{-p}$ for some $p > 1$ replaces the uniform bound $H_{\min}(\vr, \vc) \gamma^{-1}$. Thus, the SK algorithm (see Alg. \ref{alg:partial-sinkhorn} in the Appx.) can be tuned (via the $p$ parameter in Alg. \ref{alg:mdot}) to enjoy a rate substantially better than $O(n^2 \log n / \varepsilon^{2})$ given by \citet{dvurechensky2018computational}.
\begin{restatable}[]{proposition}{sinkhornp}
\label{prop:weed}
Sinkhorn iteration, as instantiated by calling Alg. \ref{alg:mdot} (L6) with $p \in [1, \infty)$ and a sufficiently large ${\gamma_{\mathrm{i}} = \gamma_{\mathrm{f}} = \sqrt[\leftroot{-1}\uproot{3}\scriptstyle p]{5H_{\min}\big(\vr, \vc \big)/2\varepsilon}}$, returns a plan $P \in \gU(\vr, \vc)$ satisfying $\langle P - P^*, C \rangle \leq \varepsilon + \widetilde{O}(\varepsilon^2)$ in at most 
\vspace{-6pt}
\begin{align}
    O\left(n^2 H_{\min}\big(\vr, \vc \big)^{1/p} \Big/ \varepsilon^{\frac{p+1}{p}}\right) \text{~arithmetic~operations}.
\end{align}
\end{restatable}
\begin{figure*}
\vspace{-25pt}
\centering
\begin{minipage}{.9\textwidth}
  \centering \includegraphics[width=.9\linewidth]{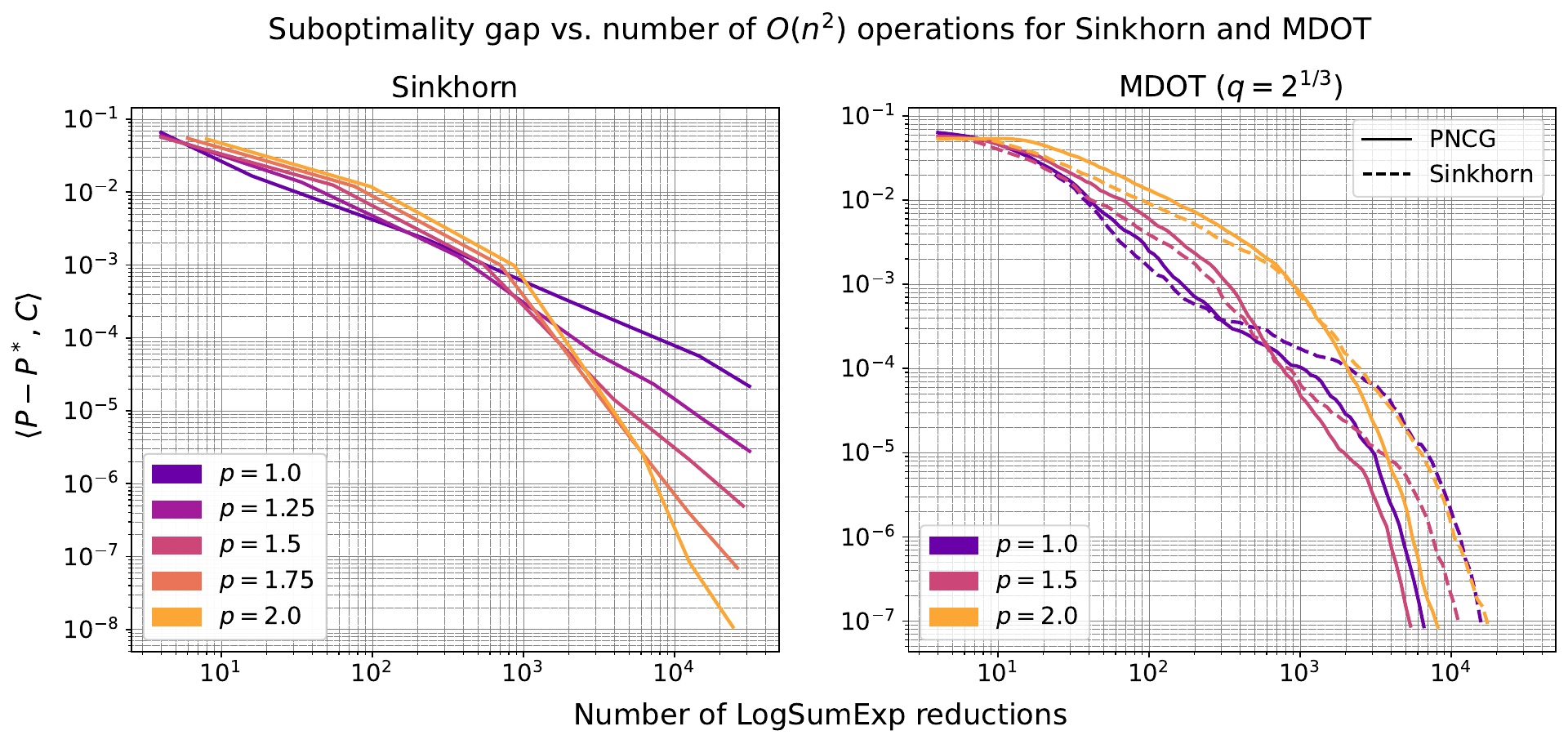}
\end{minipage}
\caption{
Ablation of stopping criterion parameter $p$ (Sec. \ref{sec:stopping-criteria}) for the SK algorithm (\textbf{left}) and MDOT with Sinkhorn and PNCG as KL projectors (\textbf{right}). The SK algorithm (\textbf{left}) is called by running MDOT (Alg. \ref{alg:mdot}) with $\gamma_{\mathrm{i}} = \gamma_{\mathrm{f}}$, where higher precision is achieved by increasing $\gamma_{\mathrm{f}}$. Results show the median over 36 random problems from the upsampled MNIST dataset ($n=4096$) with the $L_1$ cost.
}
\label{fig:ablate-p-sk}
\end{figure*}
This result is consistent with the empirical findings of \citet{jambulapati2019}, who noted ``The [tuned] Sinkhorn algorithm converged at
rates much faster than the predicted $\varepsilon^{-2}$
rate on all experiments, outperforming all other methods, which we believe merits further investigation.'' We believe Prop. \ref{prop:weed} sheds some light on this phenomenon, and further present an ablation of $p$ in Fig. \ref{fig:ablate-p-sk} for SK and MDOT algorithms. 
 
For the SK algorithm, Fig. \ref{fig:ablate-p-sk} (left) verifies the insight derived from Prop. \ref{prop:weed}. The choice $p=1$ is better at low precision, but the trend gradually shifts in favor of higher $p$ with (sufficiently) higher $\gamma_{\mathrm{f}}$. That is, for sufficiently low temperature $\gamma^{-1}$, it is advantageous to reduce the gradient norm error tolerance, from $H_{\min}/\gamma$ to $H_{\min}/\gamma^p$ for $p > 1$. 
In contrast, MDOT is more robust to the $p$ parameter in the high precision regime (right).  Moreover, the use of PNCG projections (Alg. \ref{alg:ncg-proj}) for KL projections in MDOT (L6 of Alg. \ref{alg:mdot}) provides a speedup of $2-3\times$ over Sinkhorn projections. PNCG is introduced and discussed next.

\subsection{Preconditioned Non-linear Conjugate Gradients for KL Projections}
\label{sec:cg}
SK converges more slowly at low temperatures \citep{kosowsky1994invisible}. For a faster alternative, we develop Alg. \ref{alg:ncg-proj} based on non-linear CG (NCG) methods \citep{fletcher1964function, Nocedal}, which we now briefly review. Given an objective $g$, NCG takes descent directions ${\vp^{(0)} = -\nabla g(\vz^{(0)})}$ and ${\vp^{(k)} \leftarrow -\nabla g^{(k)} + \beta^{(k)} \vp^{(k-1)}}$, and iterates ${\vz^{(k+1)} \leftarrow \vz^{(k)} + \alpha^{(k)} \vp^{(k)}}$, where $\alpha^{(k)}$ is the step size. Optimal $\alpha^{(k)}$ has a closed-form solution for quadratics, but for general non-linear objectives, line search is necessary to find suitable step sizes $\alpha^{(k)}$. Many formulas for computing $\beta^{(k)}$ exist; for quadratic objectives, they are equivalent and guarantee convergence in at most $n^\prime$ iterations, where ${n^\prime \leq n}$ is the number of distinct eigenvalues of $\nabla^2 g$. Further, the objective decreases faster if eigenvalues are tightly clustered \citep{stiefel1958kernel, kaniel1966estimates, Nocedal}. For example, the Hestenes-Stiefel formula sets \citep{Nocedal}:
\begin{align}
\label{eq:beta-pr}
    \beta^{(k)} = \frac{\langle \nabla g^{(k)} - \nabla g^{(k-1)}, \nabla g^{(k)} \rangle}{\langle \nabla g^{(k)} - \nabla g^{(k-1)}, \vp^{(k-1)} \rangle}.
\end{align}
A practical way to further improve the convergence rate of CG methods is via \textit{preconditioning}. By making a change of variables $\vz = M^{-1/2} \hat{\vz}$ given some symmetric positive-definite matrix $M$, one reduces the condition number of the problem or tightens the clustering of eigenvalues
for improved convergence (ideally, $M^{-1} \approx \nabla^2 g^{-1}$). We refer the reader to \citet{hager2006survey} for further details on CG methods. 

For the EOT problem, recall the 1st and 2nd order derivatives of the dual objective $g$ in (\ref{opt:sinkhorn-dual}) at $\vz = (\vu, \vv)$:
\vspace{-0.25em}
\begin{align}
\label{eq:hessian}
    \nabla g &= \big(
\vr(P) {-} \vr, ~~
\vc(P) {-} \vc \big), ~~~
\nabla^2 g = \begin{pmatrix}
        \mathbf{D}(\vr(P)) & P \\
        P^\top & \mathbf{D}(\vc(P))
    \end{pmatrix}.
\end{align}
A typical choice of a preconditioner $M$, known to be effective for diagonally-dominant matrices \citep{golub2013matrix}, is the diagonal approximation of the Hessian, which yields the following descent direction:
\begin{align}
\label{eq:s-tilde}
\begin{split}
    \tilde{\vs} &= -\mathbf{D}\big(\mathbf{diag}(\nabla^2 g)\big)^{-1} \nabla g 
    = \left(\frac{\vr}{\vr(P)}, \frac{\vc}{\vc(P)} \right) - \1_{2n}.
\end{split}
\end{align}
\begin{wrapfigure}{R}{0.47\textwidth}
\vspace{-26pt}
\begin{minipage}{0.47\textwidth}
\begin{algorithm}[H]
\caption{PNCG$(\vz, \gamma, C, \vr, \vc, \varepsilon_{\mathrm{d}})$}
\label{alg:ncg-proj}
\begin{algorithmic}[1]
\State $(\vu, \vv) \gets \vz, \vp \gets \vzero_{2n}, \beta \gets 0$
\State $\log \vr(P) \gets \vu + \mathrm{LSE}_r(\1_n \vv^\top - \gamma C)$
\State $\log \vc(P) \gets \vv + \mathrm{LSE}_c( \vu \1_n^\top - \gamma C)$
\State $\nabla g \leftarrow \big(\vr(P) - \vr, \vc(P) - \vc\big)$
\While{$\norm{\nabla g}_1 > \varepsilon_{\mathrm{d}}$} 
    \State $\vs \gets \big(\log \vr - \log \vr(P), \log \vc - \log \vc(P)\big)$
    \State $\vp \leftarrow \vs + \beta \vp$ \Comment{See (\ref{eq:beta_PPR_OT})}
    \If{$\langle \vp, \nabla g \rangle \geq 0$}
        \State $\vp \gets \vs$ \Comment{Reset CG if not a descent dir.}
    \EndIf
    \State $\alpha, \log \vr(P), \log \vc(P) \leftarrow \textrm{LineSearch}(\vp, \vu, \vv)$ 
    \State $(\vu, \vv) \leftarrow (\vu, \vv) + \alpha \vp$ 
    \State $\nabla g \leftarrow \big(\vr(P) - \vr, \vc(P) - \vc\big)$
\EndWhile
\State Output $\vz \gets (\vu, \vv)$
\end{algorithmic}
\end{algorithm}
\end{minipage}
\vspace{-10pt}
\end{wrapfigure}
Observe, however, that if at any point in the optimization $\vr(P)$ or $\vc(P)$ has infinitesimal entries, numerical instabilities may occur when evaluating $\tilde{\vs}$. We propose using the \textit{Sinkhorn direction}, $\vs$, in place of $\tilde{\vs}$:
\vspace{-0.25em}
\begin{align}
\label{eq:sinkhorn-direction}
    \vs = \left(
\log \frac{\vr}{\vr(P)}, ~~
\log \frac{\vc}{\vc(P)}
\right).
\end{align}
This has the benefit that it can be evaluated via numerically stable LogSumExp reductions, e.g., see lines 2-3 of Alg. \ref{alg:ncg-proj}. The Sinkhorn direction can be understood as the result of an alternative diagonal preconditioner, namely $M = \mathbf{D}\left(-\nabla g / \vs \right)$, since $\vs = -M^{-1} \nabla g$.  Furthermore, for any sub-optimal $(\vu, \vv)$, we have
\begin{align*}
\begin{split}
    -\big\langle \vs, \nabla g \big\rangle &= D_{\mathrm{KL}}\big(\vr(P) | \vr\big) + D_{\mathrm{KL}}\big(\vr | \vr(P)\big) + D_{\mathrm{KL}}\big(\vc(P) | \vc\big) + D_{\mathrm{KL}}\big(\vc | \vc(P)\big) > 0,
\end{split}
\end{align*}
and therefore $\vs$ is also a descent direction.
Empirically we find that this Sinkhorn preconditioner results in improved numerical stability.  
Finally, note that near the solution (for $\vr \approx \vr(P)$ and $\vc \approx \vc(P)$) we have
\vspace{-0.25em}
\begin{align}
    \vs = \left(\log\frac{\vr}{\vr(P)}, \log\frac{\vc}{\vc(P)} \right) \approx \left(\frac{\vr}{\vr(P)}, \frac{\vc}{\vc(P)} \right) - \1_{2n} = \tilde{\vs},
\end{align}
where we have used $\log x \approx x-1$ for $x \approx 1$.  Therefore, near the solution, the Sinkhorn direction $\vs$ approaches the direction $\tilde{\vs}$ obtained using the common preconditioner from the diagonal of the Hessian.

Plugging the preconditioner ${M = \mathbf{D}\left(-\nabla g / \vs \right)}$ into the preconditioned Hestenes-Stiefel formula \citep{al1996order}, we take $\beta^{(k)}$ in L7 of Alg. \ref{alg:ncg-proj}: 
\vspace{-0.25em}
\begin{align}
\label{eq:beta_PPR_OT}
    \beta^{(k)} = \frac{\langle \nabla g^{(k)} - \nabla g^{(k-1)},  -\vs^{(k)} \rangle}{\langle \nabla g^{(k)} - \nabla g^{(k-1)}, \vp^{(k-1)}
 \rangle },
\end{align}
where $\vs^{(k)}$ is the Sinkhorn direction as in (\ref{eq:sinkhorn-direction}), $\beta^{(1)} = 0$, $\vp^{(0)} = \vzero_{2n}$. Observe that $-\vs^{(k)}$ above simply replaces a $\nabla g^{(k)}$ term in the numerator of (\ref{eq:beta-pr}).

We defer details of the line search in L11 of Alg. \ref{alg:ncg-proj} to Appx. \ref{sec:efficient-LS}, but note that by design, the proposed line search only carries out the same form of LogSumExp reductions as the log-domain stabilized SK algorithm (Alg. \ref{alg:partial-sinkhorn} in the Appx \ref{sec:proofs-and-additional-results}), so that its output is reused when evaluating the Sinkhorn direction $\vs$ in (\ref{eq:sinkhorn-direction}) at the next iteration (see L11 of Alg. \ref{alg:ncg-proj}). This also allows for a fair comparison of the two algorithms' performance.
\begin{figure*}
\vspace{-30pt}
\centering
\begin{minipage}{0.9\textwidth}
  \centering \includegraphics[width=0.9\linewidth]{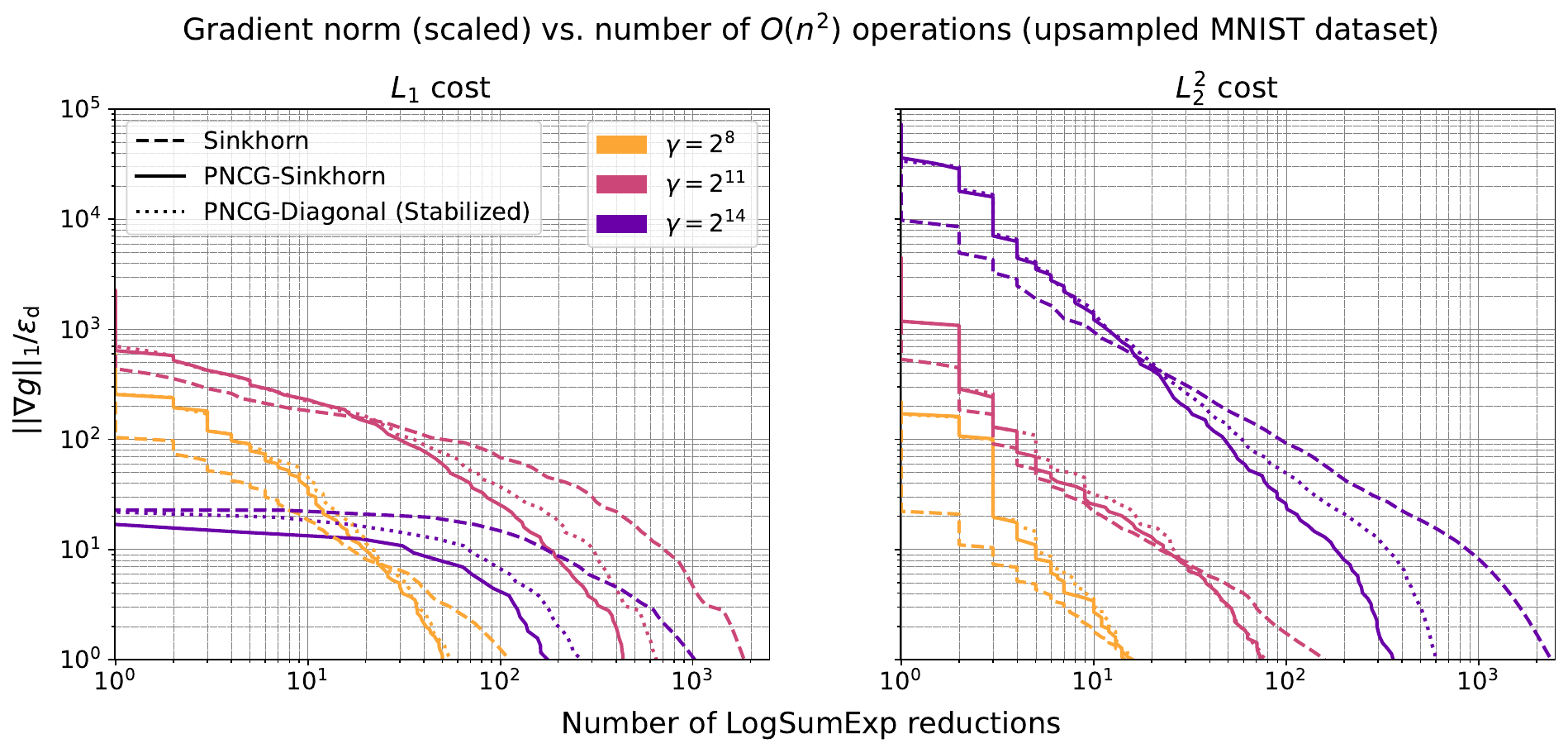}
\end{minipage}
\vspace{-10pt}
\caption{Comparison of KL projection algorithms (used in L7 of Alg. \ref{alg:mdot}) over the upsampled MNIST dataset with $L_1$ (\textbf{left}) and $L_2^2$  (\textbf{right}) costs. Algorithms evaluated are the SK algorithm, the newly proposed PNCG given in Alg. \ref{alg:ncg-proj}, and a variant (see text). In all 36 problems, ${n=4096, p=1.5}$ and ${q=2}$. Each curve shows the convergence behavior, at one specific temperature $1/\gamma$, in terms of the median number of LogSumExp reductions  (x-axis) until gradient norm (y-axis) reaches below target dual gradient norm $\varepsilon_{\mathrm{d}}(\gamma)$. 
}
\label{fig:pncg}
\end{figure*}
Indeed, Fig. \ref{fig:pncg} plots the runtime of the two algorithms in terms of LogSumExp evaluations; PNCG outshines SK empirically, especially at lower temperatures (higher $\gamma$). Further, to see whether the added numerical stability of the newly proposed Sinkhorn preconditioner comes at a performance trade-off, we implement an alternative stabilization scheme for the diagonal Hessian preconditioner. In particular, for this alternative we use $\vs$ only if the vector $(\vr/\vr(P), \vc/\vc(P))$ has any entries outside the range $[0.01, 100]$ and otherwise assign $\tilde{\vs}$ given by (\ref{eq:s-tilde}) in L6 of Alg. \ref{alg:ncg-proj}. The results shown in Fig. \ref{fig:pncg} suggest that, on the contrary, the Sinkhorn preconditioner also provides a performance benefit over the diagonal Hessian in addition to numerical stability.
\section{EXPERIMENTS}
\label{sec:experiments}
In this section, we first detail the MNIST experimental setup in Figs. \ref{fig:warm-start-and-convergence}-\ref{fig:pncg}. Then, we describe an additional color transfer task we use for benchmarking. Next, performance evaluations in terms of precision vs. wall-clock time are discussed given the results over 4 sets of problems shown in Fig. \ref{fig:wallclock-time}. In Appx. \ref{sec:additional-experiments}, we add 20 more problem sets from the DOTmark benchmark of \citet{schrieber2017dotmark} showing similar results both in terms of wall-clock time and operation counts. Lastly, the dependence on problem size $n$ is investigated in Fig. \ref{fig:problem-size-dependence}. All experiments were performed on an NVIDIA GeForce RTX 2080 Ti GPU with 64-bit precision. Exact costs $\langle P^*, C \rangle$ are evaluated using the implementation of the CPU-based algorithm of \citet{bonneel2011displacement} from the Python Optimal Transport (POT) library \citep{flamary2021pot}, which was run on an Intel Xeon Silver 4110 (2.10GHz) CPU.
\subsection{Experimental Setup}
\label{sec:experimental setup}
\paragraph{Upsampled MNIST.} In line with prior work \citep{cuturi2013sinkhorn, altschuler2017near, lin2022efficiency, tang2024accelerating}, we first consider the MNIST dataset, where each pixel represents an event and each image a probability distribution. Unlike prior work, we form higher dimensional problems by upsampling the original $28 \times 28$ images to be $64 \times 64$ (with bilinear interpolation) so that ${n=4096}$. Cost matrices $C$ are constructed by measuring the $L_1$ or squared $L_2$ distances between pixel locations on a 2D grid, and dividing all entries by the maximum distance value so that all entries of $C$ lie in $[0, 1]$. The probability of each pixel is proportional to its intensity value; marginals $\vr, \vc$ are obtained by flattening the pixel intensity matrices and subsequent $L_1$ normalization. To select $m$ random problems, we sample $2m$ images from the dataset without replacement, and compute the OT distances between the first and second halves of the samples. Our selection of $n=4096$ is driven by the objective of conducting a large number of tests per configuration to ensure statistically significant results, rather than by any inherent limitations of the algorithm. In fact, our MDOT code supports the use of on-the-fly CUDA kernels to evaluate entries of the cost matrix on the go using the PyKeOps package   \citep{charlier2021kernel}. In this case, MDOT leaves an $O(n)$ memory footprint (with both Sinkhorn and PNCG projections) rather than $O(n^2)$; it has been verified to  scale to much larger problems ($n \approx 100,000$).

\paragraph{Color Transfer.} For the color transfer problem, each image is viewed as a point cloud in RGB space (pixel locations carry no importance). Cost matrices $C$ are constructed by measuring the $L_1$ or $L_2^2$ distances between pixels in RGB space and dividing all entries by the maximum distance. Marginals $\vr, \vc$ are taken to be uniform over $\Delta_n$. With the help of GPT-4, we prompt DALL-E 2 to generate 20 vibrant and colorful images with intricate details or patterns. To match the dimensionality of the upsampled MNIST problem set, we downsample the original $1024 \times 1024$ images to $64 \times 64$ so that $n=4096$. Once again, cost matrix entries are normalized to lie in $[0, 1]$.

\begin{figure*}
\vspace{-25pt}
\centering
\begin{minipage}{0.9\textwidth}
  \centering   \includegraphics[width=0.9\linewidth]{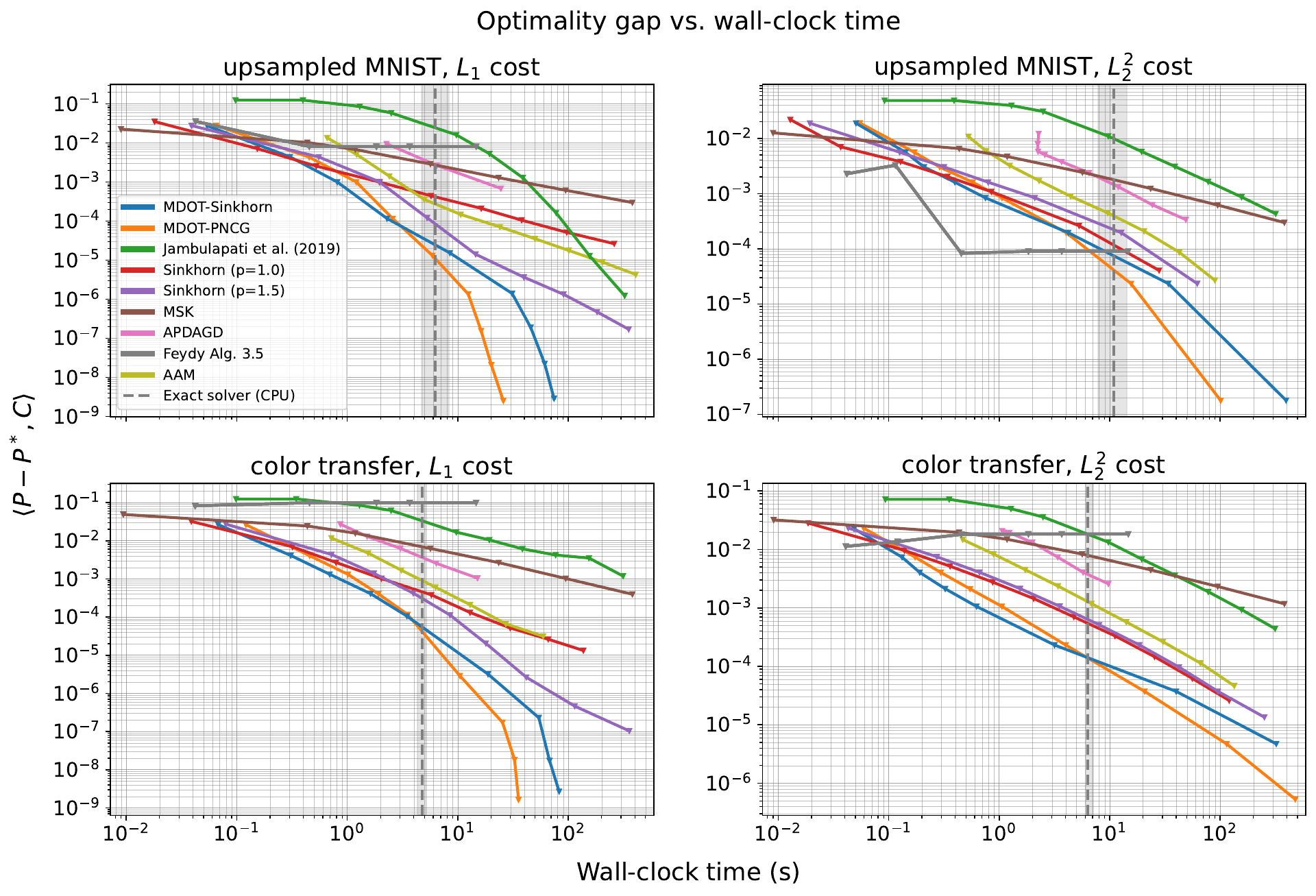}
\end{minipage}
\caption{Wall-clock time vs. error
benchmarking over the upsampled MNIST (top) and color transfer (bottom) problems using $L_1$ (left) and $L_2^2$ (right) distances as cost functions. Each marker shows the median time to converge (over 18 random problems) for each algorithm at a given hyperparameter setting, which controls the precision level, and the error $\langle P - P^*, C \rangle$ after rounding the output of the algorithm onto $\gU(\vr, \vc)$ -- with the exception of Alg. 3.5 of \citet{feydy2020}; see text. Vertical dashed lines show the median time taken by the CPU-based exact solver with $80\%$ confidence intervals.}
\vspace{-10pt}
\label{fig:wallclock-time}
\end{figure*}
\subsection{Wall-clock Time Comparisons With Prior Work}
\label{sec:wall-clock-detailed-discssuion}
In Fig. \ref{fig:wallclock-time}, we present wall-clock time benchmarking of MDOT (with both Sinkhorn and PNCG projections) against existing GPU-parallel algorithms on the upsampled MNIST and color transfer problems. All benchmark methods were implemented in PyTorch and run on the GPU. For MDOT, we use $q{=}2^{1/3}, p=1.5$ and $\gamma_{\mathrm{i}} {=} 2^4$ in all experiments. For the closely related Mirror Sinkhorn (MSK) algorithm of \citet{ballu2022mirror} the variable step size schedule prescribed by their Thm. 3.3 is used in our implementation. For Alg. 3.5 of \citet{feydy2020}, we decay temperature at a rate $q=0.7^{-1}$, which interpolates their \textit{fast} ($q{=}0.5^{-1}$) and \textit{safe} ($q{=}0.9^{-1}$) settings. For AAM \citep{guminov21aam}, Mirror Prox Sherman Optimized \citep{jambulapati2019} and APDAGD \citep{dvurechensky2018computational}, each implementation closely follows an open-source NumPy implementation. Our PyTorch implementation was verified to produce identical results to the publicly available NumPy code. We additionally attempted comparison with APDAMD \citep{lin19a} and PDASMD \citep{luo23pdasmd}, but observed extremely long convergence times for $n=4096$ and omitted the results. For further details on the implementation of benchmark methods, see Appx. \ref{sec:experiment-details-appx}.

While MDOT optimizes (\ref{opt:sinkhorn-dual}) to satisfy a convergence criterion following each temperature decrease, Alg. 3.5 of \citet{feydy2020} performs \textit{a single} (symmetrized) Sinkhorn update instead, i.e., it does not minimize the sequence of dual objectives sufficiently despite taking increasingly large gradient steps in the dual space (cf. \ref{eq:gradient-step}-\ref{eq:projection-step}). This causes an accumulation of projection errors and results in the algorithm hitting a precision wall. Their \textit{debiasing} option for estimating the OT distance via \textit{Sinkhorn divergences} (introduced by \citet{ramdas2017wasserstein}) fares slightly better and is used here to comprise a stronger baseline, albeit this approach does not find a member of $\gU(\vr, \vc)$, which may be a strict requirement in some applications. 
MSK also runs a single row/column scaling update after a temperature decrease, but takes increasingly smaller steps and maintains a running average of transport plans to ensure convergence. It performs well at low precision, but shrinking step sizes slow it down, so that it exhibits $O(n^2 \varepsilon^{-2})$ convergence behavior. Sinkhorn iteration (log-domain stabilized, see Alg. \ref{alg:partial-sinkhorn} in the Appx. \ref{sec:proofs-and-additional-results}) benefits substantially from setting $p=1.5$ rather than $p=1$ at sufficiently low temperatures for $L_1$ costs (see also Sec. \ref{sec:stopping-criteria}). APDAGD underperforms SK with ${p=1}$ and AAM performs similarly to it. Mirror Prox Sherman Optimized of \citet{jambulapati2019} overtakes SK (${p=1.0}$) in one case only (top-left) in the high precision range. Meanwhile, MDOT-Sinkhorn enjoys faster convergence than the more competitive SK (${p=1.5}$) owing to warm-started temperature annealing, especially in the high precision range (near the solution). MDOT-PNCG is the quickest to converge in all cases. The performance gap with its close second, MDOT-Sinkhorn, grows with higher precision.

\subsection{Empirical Dependence on Problem Size of MDOT-PNCG}
Our last set of experiments investigates the practical dependence of MDOT-PNCG on the problem size $n$. Over the same 4 problem sets as Fig. \ref{fig:wallclock-time}, we change $n$ from $36$ to $16,384$ by up- or down-sampling images. The $n$ values are selected to be approximately equally spaced on a logarithmic scale. In Fig. \ref{fig:problem-size-dependence}, we plot the behavior of MDOT-PNCG for a range of final temperature values ${\gamma_\mathrm{f} \in \{2^{6}, 2^{9}, 2^{12}, 2^{15}\}}$. At medium precision (green and blue), we find that the algorithm behaves no worse than $O(n^2)$ in practice as implied by the flatness of the curves. As seen visually, with higher precision (roughly 5-decimals) with ${\gamma_\mathrm{f} \in \{2^{12}, 2^{15}\}}$, the proposed GPU-parallel algorithm behaves roughly as $O(n^{5/2})$ at worst and even better for some of the problems in practice (see the reference line in Fig. \ref{fig:problem-size-dependence}). These should be compared to the $\widetilde{O}(n^{5/2})$ theoretical and $\widetilde{O}(n^3)$ practical complexity of CPU-based exact solvers \citep{pele2009fast, lee2014path}. Indeed, Figure \ref{fig:wallclock-time} suggests that the CPU-based solver offers a strictly better precision-time trade-off beyond an optimality gap of around $10^{-4}{-}10^{-5}$ for this value of $n=4096$ and this particular CPU-GPU setup; however, in Table \ref{tab:mnist_transport_results} of the Appendix, we show that the trade-off skews in favor of MDOT-PNCG when $n$ is increased.
\begin{figure*}
\vspace{-25pt}
\centering
\begin{minipage}{0.9\textwidth}
  \centering   \includegraphics[width=.9\linewidth]{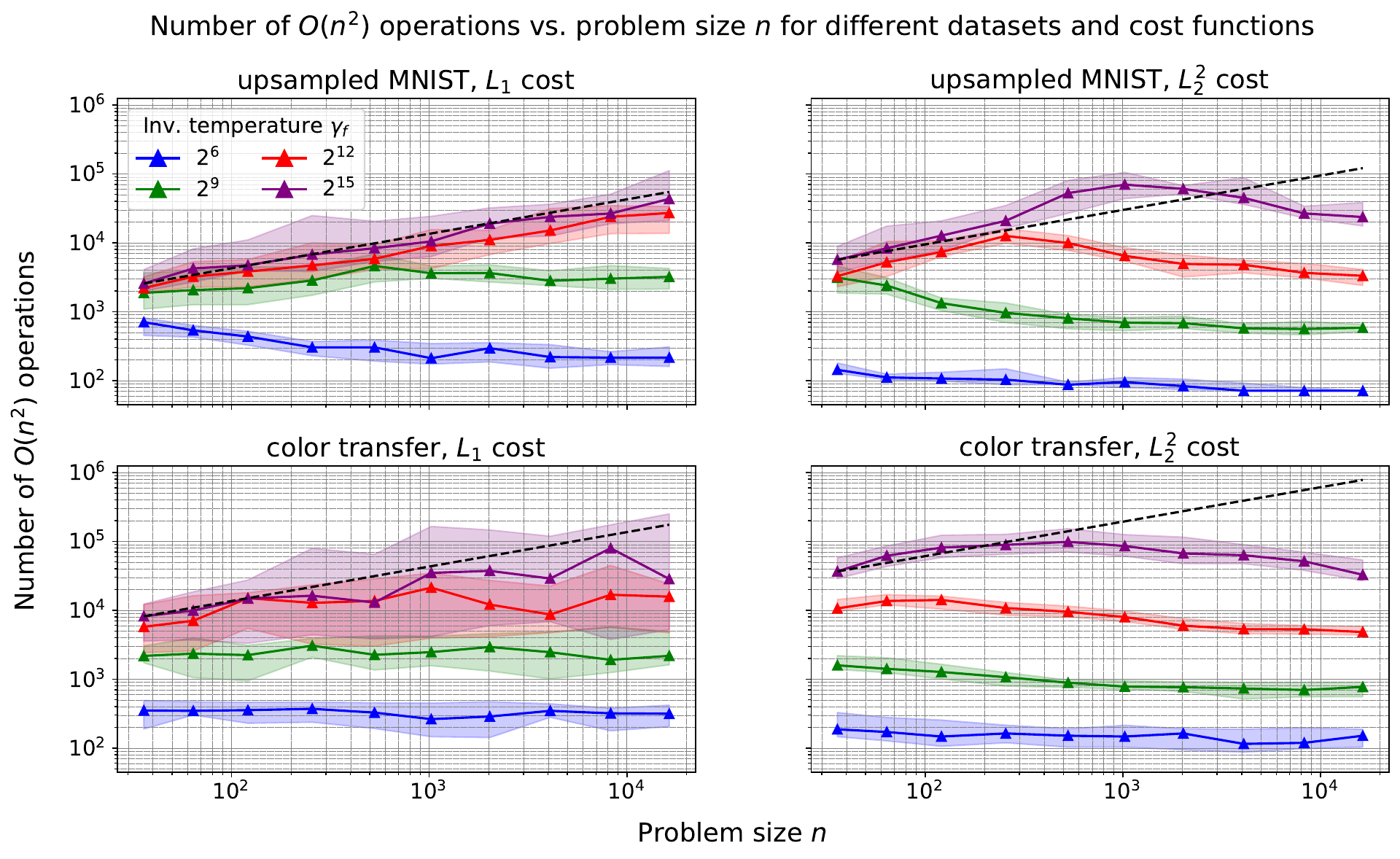}
\end{minipage}
\caption{Problem size dependence of MDOT-PNCG convergence over the upsampled MNIST (top) and color transfer (bottom) problems using $L_1$ (left) and $L_2^2$ (right) distance cost functions with ${\gamma_\mathrm{f} \in \{2^{6}, 2^{9}, 2^{12}, 2^{15}\}}$. Each marker displays the median over 20 problems and shaded areas show 75\% confidence intervals. Dashed lines show $f(n) = a n^{5/2}$ for visual comparison, with $a$ selected to intersect the purple curve at the lowest $n$.}
\vspace{-10pt}
\label{fig:problem-size-dependence}
\end{figure*}
\section{CONCLUSION}
\label{sec:conclusion}
In this work, we first presented a general  procedure, MDOT, for computing OT distances with high precision and described its relation to a well-known temperature annealing strategy ($\varepsilon$-scaling). MDOT employs a novel warm-starting of the sequence of EOT dual problems encountered in temperature annealing, which was empirically shown to be highly effective compared to existing approaches. In addition, a specialized non-linear CG algorithm was developed as an alternative to Sinkhorn iteration and was shown to be more effective at low temperatures (under weak regularization). Over 24 different problem sets, the combined MDOT-PNCG algorithm outperforms aggressively tuned Sinkhorn iteration and many other recent baselines in terms of convergence of the primal suboptimality gap measured in wall-clock time. The algorithm was also shown to behave well with respect to the problem size. Interesting directions for future research include the theoretical convergence behavior of PNCG, bounds on the gradient norm of our warm-started initialization, better warm-starting methods, and adaptive dual problem stopping criteria. The development of faster KL projection algorithms and adaptive temperature decay schedules are also of interest; see also our recent work in this direction \citep{kemertas2025truncated}.
\subsubsection*{Acknowledgements}
Amir-massoud Farahmand acknowledges the support of the Natural Sciences and Engineering Research Council of Canada (NSERC) through the Discovery Grant program (2021-03701). Mete Kemertas acknowledges the support of NSERC through the CGS-D scholarship. Resources used in preparing this research were provided, in part, by the Province of Ontario, the Government of Canada through CIFAR, and companies sponsoring the Vector Institute.

\newpage
\bibliography{tmlr}
\bibliographystyle{tmlr}

\newpage
\appendix

\section*{Appendix}
The Appendix is organized as follows: 
\begin{itemize}
    \item Appendix \ref{sec:proofs-and-additional-results} provides the proofs of our theoretical results.
    \item Appendix \ref{sec:efficient-LS} presents the line search routine we used for Alg. \ref{alg:ncg-proj} after a primer on line search.
    \item Appendix \ref{sec:entropy-aware-stopping-experiments} investigates our use of $H_{\min}(\vr, \vc)$ when deciding the KL projection stopping criteria in contrast to the more standard use of $\log n$.
    \item Appendix \ref{sec:variable-smoothing-experiments} ablates the design decision in MDOT (Alg. \ref{alg:mdot}) to vary the KL projection stopping criterion based on the present temperature, and shows its performance benefit.
    \item Appendix \ref{sec:comparison-with-networksimplex} compares MDOT-PNCG to the network simplex solver in higher dimensions ($n=16384$) than the experiments in the main text ($n=4096$), and shows better scaling.
    \item Appendix \ref{sec:experiment-details-appx} discusses implementation details for baseline algorithms.
    \item Appendix \ref{sec:ipot} provides experiments vs. the IPOT algorithm of \citet{xie20ipot}.
    \item Appendix \ref{sec:additional-experiments} provides additional benchmarking on 20 problem sets from the DOTmark benchmark \citep{schrieber2017dotmark}. 
\end{itemize}

\section{Proofs}
\label{sec:proofs-and-additional-results}
Here, we provide proofs for the theoretical results in the main text; Lemma \ref{lem:minimization-bregman-equivalence} in Appx. \ref{sec:bregmanprojection-derivation}, Proposition \ref{prop:equivalence} in Appx. \ref{sec:proof-of-prop41}, and Proposition \ref{prop:weed} in Appx. \ref{sec:proof-weed}.
\subsection{Proof of Lemma \ref{lem:minimization-bregman-equivalence}}
\label{sec:bregmanprojection-derivation}
\bregprojlemma*

\begin{proof}
First, we emphasize the following observation, from which Lemma \ref{lem:minimization-bregman-equivalence} follows immediately using  ${P(\vu, \vv; \gamma) \odot \exp\{-\Delta_\gamma C\} = P(\vu, \vv; \gamma + \Delta_\gamma)}$.
\begin{restatable}[]{observation}{bregprojremark}
\begin{mdframed}[  
  skipabove=4pt,
  skipbelow=4pt,
  innertopmargin=3pt,
  innerbottommargin=3pt]
\label{rmk:minimization-bregman-equivalence}
Given any initial ${\vu, \vv \in \mathbb{R}^n}$ and $P(\vu, \vv; \gamma)$ defined as in (\ref{eq:lagrangian-closed-form}), we have 
\begin{align}
\label{eq:bregman-proj-equivalence}
    P^*(\gamma) = \argmin_{P \in \gU(\vr, \vc)} D_{\mathrm{KL}}\Big(P|P(\vu, \vv; \gamma)\Big) = P(\vu^*, \vv^*; \gamma), \textrm{ where }  \vu^*, \vv^* \in \argmin g(\gamma, \vr, \vc).
\end{align} 
\end{mdframed}
\end{restatable}

In words, (i) minimizing the objective $g$ in  (\ref{opt:sinkhorn-dual}) given any initial $\vu, \vv \in \mathbb{R}^n$ amounts to a KL projection of $P(\vu, \vv; \gamma)$ onto $\gU(\vr, \vc)$, and (ii) the set of matrices $P(\vu, \vv; \gamma) = \exp \{\vu \1^\top + \1 \vv^\top - \gamma C \}$ all have the same KL projection in $\gU(\vr, \vc)$. The intuition here is that since the elements $u_i, v_j$ of $\vu, \vv$ simply rescale the $i^{\mathrm{th}}$ row and $j^{\mathrm{th}}$ column of $P(\vu, \vv;\gamma)$, and the unique projection onto $\gU(\vr,\vc)$ is the optimal scaling (with $\vr(P)=\vr$ and $\vc(P) = \vc$), the specific initial scaling of rows and columns in the conditioning matrix $P(\vu, \vv;\gamma)$ is irrelevant. 

We now start the formal proof by deriving the relationship 
\begin{align}
    \label{eq:KLprojection}
    P(\vu^*, \vv^*; \gamma) = \argmin_{P \in \gU(\vr, \vc)} \left[ D_{\mathrm{KL}}\big(P|P(\vu, \vv; \gamma)\big) + K \right],
\end{align} given any initial $\vu, \vv \in \mathbb{R}^n$ and $\vu^*, \vv^* \in \argmin_{\vu, \vv \in \mathbb{R}^n} g(\vu, \vv; \gamma)$. Here we have introduced a constant $K$ which, of course, does not effect the $\argmin$. We will choose $K$ to simplify the derivation below.  Note that $K$ can depend on the given quantities, such as $\vu$, $\vv$, and so on, but not on the unknown $P$.  Given the definition of $D_{\mathrm{KL}}$ in Sec. \ref{sec:background} for un-normalized $P \in \mathbb{R}^{n \times n}_{>0}$, observe that
\begin{align*}
    D_{\mathrm{KL}}\big(P|P(\vu, \vv; \gamma)\big) &= \langle P, \log P - \log P(\vu, \vv; \gamma) \rangle + \langle P(\vu, \vv; \gamma) - P, \1 \rangle \\
    &= \langle P, \log P - \vu \1^\top - \1 {\vv}^\top + \gamma C \rangle + \langle P(\vu, \vv; \gamma) - P, \1 \rangle \\
    &= \langle P, -\1 + \gamma C + \log P \rangle - \langle \vu, \vr(P) \rangle - \langle \vv, \vc(P) \rangle + \underbrace{\norm{P(\vu, \vv; \gamma)}_1}_{\textrm{constant in } P}.
\end{align*}
As shown below, it is convenient to use $K = \langle \vu, \vr \rangle + \langle \vv, \vc \rangle  - \norm{P(\vu, \vv; \gamma)}_1$, for which we find
\begin{align}
\label{eq:DKLexpanded}
    D_{\mathrm{KL}}\big(P|P(\vu, \vv; \gamma)\big) + K = \langle P, -\1 + \gamma C + \log P \rangle 
    - \langle \vu, \vr(P) - \vr \rangle 
    - \langle \vv, \vc(P) - \vc \rangle.
\end{align}
We represent the equality constraints on $P$, that is, $\vr(P) = \vr$ and $\vc(P) = \vc$, using Lagrange multipliers $\bm{\alpha}$ and $\bm{\beta}$ to form the Lagrangian 
\begin{align}
    \gL(P, \bm{\alpha}, \bm{\beta}) &=  \left[ D_{\mathrm{KL}}\big(P|P(\vu, \vv; \gamma)\big) + K \right] - \langle \bm{\alpha}, \vr(P) - \vr \rangle - \langle \bm{\beta}, \vc(P)-  \vc \rangle \nonumber \\
    &= 
    \langle P, -\1 + \gamma C + \log P \rangle 
    - \langle \vu + \bm{\alpha}, \vr(P) - \vr \rangle
    - \langle \vv + \bm{\beta}, \vc(P)-  \vc \rangle,
    \label{eq:simpleLagrangian}
\end{align}
where we have used $D_{\mathrm{KL}}+K$ as in (\ref{eq:DKLexpanded}). 
Recall that the Lagrange dual function is given by \citep{boyd2004convex}: 
\begin{align*}
\inf_{P \in \mathbb{R}^{n\times n}_{>0}}\gL(P, \bm{\alpha}, \bm{\beta}),
\end{align*}
where $\mathbb{R}^{n\times n}_{>0}$ is the domain of the KL divergence objective in (\ref{eq:KLprojection}). If strong duality holds, we have 
\begin{align}
\label{eq:strong-duality}
\sup_{\bm{\alpha}, \bm{\beta} \in \mathbb{R}^{n}} \inf_{P \in \mathbb{R}^{n\times n}_{>0}}\gL(P, \bm{\alpha}, \bm{\beta}) = \min_{P \in \gU(\vr, \vc)} \left[ D_{\mathrm{KL}}\big(P|P(\vu, \vv; \gamma)\big) + K \right].
\end{align}
In this case, we conclude that strong duality holds since Slater's condition is satisfied, i.e., there exists a strictly feasible $P$; namely, the independence coupling $\vr \vc^\top$. See Ch. 5.2.3 of \citet{boyd2004convex}. 

Now, to find the point-wise minimum of $\gL(P, \bm{\alpha}, \bm{\beta})$ with respect to $P$, we require:
\begin{align*}
    \frac{\partial \gL}{\partial P_{ij}} = \gamma C_{ij} + \log P_{ij} - u_i - \alpha_i - v_j - \beta_j = 0.
\end{align*}
Therefore the minimizer $P^*$ must have the form
\begin{align}
\label{eq:pstar-offset}
P^*_{ij} =P(\vu + \bm{\alpha}, \vv+ \bm{\beta}; \gamma) = \exp(\alpha_i + u_i + \beta_j + v_j - \gamma C_{ij}).
\end{align}
for some $\bm{\alpha}$ and $\bm{\beta}$.
To eliminate the variable $P$ from $\inf_{P \in \mathbb{R}^{n \times n}_{>0}}\gL(P, \bm{\alpha}, \bm{\beta})$, we plug the form above into the Lagrangian.  The first term on the right hand side of (\ref{eq:simpleLagrangian}) is then
\begin{align*}
  \langle P^*, -\1 + \gamma C + \log P^* \rangle &=  \langle P^*, -\1 + \gamma C \rangle  + 
 \langle P^* , (\vu + \bm{\alpha})\1^T + \1 (\vv + \bm{\beta})^T - \gamma C \rangle, \\
 & = - \langle P^*, \1 \rangle + \langle \vu +\bm{\alpha}, \vr(P^*) \rangle + \langle \vv +\bm{\beta}, \vc(P^*) \rangle.
\end{align*}
Using this in (\ref{eq:simpleLagrangian}), we find
\begin{align}
\gL(P^*, \bm{\alpha}, \bm{\beta}) &= - \langle P^*, \1 \rangle + \langle \vu +\bm{\alpha}, \vr \rangle + \langle \vv +\bm{\beta}, \vc \rangle, \nonumber \\
&= - \sum_{ij} \exp \big( (u_i + \alpha_i) + (v_j + \beta_j) - \gamma C_{ij} \big) + \langle \vu +\bm{\alpha}, \vr \rangle + \langle \vv +\bm{\beta}, \vc \rangle.
\end{align}
where $\bm{\alpha}$ and $\bm{\beta}$ are the only unknowns.  Without loss of generality we can change variables in both $P^* = P(\vu+\bm{\alpha}, \vv+\bm{\beta}; \gamma)$ and $\gL(P^*, \bm{\alpha}, \bm{\beta})$ to $\bm{\alpha}' = \bm{\alpha} + \vu$ and $\bm{\beta}' = \bm{\beta} + \vu$, in which case the Lagrangian is equal to
\begin{align}
\gL(P(\bm{\alpha}', \bm{\beta}'; \gamma), \bm{\alpha}', \bm{\beta}') &=
- \sum_{ij} \exp ( \alpha'_i + \beta'_j - \gamma C_{ij} ) +  \langle \bm{\alpha'}, \vr \rangle + \langle \bm{\beta'}, \vc \rangle \nonumber \\
&= -g( \bm{\alpha}', \bm{\beta}' ; \gamma), \label{eq:L=-g}
\end{align}
where $g(\bm{\alpha}', \bm{\beta}' ; \gamma)$ is the dual objective for the EOT problem in $(\ref{opt:sinkhorn-dual})$. 
By strong duality and (\ref{eq:strong-duality}) we conclude that the solution is given by $P^* = P(\bm{\alpha}^*, \bm{\beta}^*; \gamma)$, 
where $\bm{\alpha}^*, \bm{\beta}^*$ satisfy:
\begin{align*}
    \bm{\alpha}^*, \bm{\beta}^* &\in \argmax_{\bm{\alpha}, \bm{\beta} \in \mathbb{R}^n} \gL(P^*(\bm{\alpha}, \bm{\beta}), \bm{\alpha}, \bm{\beta}) \\ 
    &= \argmin_{\bm{\alpha}, \bm{\beta} \in \mathbb{R}^n} g(\bm{\alpha}, \bm{\beta}; \gamma).
\end{align*}
That is, the solution is identical to the optimal dual solution in (\ref{opt:sinkhorn-dual}), and therefore
Observation \ref{rmk:minimization-bregman-equivalence} holds. 

Lemma \ref{lem:minimization-bregman-equivalence} follows immediately from Observation \ref{rmk:minimization-bregman-equivalence} using  ${P(\vu, \vv; \gamma) \odot \exp\{-\Delta_\gamma C\} = P(\vu, \vv; \gamma + \Delta_\gamma)}$.
\end{proof}

\subsection{Proof of Proposition \ref{prop:equivalence}}
\label{sec:proof-of-prop41}

\equivalenceremark*

\begin{proof}
We prove each of the four statements in order.

\subsubsection{Proof of the 1st statement.}
Here, we first provide a step by step derivation for the following expression for $P^{(t+1)}$ for $t \geq 0$:
\begin{align*}
    P^{(t+1)} &= \argmin_{P \in \gU(\vr, \vc)}D_{\mathrm{KL}}\Big(P|P\big(\vu^*(\gamma^{(t)}), \vv^*(\gamma^{(t)}); \gamma^{(t+1)}\big)\Big) = P^*(\gamma^{(t+1)}),
\end{align*}
where ${\gamma^{(0)}=0}$ by construction, and $P^*(\gamma) = P(\vu^*(\gamma), \vv^*(\gamma); \gamma)$ for $\vu^*(\gamma), \vv^*(\gamma) \in \argmin_{\vu, \vv \in \mathbb{R}^n} g(\gamma)$, i.e., solutions of the EOT primal (\ref{opt:sinkhorn-primal}) and dual (\ref{opt:sinkhorn-dual}) at $\gamma$. Starting with a rank-1 positive matrix $P^{(0)} = \tilde{\vr} \tilde{\vc}^\top = P(\log \tilde{\vr}, \log \tilde{\vc}; 0)$ given vectors $\tilde{\vr}, \tilde{\vc}$ with positive entries, we obtain using MD updates (\ref{eq:mirror-ot}) and Lemma \ref{lem:minimization-bregman-equivalence}:
\begin{align*}
P^{(1)} &= \argmin_{P \in \gU(\vr, \vc)}D_{\mathrm{KL}}\Big(P|P\big(\log \tilde{\vr}, \log \tilde{\vc}; \Delta_\gamma^{(0)}\big)\Big) \\
        &= \argmin_{P \in \gU(\vr, \vc)}D_{\mathrm{KL}}\Big(P|P\big(\log \tilde{\vr}, \log \tilde{\vc}; \gamma^{(1)}\big)\Big) \tag{Since $\gamma^{(0)} = 0$ and $\gamma^{(t+1)}=\gamma^{(t)} + \Delta_\gamma^{(t)}$ by construction.} \\
        &= P^*(\vu^*(\gamma^{(1)}), \vv^*(\gamma^{(1)}); \gamma^{(1)}) \tag{By Observation \ref{rmk:minimization-bregman-equivalence}.}\\ 
        &= P^*(\gamma^{(1)}) \tag{By definition.}.
\end{align*}
Repeatedly using the same and continuing the iteration:
\begin{align}
\label{eq:derivation-of-md-step-by-step}
\begin{split}
    P^{(2)} &= \argmin_{P \in \gU(\vr, \vc)}D_{\mathrm{KL}}\Big(P|P\big(\vu^*(\gamma^{(1)}), \vv^*(\gamma^{(1)}); \gamma^{(2)}\big)\Big) = P^*( \gamma^{(2)}),\\
P^{(3)} &= \argmin_{P \in \gU(\vr, \vc)}D_{\mathrm{KL}}\Big(P|P\big(\vu^*(\gamma^{(2)}), \vv^*(\gamma^{(2)}); \gamma^{(3)}\big)\Big) = P^*(\gamma^{(3)}), \\
&\quad\quad\quad\quad\quad\quad\quad\quad\quad\quad\quad\quad\vdots\\
P^{(t+1)} &= \argmin_{P \in \gU(\vr, \vc)}D_{\mathrm{KL}}\Big(P|P\big(\vu^*(\gamma^{(t)}), \vv^*(\gamma^{(t)}); \gamma^{(t+1)}\big)\Big) = P^*(\gamma^{(t+1)}).
\end{split}
\end{align}
Hence, each iterate $P^{(t+1)} = P^*(\gamma^{(t+1)})$ for $t \geq 0$. \qed

\subsubsection{Proof of the 2nd statement.} 
The result follows immediately by applying Observation \ref{rmk:minimization-bregman-equivalence} to each iterate in (\ref{eq:derivation-of-md-step-by-step}) to replace $\vu^*(\gamma), \vv^*(\gamma)$ terms by arbitrary $\vu, \vv \in \mathbb{R}^n$. \qed

\subsubsection{Proof of the 3rd statement.} 
First, we write the following helper lemma.
\begin{restatable}[A mirror descent bound for linear objectives]{lemma}{mdlinearbound}
\label{lem:md-linear}
Given a linear objective function ${f(P) = \langle P, C \rangle}$, an initial point $P^{(0)} \in \gF$, an optimal point $P^*$ and any $T > 0$, a sequence $[P^{(t)}]_{t \in \mathbb{N}}$ obtained via (\ref{eq:mirror-updates}) satisfies:
\begin{align}
\label{eq:md-linear-regret}
    f(P^{(T)}) - f(P^*) \leq \frac{D_h(P^* | P^{(0)})}{\sum_{t=0}^{T-1}{\Delta^{(t)}}}.
\end{align}
\end{restatable}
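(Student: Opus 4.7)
The plan is to follow the standard mirror descent regret analysis, simplified by the fact that $f$ is linear (so $\nabla f(\vx) = \vw$ is constant and the Bregman-style inequalities that normally involve smoothness terms become equalities / clean inequalities). The main ingredient will be the \textbf{three-point identity} for Bregman divergences, together with the first-order optimality condition for the MD subproblem.

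First, I would invoke the optimality of $\vx^{t+1}$ in (\ref{eq:mirror-updates}): for any $\vx \in \gF \cap \gD$,
\begin{align*}
\bigl\langle \gamma_t \vw + \nabla h(\vx^{t+1}) - \nabla h(\vx^{t}),\, \vx - \vx^{t+1} \bigr\rangle \geq 0.
\end{align*}
Next I would use the classical three-point identity
\begin{align*}
\bigl\langle \nabla h(\vx^{t+1}) - \nabla h(\vx^{t}),\, \vx - \vx^{t+1} \bigr\rangle = D_h(\vx | \vx^{t}) - D_h(\vx | \vx^{t+1}) - D_h(\vx^{t+1} | \vx^{t}),
\end{align*}
which follows by expanding the three Bregman terms using (\ref{eq:bregman-div-def}). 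Combining these two displays and setting $\vx = \vx^*$ yields, after using linearity $\langle \vw, \vx^{t+1} - \vx^* \rangle = f(\vx^{t+1}) - f(\vx^*)$,
\begin{align*}
\gamma_t \bigl(f(\vx^{t+1}) - f(\vx^*)\bigr) \leq D_h(\vx^* | \vx^{t}) - D_h(\vx^* | \vx^{t+1}) - D_h(\vx^{t+1} | \vx^{t}).
\end{align*}

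I would then telescope over $t = 0,\ldots,T-1$; the $D_h(\vx^* | \vx^t)$ terms collapse, $D_h(\vx^* | \vx^T) \geq 0$ can be dropped, and the non-negative $D_h(\vx^{t+1} | \vx^t)$ terms can also be dropped, giving
\begin{align*}
\sum_{t=0}^{T-1} \gamma_t \bigl(f(\vx^{t+1}) - f(\vx^*)\bigr) \leq D_h(\vx^* | \vx^{0}).
\end{align*}
The only remaining step, and the one piece specific to linear objectives, is to replace the weighted average of $f(\vx^{t+1})$ on the left by the last iterate $f(\vx^{T})$. For this I would show that the sequence $f(\vx^{t})$ is non-increasing: since $\vx^{t+1}$ is the minimizer of $\langle \vw, \vx \rangle + \tfrac{1}{\gamma_t} D_h(\vx | \vx^{t})$ over $\gF \cap \gD$ and $\vx^{t} \in \gF \cap \gD$ attains value $f(\vx^{t})$ (with zero divergence term), we get $f(\vx^{t+1}) \leq f(\vx^{t}) - \tfrac{1}{\gamma_t} D_h(\vx^{t+1} | \vx^{t}) \leq f(\vx^{t})$. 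Consequently $f(\vx^{T}) \leq f(\vx^{t+1})$ for every $t \leq T-1$, and dividing both sides of the telescoped bound by $\sum_t \gamma_t$ gives (\ref{eq:md-linear-regret}).

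I do not anticipate a serious obstacle here: the argument is textbook mirror descent once linearity is exploited. The only mildly subtle point is the last-iterate vs. average-iterate step; without linearity one typically states the bound for the average iterate, but here the monotone-descent property of MD on linear objectives lets us upgrade the average-iterate bound to a last-iterate bound at essentially no cost.
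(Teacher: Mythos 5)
Your proposal is correct and follows essentially the same route as the paper's proof: per-step inequality from the optimality condition of the MD update combined with the three-point identity, telescoping with the nonnegative Bregman terms dropped, and a monotone-descent argument to upgrade the averaged-iterate bound to the last iterate $f(\vx^{T})$. The only cosmetic difference is that you use the first-order optimality condition of the proximal form (\ref{eq:mirror-updates}) directly, whereas the paper phrases the same step through the gradient-step/projection decomposition (\ref{eq:gradient-step})--(\ref{eq:projection-step}) and the projection optimality lemma of Bubeck, and you derive monotonicity by comparing subproblem objective values rather than by setting $\vx = \vx^{t}$ in the per-step inequality; these are equivalent.
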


\begin{proof}
Recall the definition of $\hat{P}^{(t+1)}$ from mirror descent iterates in (\ref{eq:gradient-step}):
\begin{align*}
    \hat{P}^{(t+1)} &= \nabla h^{-1}\left(\nabla h(P^{(t)}) - \Delta^{(t)} \nabla f(P^{(t)})\right)
\end{align*}

For any $P \in \gD$,
\begin{align*}
    f(P^{(t+1)}) - f(P) &= \langle \nabla f(P^{(t)}), P^{(t+1)} - P \rangle \tag{since $f$ is linear} \\
    &= \frac{1}{\Delta^{(t)}} \langle \nabla h(P^{(t)}) - \nabla h(\hat{P}^{t+1}), P^{(t+1)} - P \rangle \tag{due to (\ref{eq:gradient-step})} \\
    & \leq \frac{1}{\Delta^{(t)}} \langle \nabla h(P^{(t)}) - \nabla h(P^{(t+1)}), P^{(t+1)} - P \rangle \tag{by Lemma 4.1 in \citet{bubeck2015convex}} \\
    & = \frac{1}{\Delta^{(t)}} \left ( D_h(P | P^{(t)}) - D_h(P | P^{(t+1)}) - D_h(P^{(t+1)} | P^{(t)}) \right) \tag{by Eq. 4.1 in \citet{bubeck2015convex}} \\
    & \leq \frac{1}{\Delta^{(t)}} \left ( D_h(P | P^{(t)}) - D_h(P | P^{(t+1)}) \right) \tag{since $D_h \geq 0$},
\end{align*}
which implies
\begin{align*}
    \Delta^{(t)} \big(f(P^{(t+1)}) - f(P)\big) \leq D_h(P| P^{(t)}) - D_h(P| P^{(t+1)}). 
\end{align*}
The above inequality proves monotonic improvement in each step $t$ once we take $P = P^{(t)}$. Letting $P = P^*$, taking a telescopic sum and dividing both sides by $\sum_{s=0}^{T-1} \Delta^{(s)}$ we arrive at:
\begin{align*}
    \frac{\sum_{t=0}^{T-1} \Delta^{(t)} \big( f(P^{(t+1)}) - f(P^*)\big)}{\sum_{s=0}^{T-1} \Delta^{(s)}} &\leq \frac{D_h(P^*| P^{(0)}) - D_h(P^* | P^{(T)})}{\sum_{s=0}^{T-1} \Delta^{(s)}} \\
    &\leq \frac{D_h(P^* | P^{(0)})}{\sum_{s=0}^{T-1} \Delta^{(s)}},
\end{align*}
which implies (\ref{eq:md-linear-regret}) since improvement is monotonic and the first term on the LHS is a convex combination of objective values.
\end{proof}

By Lemma \ref{lem:md-linear}, for $P^{(0)} \in \gU(\vr, \vc)$ we have
\begin{align*}
    \langle P^{(t)} - P^*, C \rangle \leq \frac{D_h(P^* | P^{(0)})}{\sum_{t^\prime=0}^{t-1} \Delta_\gamma^{(t^\prime)}}.
\end{align*}
Given $\gamma = \gamma^{(t)} =  \sum_{t^\prime=0}^{t-1} \Delta_\gamma^{(t^\prime)}$, it remains to show that $D_h(P^* | P^{(0)}) \leq H_{\min}(\vr, \vc)$.

Recall that for the negative entropy $h(\vx) = \sum_{i} x_i \log x_i$, we have $D_h(\vx | \vy) = D_{\mathrm{KL}}(\vx | \vy)$. Suppose we take $P^{(0)} = \vr \vc^\top$:
\begin{align*}
    D_{\mathrm{KL}}(P^* | P^{(0)}) &= \sum_{ij} P^*_{ij} (\log P^*_{ij} - \log r_i c_j) \\
    &= \sum_{ij} P^*_{ij} (\log P^*_{ij} - \log r_i - \log c_j) \\
    &= -H(P^*) - \sum_{i} \log r_i \sum_{j} P^*_{ij} - \sum_{j} \log c_j \sum_{i} P^*_{ij} \\
    &= -H(P^*) - \sum_{i} r_i \log r_i - \sum_{j} c_j \log c_j \tag{since $P^* \in \gU(\vr, \vc)$} \\
    &= H(\vr) + H(\vc) - H(P^*) \\
    &= \max(H(\vr), H(\vc)) + \min(H(\vr), H(\vc)) - H(P^*) \\
    & \leq \min(H(\vr), H(\vc)).
\end{align*}
The last inequality holds since $H(P) \geq H(\vr)$ and $H(P) \geq H(\vc)$ for any $P \in \gU(\vr, \vc)$ \citep{cover1999elements}, which together imply $H(P) \geq \max(H(\vr), H(\vc))$. \qed

\textbf{Proof of the 4th statement.} First, note that given ${h(P) = \sum_{ij} P_{ij} \log P_{ij}}$, we have ${\nabla h(P)_{ij} = 1 + \log P_{ij}}$ and ${\nabla h^{-1}(Q)_{ij} = \exp(Q_{ij}-1)}$. Then, given the definition of $\hat{P}^{(t+1)}$ from mirror descent iterates in (\ref{eq:gradient-step}):
\begin{align*}
    \hat{P}^{(t+1)} &= \nabla h^{-1}\left(\nabla h(P^{(t)}) - \Delta_\gamma^{(t)} \nabla f(P^{(t)})\right) \\
    &= \exp(\log P^{(t)} - \Delta_\gamma^{(t)}C) \\
    &= \exp\big(\vu^*(\gamma^{(t)}) \1^\top + \1 \vv^*(\gamma^{(t)})^\top - (\gamma^{(t)} + \Delta_\gamma^{(t)})C\big). \tag{given $\vu^*(\gamma^{(t)}), \vv^*(\gamma^{(t)}) \in \argmin g(\vu, \vv; \gamma^{(t)})$} \\
    &= \exp\big(\vu^*(\gamma^{(t)}) \1^\top + \1 \vv^*(\gamma^{(t)})^\top - \gamma^{(t+1)}C\big) \numberthis\label{eq:phat-definition}.
\end{align*} 
In the third equality, we used the known closed-form expression (\ref{eq:lagrangian-closed-form}) to expand $P^{(t)}$.

In the special case that the feasible set ${\gF = \gU(\vr, \vc)}$,
\begin{align*}
    &\langle P^{(t)}, C \rangle - \langle P^{(t+1)}, C \rangle \\
    &= \langle \nabla f(P^{(t)}), P^{(t)} - P^{(t+1)} \rangle \tag{since $f(P) = \langle P, C \rangle$ is linear, $\nabla_P f(P) = C$.} \\
    & = \frac{1}{\Delta_{\gamma}^{(t)}} \langle \nabla h(P^{(t)}) - \nabla h(\hat{P}^{t+1}), P^{(t)} - P^{(t+1)} \rangle \tag{due to (\ref{eq:gradient-step})} \\
    & = \frac{1}{\Delta_{\gamma}^{(t)}} \langle \nabla h(P^{(t)}) - \nabla h(P^{(t+1)}), P^{(t)} - P^{(t+1)} \rangle \tag{see below} \\
    &= \frac{1}{\Delta_{\gamma}^{(t)}} \left (D_h(P^{(t)} | P^{(t+1)}) + D_h(P^{(t+1)} | P^{(t)}) \right) \tag{by definition of the Bregman divergence as in (\ref{eq:bregman-div-def})} \\
    &= \frac{1}{\Delta_{\gamma}^{(t)}} \left (D_{\mathrm{KL}}(P^{(t)} | P^{(t+1)}) + D_{\mathrm{KL}}(P^{(t+1)} | P^{(t)}) \right).
\end{align*}
To see why the third equality holds, observe that ${P_{ij}^{t+1} = \hat{P}_{ij}^{t+1} \exp\{\hat{u}_i^{*} + \hat{v}_j^*\}}$ for some optimal update vectors $\hat{\vu}^*, \hat{\vv}^* \in \mathbb{R}^n$ given the closed-forms (\ref{eq:lagrangian-closed-form}) and (\ref{eq:phat-definition}). Then, for any $P, P^\prime \in \gU(\vr, \vc)$,
\begin{align*}
    &\langle \nabla h(P^{(t+1)}), P - P^\prime \rangle \\
    &= \sum_{ij} (1 + \log \hat{P}^{t+1}_{ij} + \hat{u}_i^{*} + \hat{v}_j^*) (P_{ij} - P^\prime_{ij}) \\
    &= \langle \nabla h(\hat{P}^{t+1}), P - P^\prime \rangle + \sum_{i} \hat{u}_i^* \sum_{j} (P_{ij} - P^\prime_{ij}) + \sum_{j} \hat{v}_j^* \sum_{i} (P_{ij} - P^\prime_{ij}) \\
    & = \langle \nabla h(\hat{P}^{t+1}), P - P^\prime \rangle + 
    \langle \hat{\vu}^*, \vr - \vr \rangle + \langle \hat{\vv}^*, \vc - \vc \rangle \tag{since $P, P^\prime \in \gU(\vr, \vc)$ by construction} \\
    &= \langle \nabla h(\hat{P}^{t+1}), P - P^\prime \rangle. \tag*{\qedhere}
\end{align*}
\end{proof}

\begin{minipage}{0.49\textwidth}
\begin{algorithm}[H]
\caption{Round($P, \vr, \vc)$ \citep{altschuler2017near}}
\label{alg:rounding}
\begin{algorithmic}[1]
\State $X \gets \bm{D}(\vx)$ with $\vx = \vr / \vr(P) \wedge 1$ 
\State $F \gets XP$
\State $Y \gets \bm{D}(\vy)$ with $\vy = \vc / \vc(F) \wedge 1$
\State $F^\prime \gets FY$
\State $\mathrm{err}_r \gets \vr - \vr(F^\prime), \mathrm{err}_c \gets \vc - \vc(F^\prime)$
\State Output $G \gets F^\prime + \mathrm{err}_r \mathrm{err}_c^\top / \norm{\mathrm{err}_r}_1$
\end{algorithmic}
\end{algorithm}
\end{minipage}
\hfill
\begin{minipage}{0.45\textwidth}
\begin{algorithm}[H]
\caption{Sinkhorn($\vz, \gamma, C, \vr, \vc, \varepsilon_{\mathrm{d}})$}
\label{alg:partial-sinkhorn}
\begin{algorithmic}[1]
\State $(\vu, \vv) \gets \vz$
\State $\log \vr(P) \gets \vu + \mathrm{LSE}_r(\1_n \vv^\top - \gamma C)$
\While{$\norm{\nabla g}_1  = \norm{\vr - \vr(P)}_1 > \varepsilon_{\mathrm{d}}$} 
    \State $\vu \gets \vu + \log \vr - \log \vr(P)$ 
    \State $\vv \gets \log \vc - \mathrm{LSE}_c( \vu \1_n^\top - \gamma C)$
    \State $\log \vr(P) \gets \vu + \mathrm{LSE}_r(\1_n \vv^\top - \gamma C)$
\EndWhile 
\State Output $\vz \gets (\vu, \vv)$
\end{algorithmic}
\end{algorithm}
\end{minipage}

\subsection{Proof of Proposition \ref{prop:weed}}
\label{sec:proof-weed}
In the remainder of this section, the $L_1$ norm $\norm{P}_1$ of a matrix denotes the $L_1$ norm of the vectorized form of the matrix, and not the $L_1$ matrix norm.

First we state the following lemma, which is a simple combination of Lemmas 6 and 8 by \citet{weed2018explicit}.
\begin{restatable}[Entropy increase from mixing \citep{weed2018explicit}]{lemma}{entropyincrease}
\label{lem:entropy-increase}
Let $\vr_1, \vr_2, \vr_3 \in \Delta_{n}$ and $\vr_2 = (1-\varepsilon)\vr_1 + \varepsilon \vr_3$, where $\varepsilon \in (0, 1]$. We have,
\begin{align}
\label{eq:entropy_inequality}
    H(\vr_2) \leq (1-\varepsilon) H(\vr_1) + \varepsilon H(\vr_3) + \varepsilon (1 - \log \varepsilon) < H(\vr_1) + \varepsilon (1 + \log \frac{n}{\varepsilon}).
\end{align}
\end{restatable}

Next, we provide a simple proof for Remark \ref{lem:gammap}
\gammap*
\begin{proof}
    Recall from Thm. 5 of \citet{weed2018explicit} that the quantity $\langle P^*(\gamma) - P^*, C \rangle$ decays at an exponential rate with increasing $\gamma$ for sufficiently large $\gamma$. Since the exponential function $\exp\{ -\gamma K \}$ decays more quickly than $\gamma^{-p}$ for any constant $K > 0$ and finite $p$, we conclude that there exists some constant $\gamma_0 > 0$ such that
\begin{align}
\label{eq:fastish-rate}
    \langle P^*(\gamma) - P^*, C \rangle \leq H_{\min}(\vr, \vc) / \gamma^p
\end{align} 
for all optimal transport problems given by $\vr, \vc, C$ provided that $\gamma \geq \gamma_0$. 
\end{proof}

\sinkhornp*

\begin{proof}
Let $B \in \gU(\vr^\prime, \vc^\prime)$ be the transport plan $P(\vu, \vv) = \exp \{\vu \1^\top + \1 \vv^\top - \gamma_{\mathrm{f}} C\}$ after the termination of the main loop (before rounding in L13) of Alg. \ref{alg:mdot}, which takes place after a single outer loop iteration in this setting, since $\gamma_{\mathrm{i}} = \gamma_{\mathrm{f}}$ by construction. Since $B$ is the output of Sinkhorn iteration (Alg. \ref{alg:partial-sinkhorn}), it lies on the simplex, as do its row and column marginals (specifically, we have $\vc^\prime = \vc(B) = \tilde{\vc} \in \Delta_n$ from Alg. \ref{alg:partial-sinkhorn}). Furthermore, $B$ is the unique optimizer of the EOT problem over $\gU(\vr^\prime, \vc^\prime)$ due to Prop. \ref{prop:equivalence} and the fact that it has the form $B_{ij} = \exp \{\vu_i + \vv_j - \gamma_{\mathrm{f}} C_{ij} \}$:
\begin{align}
\label{eq:optimality-of-B-in-EOT}
    B = \argmin_{P \in \gU(\vr^\prime, \vc^\prime)} \langle P, C \rangle - \frac{1}{\gamma_{\mathrm{f}}}H(P).
\end{align}
Sinkhorn iteration returns a solution $\vu, \vv$ such that 
\begin{align*}
    &~~~\norm{\tilde{\vr} - \vr(B)}_1 + \norm{\tilde{\vc} - \vc(B)}_1 \leq \varepsilon^\prime / 2 \\
    \implies \norm{\nabla g}_1 &= \norm{\vr - \vr(B)}_1 + \norm{\vc - \vc(B)}_1  \\
    &\leq \norm{\vr - \tilde{\vr}}_1 + \norm{\tilde{\vr} - \vr(B)}_1 + \norm{\vc - \tilde{\vc}}_1  + \norm{\tilde{\vc} - \vc(B)}_1 \tag{triangle inequality} \\
    &\leq \norm{\vr - \tilde{\vr}}_1 + \norm{\vc - \tilde{\vc}}_1 + \varepsilon^\prime / 2.
\end{align*}
Then, given mixing weights $\varepsilon^\prime/4$ in L5 of Alg. \ref{alg:mdot}:
\begin{align}
\label{eq:grad-bounded-by-varepsilon}
    \norm{\nabla g}_1 \leq \varepsilon^\prime.
\end{align}
Now, we make the following definitions:
\begin{itemize}
    \item $\widehat{B} = \mathrm{Round}(B, \vr, \vc)$, the rounding of $B$ onto $\gU(\vr, \vc)$ via Alg. 2 of \citet{altschuler2017near}, returned by our Alg. \ref{alg:mdot},
    \item $B^* \in \argmin_{P \in \gU(\vr^\prime, \vc^\prime)} \langle P, C \rangle$, an optimal plan in the feasible set $\gU(\vr^\prime, \vc^\prime)$,
    \item $P^* \in \argmin_{P \in \gU(\vr, \vc)} \langle P, C \rangle$, an optimal plan in the feasible set $\gU(\vr, \vc)$.
\end{itemize}
We have that,
\begin{align}
    \langle \widehat{B} - P^*, C \rangle &= \langle \widehat{B} - B, C \rangle + \langle B - B^*, C \rangle + \langle B^* - P^*, C \rangle \nonumber \\
    &= \langle \widehat{B} - B, C - \frac{1}{2} \1_{n \times n} \rangle + \langle B - B^*, C \rangle + \langle B^* - P^*, C \rangle \tag{since $\widehat{B}, B \in \Delta_{n \times n}$.}  \nonumber \\
    &\leq \frac{1}{2}\norm{\widehat{B} - B}_1 + \langle B - B^*, C \rangle + \langle B^* - P^*, C \rangle \tag{Hölder's ineq., given $C_{ij} \in [0, 1] ~\forall i,j \in [n]$} \nonumber \\
    & \leq \norm{\nabla g}_1 + \langle B - B^*, C \rangle + \langle B^* - P^*, C \rangle \tag{by Lemma 7 of \citet{altschuler2017near}}  \nonumber \\
    &\leq \norm{\nabla g}_1 + \frac{H_{\min}(\vr^\prime, \vc^\prime)}{\gamma_{\mathrm{f}}^p} + \langle B^* - P^*, C \rangle \tag{given (\ref{eq:fastish-rate}-\ref{eq:optimality-of-B-in-EOT}), assuming $\gamma_{\mathrm{f}}$ sufficiently large}  \nonumber \\
    &\leq \varepsilon^\prime + \frac{H_{\min}(\vr^\prime, \vc^\prime)}{\gamma_{\mathrm{f}}^p} + \langle \widetilde{B} - P^*, C \rangle,
    \label{eq:dagger}
\end{align}
where $\widetilde{B}$ is any transport plan in $\gU(\vr^\prime, \vc^\prime)$. We take $\widetilde{B}$ to be the ``shadow'' of $P^*$ in the sense of Definition 3.1 of \citet{eckstein2022stability}, under \textit{the discrete metric}. In other words, letting
\begin{align*}
    \widetilde{B} = \argmin_{P \in \gU(\vr^\prime, \vc^\prime)} \norm{P - P^*}_1,
\end{align*}
and noting that the 1-Wasserstein distance under the discrete metric is equal to the total variation (TV) distance, the first equation in Lemma 3.2 of \citet{eckstein2022stability} yields the equality $(*)$ below:
\begin{align}
\label{eq:shadow-equality}
    \frac{1}{2} \norm{\widetilde{B} - P^*}_1 = \mathrm{TV}(B, P^*) \stackrel{(*)}{=} \mathrm{TV}(\vr, \vr^\prime) +  \mathrm{TV}(\vc, \vc^\prime) = \frac{1}{2}\norm{\nabla g}_1.
\end{align}
Then, continuing from (\ref{eq:dagger}),
\begin{align*}
    \langle \widehat{B} - P^*, C \rangle &\leq  \varepsilon^\prime + \frac{H_{\min}(\vr^\prime, \vc^\prime)}{\gamma_{\mathrm{f}}^p} + \langle \widetilde{B} - P^*, C \rangle \\
    &= \varepsilon^\prime + \frac{H_{\min}(\vr^\prime, \vc^\prime)}{\gamma_{\mathrm{f}}^p} + \langle \widetilde{B} - P^*, C - \frac{1}{2}\1_{n \times n} \rangle \tag{since $\widetilde{B}, P^* \in \Delta_{n \times n}$.}\\
    &= \varepsilon^\prime + \frac{H_{\min}(\vr^\prime, \vc^\prime)}{\gamma_{\mathrm{f}}^p} +  \norm{\widetilde{B} - P^*}_1 \norm{C - \frac{1}{2}\1_{n \times n}}_\infty \\
    &\leq \frac{3}{2} \varepsilon^\prime + \frac{H_{\min}(\vr^\prime, \vc^\prime)}{\gamma_{\mathrm{f}}^p} \tag{given (\ref{eq:grad-bounded-by-varepsilon}-\ref{eq:shadow-equality})} \\
    &\leq \frac{3}{2} \varepsilon^\prime + \frac{H_{\min}(\vr, \vc)}{\gamma_{\mathrm{f}}^p} + \frac{\varepsilon^\prime}{\gamma_{\mathrm{f}}^p}\left(1 + \log (n / \varepsilon^\prime) \right) \tag{by Lemma \ref{lem:entropy-increase}} \\
    &= \frac{5 H_{\min}(\vr, \vc)}{2\gamma_{\mathrm{f}}^p} + \widetilde{O}(\gamma_{\mathrm{f}}^{-2p}) \tag{since $\varepsilon^\prime = \frac{H_{\min}(\vr, \vc)}{\gamma_{\mathrm{f}}^p}$ in L4 of Alg. \ref{alg:mdot}} \\
    &= \varepsilon + \widetilde{O}(\varepsilon^2) \tag{since $\gamma_{\mathrm{f}} = \Big(5H_{\min}\big(\vr, \vc \big)/2\varepsilon\Big)^{1/p}$ by construction}.
\end{align*}
The computational complexity of the algorithm follows simply from the same line of reasoning as Thm. 1 and Thm. 2 of \citet{dvurechensky2018computational}. In particular, they show that Sinkhorn iteration converges in $O(R/\varepsilon^\prime)$ steps, where $R = O(\gamma_{\mathrm{f}}) = O(H_{\min}(\vr, \vc)^{1/p} \varepsilon^{-1/p})$ in our case. The complexity result $O(n^2 H_{\min}(\vr, \vc)^{1/p} / \varepsilon^{\frac{p+1}{p}})$ follows since $\varepsilon^\prime = O(H_{\min}(\vr, \vc)\gamma_{\mathrm{f}}^{-p}) = O(\varepsilon^{-1})$, and each Sinkhorn step costs $O(n^2)$.

\end{proof}

\section{An Efficient Line Search Algorithm}
\label{sec:efficient-LS}
In Section \ref{sec:cg}, we developed the PNCG algorithm, which required a line search procedure. Here, we develop the line search used in our implementation, following a short background on relevant aspects of line search in numerical optimization. 
\subsection{Background: Line Search}
\label{sec:background-LS}
Given a descent  direction $\vp^{(k)} \in \mathbb{R}^n$, i.e., a direction that satisfies $\langle \vp^{(k)}, \nabla f(\vx^{(k)}) \rangle \leq 0$, line search algorithms aim to find an appropriate step size $\alpha$, where $\vx^{(k+1)} \leftarrow \vx^{(k)} + \alpha \vp^{(k)}$. Perhaps the most well-known of desirable properties that a step size $\alpha$ should satisfy at any given optimization step are the Wolfe conditions \citep{wolfe1969convergence, wolfe1971convergence}. Given $\phi(\alpha) \coloneqq f(\vx^{(k)} + \alpha \vp^{(k)})$:
\begin{subequations}
\label{eq:wolfe}
\begin{align}
    \frac{\phi(\alpha) - \phi(0)}{\alpha} &\leq c_1 \phi^\prime(0)\label{eq:wolfe-a-phi} \\
    \phi^\prime(\alpha) &\geq c_2 \phi^\prime(0) \label{eq:wolfe-b-phi}.
\end{align}
\end{subequations}
where $0 < c_1 < c_2 < 1$ and (\ref{eq:wolfe-a-phi}) and (\ref{eq:wolfe-b-phi}) are known as the \textit{sufficient decrease} and \textit{curvature} conditions respectively \citep{Nocedal}. It is well-known that given step sizes satisfying the Wolfe conditions and descent directions $\vp^{(k)}$ that are \textit{not} nearly orthogonal to the steepest descent directions $-\nabla f(\vx^{(k)})$, line search methods ensure convergence of the gradient norms to zero \citep{zoutendijk1966nonlinear, wolfe1969convergence, wolfe1971convergence}. Instead of satisfying (\ref{eq:wolfe}), some algorithms or theoretical analyses consider \textit{exact} line search, where ${\alpha^* \in \argmin_{\alpha \in \mathbb{R}} \phi(\alpha)}$, which has a unique closed-form solution for quadratic objectives with a positive definite Hessian. However, a rule of thumb for general non-linear objectives is to not spend too much time finding $\alpha^*$ 
\citep{Nocedal}. 
\citet{hager2006algorithm} proposed \textit{approximate} Wolfe conditions, derived by replacing the $\phi(\alpha)$ and $\phi(0)$ terms in (\ref{eq:wolfe-a-phi}) with $q(\alpha)$ and $q(0)$, where $q$ is a quadratic interpolant of $\phi$ such that $q(0) = \phi(0)$, $q^\prime(0) = \phi^\prime(0)$ and $q^\prime(\alpha) = \phi^\prime(\alpha)$:
\begin{align}
\label{eq:approximate-wolfe}
    (2c_1 - 1) \phi^\prime(0) \geq \phi^\prime(\alpha) \geq c_2 \phi^\prime(0).
\end{align}
A key advantage of replacing (\ref{eq:wolfe}) by (\ref{eq:approximate-wolfe}) stems from the fact that one only needs to evaluate $\phi^\prime$ rather than both $\phi$ and $\phi^\prime$ to check whether the conditions are satisfied, thereby halving the amount of computation necessary per iteration in cases where their evaluation has similar computational cost. 

Bisection is a line search strategy with convergence guarantees when the objective is convex. One simply maintains a bracket $[\alpha_{\mathrm{lo}}, \alpha_{\mathrm{hi}}]$, where $\phi^\prime(\alpha_{\mathrm{lo}}) < 0$ and $\phi^\prime(\alpha_{\mathrm{hi}}) > 0$, and recursively considers their average and updates either endpoint of the bracket given the sign of $\phi^\prime\big((\alpha_{\mathrm{hi}}+\alpha_{\mathrm{lo}})/2\big)$.
\begin{figure*}
\vspace{-40pt}
\centering
\begin{minipage}{0.48\textwidth}
  \centering \includegraphics[width=1.\linewidth]{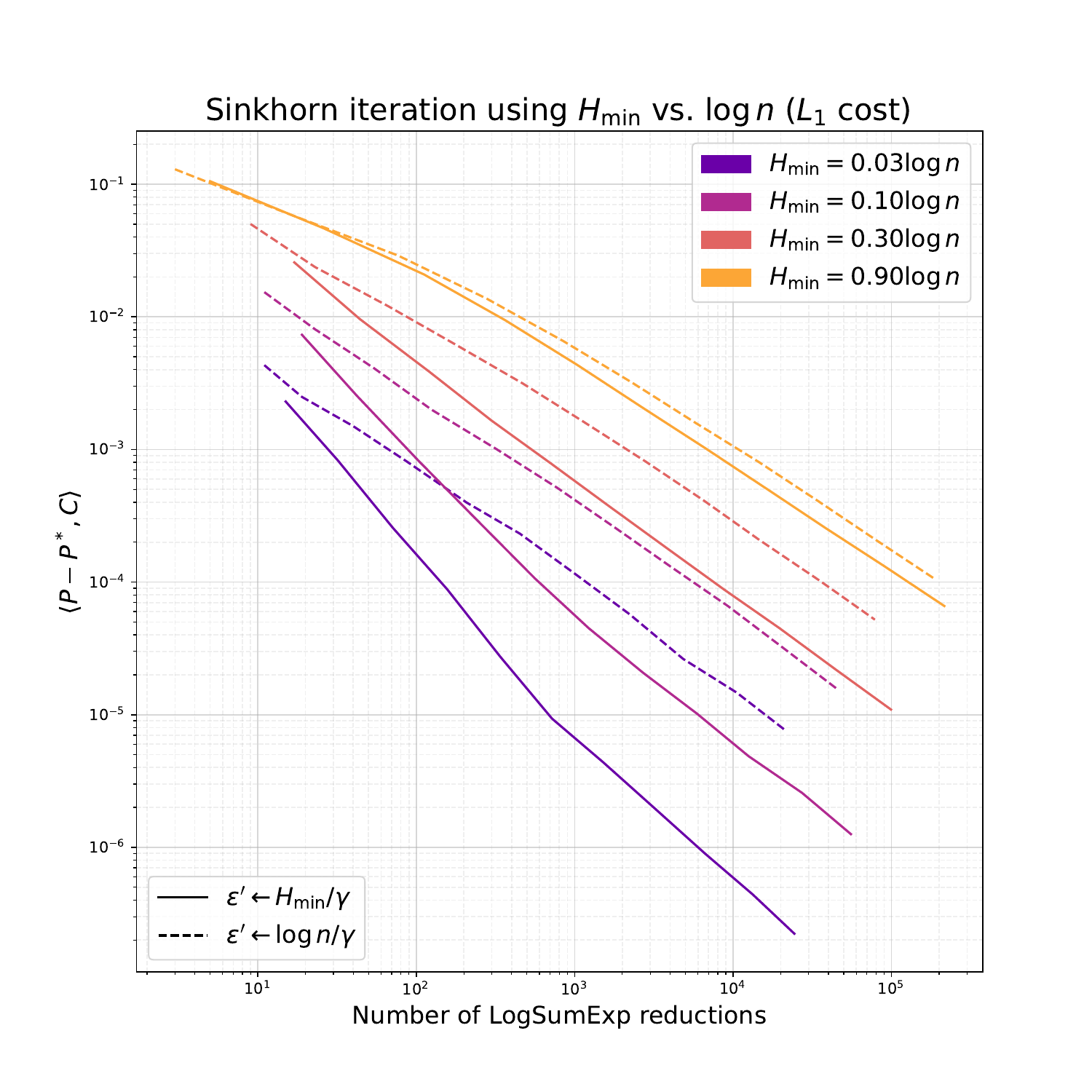}
\end{minipage}
\begin{minipage}{0.48\textwidth}
  \centering
\includegraphics[width=1.\linewidth]{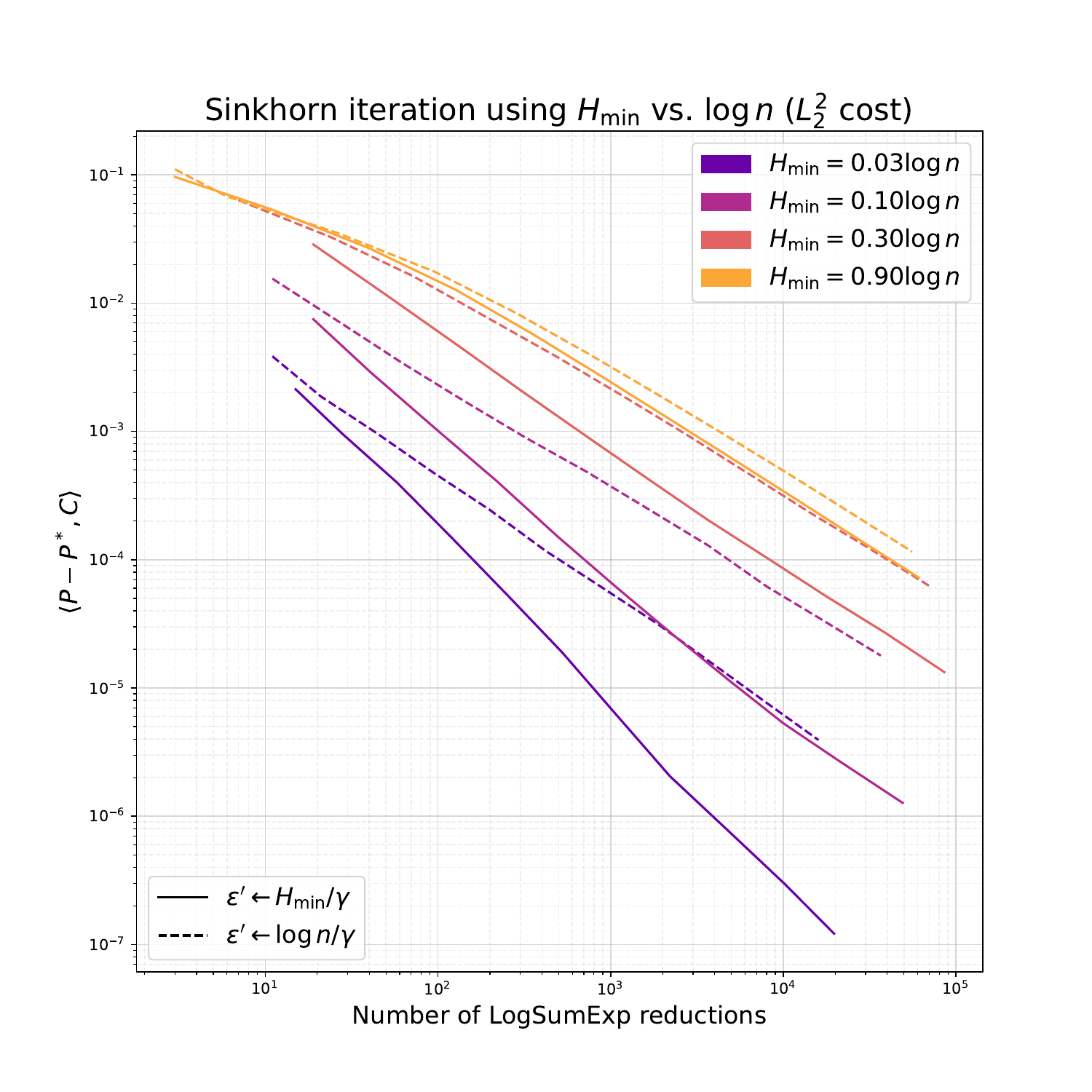}
\end{minipage}
\vspace{-15pt}
\caption{We control the problem parameter $H_{\min}(\vr, \vc)$ over synthetically sampled OT problems ($\vr, \vc, C$) and show that using the entropy-aware stopping criterion can yield substantial performance gains, with the gap between the two approaches growing proportionally to the gap between $H_{\min}(\vr, \vc)$ and $\log n$. The experiments carried out use $\gamma \in \{2^4, 2^5, \cdots, 2^{14} \}$ to control precision. We sample 18 problems for each $\gamma$ value and plot the median along both axes. The results are consistent between $L_1$ (left) and $L_2^2$ (right) costs.}
\label{fig:hmin-ablation}
\end{figure*}
\subsection{PNCG Line Search}
To perform line search in PNCG (Alg. \ref{alg:ncg-proj}), we adopt a hybrid strategy combining bisection and the secant method to find $\alpha_k$ satisfying approximate Wolfe conditions (\ref{eq:approximate-wolfe}). Given $\alpha_{\mathrm{lo}}, \alpha_{\mathrm{hi}}$, the secant method computes the minimizer of a quadratic interpolant $\hat{q}$ that satisfies ${\hat{q}^\prime(\alpha_{\mathrm{lo}}) = \phi^\prime(\alpha_{\mathrm{lo}})}$ and ${\hat{q}^\prime(\alpha_{\mathrm{hi}}) = \phi^\prime(\alpha_{\mathrm{hi}})}$ as follows:
\begin{align}
    \alpha_{\mathrm{sec}} = \frac{\alpha_{\mathrm{lo}}\phi^\prime(\alpha_{\mathrm{hi}}) - \alpha_{\mathrm{hi}}\phi^\prime(\alpha_{\mathrm{lo}})}{\phi^\prime(\alpha_{\mathrm{hi}}) - \phi^\prime(\alpha_{\mathrm{lo}})}.
\end{align}
Thanks to the convexity of the objective $g$, by ensuring $\phi^\prime(\alpha_{\mathrm{lo}}) < 0$ and $\phi^\prime(\alpha_{\mathrm{hi}}) > 0$ with simple algorithmic checks, we can guarantee that $\alpha_{\mathrm{lo}} < \alpha_{\mathrm{sec}} < \alpha_{\mathrm{hi}}$. Thus, the updated bracket is guaranteed to be smaller once we replace either of $\alpha_{\mathrm{lo}}$ or $\alpha_{\mathrm{hi}}$ by $\alpha_{\mathrm{sec}}$ for the next bracket given the sign of $\phi^\prime(\alpha_{\mathrm{sec}})$. If $\phi$ behaves like a quadratic inside the bracket, the secant method converges very quickly, but convergence can be arbitrarily slow otherwise. For this reason, we simply average the bisection estimate and $\alpha_{\mathrm{sec}}$ for a less aggressive but more reliable line search that still converges quickly, i.e., ${\alpha_{\mathrm{hybrid}} = 0.5 \alpha_{\mathrm{sec}} + 0.5 (\alpha_{\mathrm{hi}} + \alpha_{\mathrm{lo}})/2}$.

Evaluation of $\phi^\prime$ has computational complexity $O(n^2)$ as does a single step of Sinkhorn's algorithm (given by the two LogSumExp reductions seen in L5-6 of Alg. \ref{alg:partial-sinkhorn}):
\begin{align}
    \phi^\prime(\alpha) &= \langle \vp_{\vu}, \vr(P_\alpha) - \vr \rangle + \langle \vp_{\vv}, \vc(P_\alpha) - \vc \rangle \label{eq:phiprime},
\end{align}
where $(\vp_{\vu}, \vp_{\vv})$ is the descent direction. Since evaluating $\phi^\prime$ requires the computation of $\vr(P_\alpha)$ and $\vc(P_\alpha)$ for the new matrix $P_\alpha \coloneqq \exp \{ (\vu + \alpha \vp_{\vu}) \1^\top + \1 (\vv + \alpha \vp_{\vv})^\top - \gamma C \}$, the last step of the line search readily carries out the LogSumExp reductions necessary for computing the Sinkhorn direction in the next step of PNCG (see L11 of Alg. \ref{alg:ncg-proj}). Observe also that at the next PNCG iteration, $\phi^\prime(0)$ can also be computed in $O(n)$ time rather than $O(n^2)$ since $\vr(P_0), \vc(P_0)$ are already known from the last line search step of the previous PNCG iteration. With these important implementation details in place, we find that the average number of $\phi^\prime$ evaluations necessary to find an $\alpha$ that satisfies (\ref{eq:approximate-wolfe}) is typically between ${1.5-2.5}$ for the PNCG algorithm. 
While the approach outlined here is easy to implement (including as a batch process) and works well in practice, better line search methods may further benefit Alg. \ref{alg:ncg-proj}.

\section{Entropy-aware Stopping Criteria on the Dual Objective Gradient Norm}
\label{sec:entropy-aware-stopping-experiments}
Here, we show the effect of choosing $H_{\min}(\vr, \vc)$ over the weaker bound $\log n$ in L4 of Alg. \ref{alg:mdot}, where the stopping criterion $\varepsilon^\prime$ is selected. To control the problem setting $H_{\min}(\vr, \vc)$, we construct synhetic problems by randomly sampling $\vr$ from the simplex via a Dirichlet distribution constructed to meet a target entropy level $H(\vr)$ as a fraction of the maximum possible entropy $\log n$. The column marginal $\vc$ is simply taken to be the uniform distribution $\1_n/n$, so that $H_{\min}(\vr, \vc) = H(\vr)$. Cost matrices are constructed by sampling $n$ points $\vx \in \mathbb{R}^3$ from a multivariate normal distribution and assigning $C_{ij} = \norm{\vx_i - \vx_j}_r^r$ for $r \in \{1, 2 \}$, before entrywise division by $\max_{ij}C_{ij}$ to ensure $C_{ij} \in [0, 1]$.

Fig. \ref{fig:hmin-ablation} illustrates the effect of this choice by ranging $H_{\min}(\vr, \vc)/\log n \in \{0.03, 0.1, 0.3, 0.9 \}$. Towards the RHS of the plots, we observe an improvement in precision roughly proportional to $\log n / H_{\min}(\vr, \vc)$ for the same number of operations, which agree with our complexity result $O(n^2 H_{\min}(\vr, \vc) / \varepsilon^{-2})$ for $p=1$ in (\ref{eq:fastish-rate}) vs. the $O(n^2 \log n / \varepsilon^{-2})$ result by \citet{dvurechensky2018computational}.

\begin{figure*}
\vspace{-40pt}
\centering
\begin{minipage}{0.48\textwidth}
  \centering \includegraphics[width=1.\linewidth]{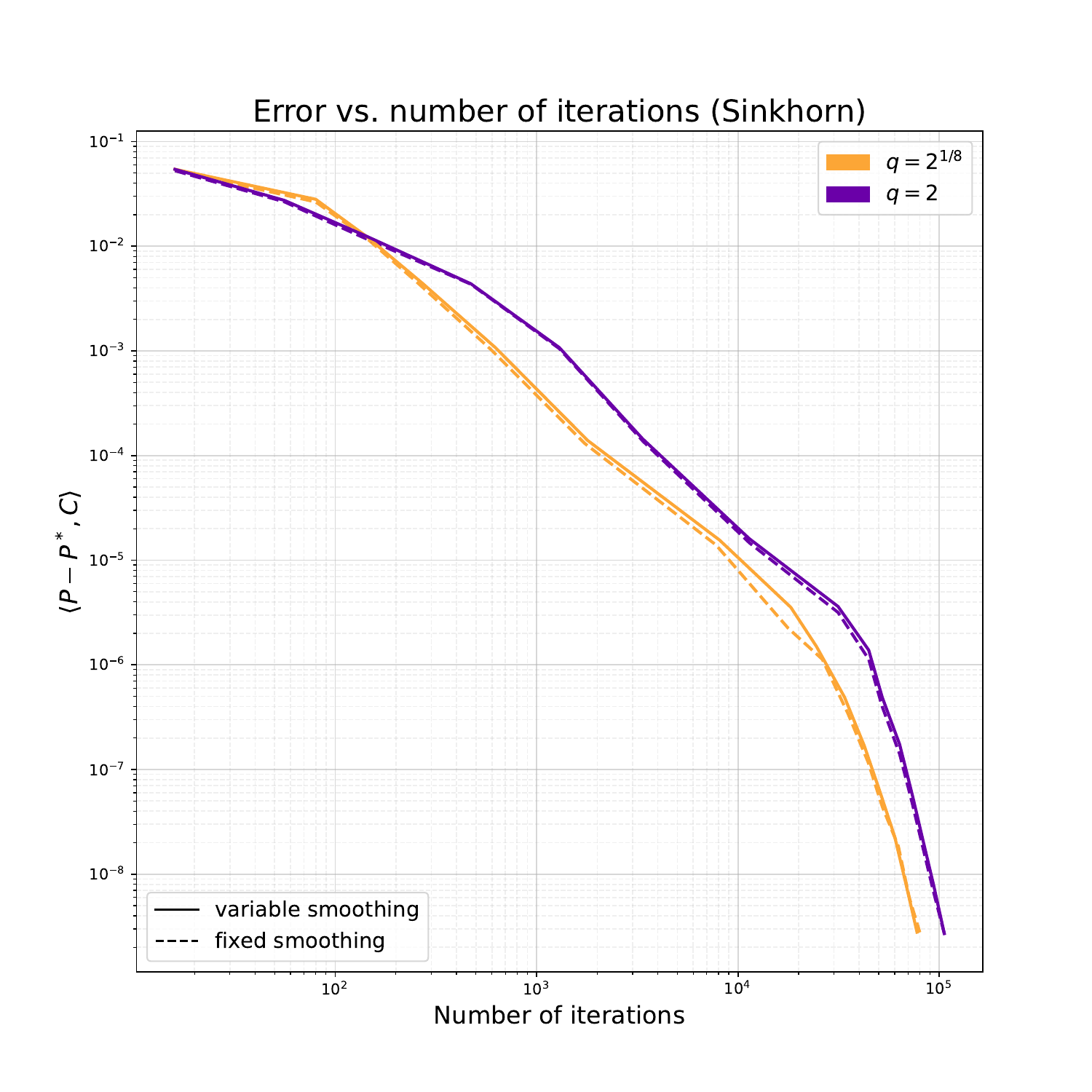}
\end{minipage}
\begin{minipage}{0.48\textwidth}
  \centering
\includegraphics[width=1.\linewidth]{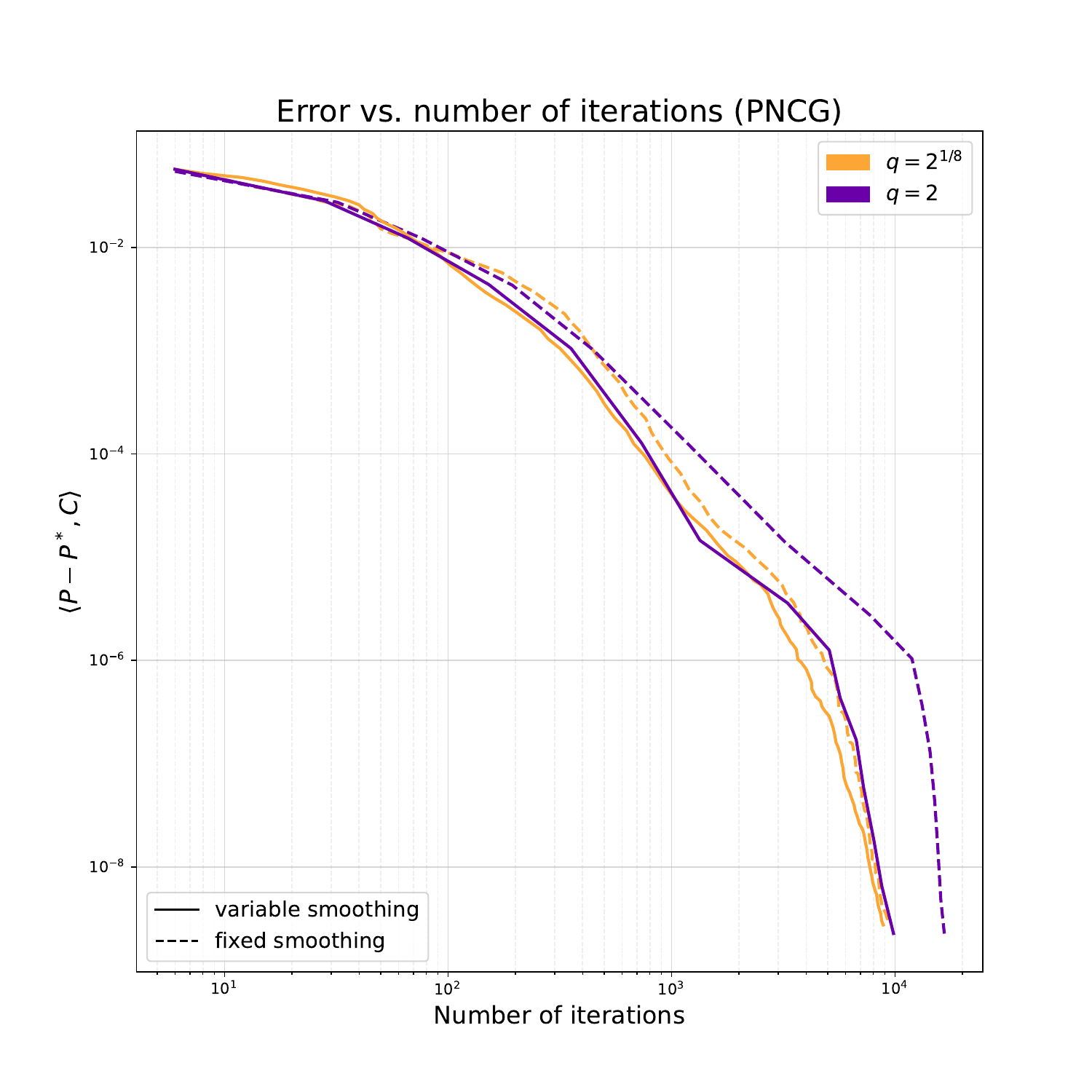}
\end{minipage}
\vspace{-15pt}
\caption{The variable smoothing scheme has almost no impact on the convergence behavior of MDOT-Sinkhorn \textbf{(left)} for both settings $q=2$ (rapid temperature decay) and $q=2^{1/8}$ (slow temperature decay). MDOT-PNCG \textbf{(right)} enjoys a speedup of nearly $2\times$ under rapid decay and a more modest speedup under slow decay. All curves show the median over 36 sample problems from the MNIST dataset ($L_1$ cost).}
\label{fig:smoothing-ablation}
\end{figure*}

\section{Variable vs. Fixed Smoothing of the Marginals in MDOT}
\label{sec:variable-smoothing-experiments}
As discussed in Sec. \ref{sec:mdot}, MDOT smoothes the marginals $\vr, \vc$ (by mixing in the uniform distribution) with a weighting factor that tracks the temperature. Since MDOT anneals the temperature, this means that the smoothing weight is higher in earlier iterations of MDOT. In particular, the mixture weight gradually decays from $H_{\min}(\vr, \vc) \big/ 4 \gamma_{\mathrm{i}}^p$ to $H_{\min}(\vr, \vc) \big/ 4 \gamma_{\mathrm{f}}^p$ given input parameter $p \geq 1$. Here, we study the effect of this design choice as it influences the convergence of two KL projection algorithms used in L6 of Alg. \ref{alg:mdot}: Sinkhorn iteration and the newly proposed PNCG algorithm (Alg. \ref{alg:ncg-proj}). The approach is benchmarked against a baseline that fixes the smoothing weight at $H_{\min}(\vr, \vc) \big/ 4 \gamma_{\mathrm{f}}^p$ all throughout instead. For these experiments, we fix $p=1.5$ following our experimental setup in Sec. \ref{sec:experiments}. Fig. \ref{fig:smoothing-ablation} shows that while MDOT-Sinkhorn is largely unaffected by this design choice, MDOT-PNCG enjoys a notable speedup from variable smoothing. We thus conclude that the approach provides a performance benefit.

\section{Comparison with CPU-based Solver for Higher \texorpdfstring{$n$}{n}}
\label{sec:comparison-with-networksimplex}
In Figure \ref{fig:wallclock-time}, we displayed via a vertical line the time taken on a CPU by the network simplex solver from the Python Optimal Transport package of \citet{flamary2021pot}. Clearly, a faster CPU would benefit the network simplex and a faster GPU would benefit the MDOT family of algorithms, which makes them difficult to compare in a fair setup. However, given the known $\widetilde{O}(n^3)$ complexity of the network simplex and the $\widetilde{O}(n^2)-\widetilde{O}(n^{2.5})$ empirical dependence of MDOT-PNCG seen in Fig. \ref{fig:problem-size-dependence}, we suspect that the precision-speed trade-off posed by MDOT-PNCG improves with increasing $n$. In Table \ref{tab:mnist_transport_results}, we provide a comparison of the two algorithms, with several values of $\gamma_{\mathrm{f}}$ used for MDOT-PNCG and $n$ increased from $4,096$ to $16,384$. We observe the same trend across both $L_1$ and $L_2$ cost functions on the MNIST dataset; for a comparable level of relative error achieved by MDOT--PNCG (at a fixed $\gamma_{\mathrm{f}}$), the speedup over the network simplex solver is better for $n=16,384$. We expect the trend to continue with higher $n$, and note that MDOT stands to benefit from ongoing rapid developments in GPU hardware innovation, as well as faster projection algorithms than PNCG; see also our concurrent work in this direction showing substantial speedups \citep{kemertas2025truncated}.

\begin{table}[bp]
  \centering
  \small
  \setlength{\tabcolsep}{6pt}
  \renewcommand{\arraystretch}{1.15}
  \caption{Comparison of the exact Network Simplex solver with 
           MDOT–PNCG method on MNIST transport problems under $L_{1}$ and
           $L_{2}$ costs.  Relative error is computed as $100 * \langle P - P^*, C \rangle / \langle P^*, C \rangle$.}
  \begin{tabular}{lll*{3}{c}*{3}{c}}
    \toprule
      &  &  & \multicolumn{3}{c}{\ensuremath{n = 4,096}}
      & \multicolumn{3}{c}{\ensuremath{n = 16,384}} \\[-0.25em]
    \cmidrule(lr){4-6}\cmidrule(lr){7-9}
    \textbf{Cost Fn.} & \textbf{Algorithm} & \textbf{$\gamma_{\mathrm{f}}$}
        & \textbf{RelErr\ \%} & \textbf{Time (s)} & \textbf{Speedup}
        & \textbf{RelErr\ \%} & \textbf{Time (s)} & \textbf{Speedup} \\
    \midrule
    \multirow{5}{*}{\ensuremath{~~~L_1}}
      & Net. Simplex & --
          & 0.0000 & 5.45  & 1.00 
          & 0 & 236.46 & 1.00 \\[0.15em]
      & \multirow{3}{*}{MDOT--PNCG} & $2^{6}$
          & 16.556 & 0.18  & 30.55 
          & 22.580 &  1.72 & 137.24 \\
      &                             & $2^{9}$
          &  0.167 & 2.28  &  2.39 
          &  0.585 & 26.94 &   8.78 \\
      &                             & $2^{12}$
          &  0.002 & 11.85 & 0.46 
          &  0.002 & 318.58 & 0.74 \\
    \midrule
    \multirow{5}{*}{\ensuremath{~~~L_2^2}}
      & Net. Simplex & --
          & 0.000 & 11.74 & 1.00 
          & 0 & 728.40 & 1.00 \\[0.15em]
      & \multirow{3}{*}{MDOT--PNCG} & $2^{9}$
          & 26.877 & 0.52 &  22.66 
          & 23.827 &  5.53 & 131.83 \\
      &                             & $2^{12}$
          &  3.166 & 3.85 &   3.04 
          &  3.372 & 38.12 &  19.11 \\
      &                             & $2^{15}$
          &  0.044 & 39.38 &  0.30 
          &  0.303 & 294.12 &  2.48 \\
    \bottomrule
  \end{tabular}
  \label{tab:mnist_transport_results}
\end{table}

\section{Details of Baseline Algorithm Implementations}
\label{sec:experiment-details-appx}
Here, we provide details and sources on the implementation of various algorithms shown in Fig. \ref{fig:wallclock-time}. 
Our implementations of other algorithms will be open-sourced for transparency.

\textbf{Mirror Prox Sherman Optimized}  \citep{jambulapati2019}. For this algorithm, the source code is originated in the NumPy code at  \href{https://github.com/kkahatapitiya/Numpy-OT}{this repository}. The owner of the repository notes that this NumPy implementation is based on a Julia implementation by the original authors, which was provided in a private exchange. The code used in this paper is a PyTorch adaptation of the NumPy code and has been verified to produce identical output as the NumPy version over multiple problems. The algorithm was called with \textit{entropy factor} parameter set to the default 2.75 in all experiments. The number of iterations for the algorithm was varied from $2$ to $2^{15}$ to achieve different levels of precision.

\textbf{APDAGD} \citep{dvurechensky2018computational}. For APDAGD, a similar strategy was used, except with \href{https://github.com/joshnguyen99/partialot/tree/main}{this code repository}. A PyTorch version of the original NumPy code was written and verified to produce identical output. For different levels of precision, the $\varepsilon$ parameter of the algorithm was varied from $2^{-1}$ to $2^{-6}$. For smaller $\varepsilon$, non-convergence was observed.

\textbf{AAM} \citep{guminov21aam}. The implementation is based on NumPy code by the original authors at \href{https://github.com/nazya/AAM}{this repository}. A PyTorch version was verified to produce identical output for GPU execution. The $\varepsilon$ parameter was varied from $2^{-1}$ to $2^{-10}$. For smaller $\varepsilon$, numerical errors were encountered.

\textbf{Feydy, Alg. 3.5} \citep{feydy2020}. The implementation is based on the algorithm as presented in the original work. For different levels of precision, the number of total iterations was varied from $2$ to $2^{12}$. Beyond the upper bound, numerical errors were observed. As it produced better estimates than the alternative, the algorithm was called with \textit{debiasing} turned on; hence, the error $\langle P - P^*, C \rangle$ was instead measured in absolute value as $|\langle P - P^*, C \rangle|$ for this algorithm only. Scaling ratio was set to an intermediate 0.7, which is between the listed 0.5 (fast) and 0.9 (safe) settings.

\textbf{Sinkhorn} \citep{cuturi2013sinkhorn}. A log-domain stabilized implementation was used. For different precision levels, $\gamma$ was varied from $2^5$ to $2^{14}$ for $L_1$ distance cost and to $2^{15}$ for $L_2^2$ distance cost. Stopping criteria were given by our formula in L4 of Alg. \ref{alg:mdot}, and the results obtained by calling Alg. \ref{alg:mdot} with $\gamma_{\mathrm{i}} = \gamma_{\mathrm{f}}$, so that the algorithm terminates after a single KL projection via SK iteration.

\textbf{Mirror Sinkhorn (MSK)} \citep{ballu2022mirror}. The implementation is based on the algorithm presented in the original paper. For different levels of precision, the number of total iterations was varied from $2^5$ to $2^{16}$.

\newpage
\begin{figure*}
\centering
\begin{minipage}{0.45\textwidth}
  \centering \includegraphics[width=1.\linewidth]{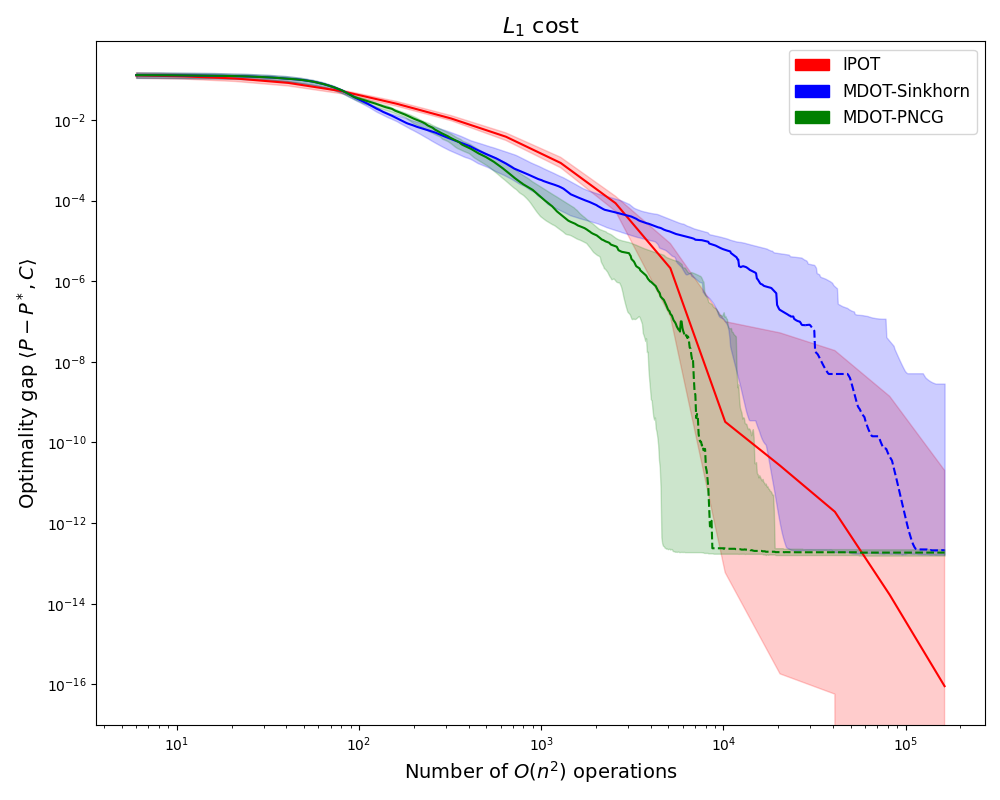}
\end{minipage}
\begin{minipage}{0.45\textwidth}
  \centering
\includegraphics[width=1.\linewidth]{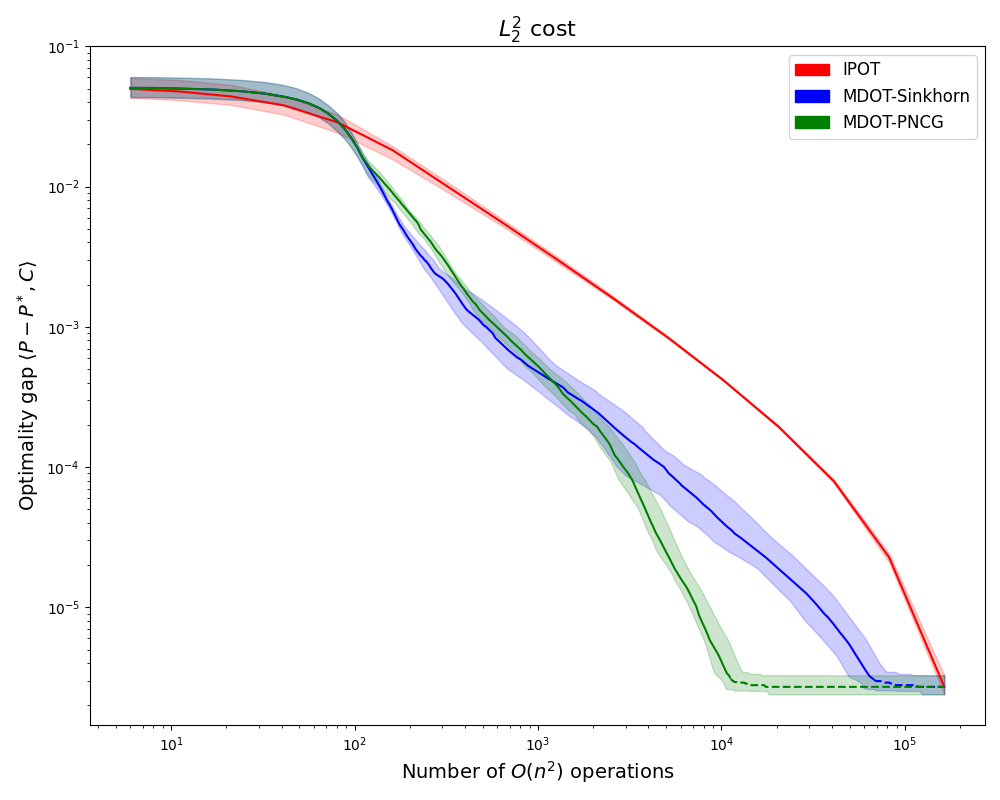}
\end{minipage}
\caption{Optimality gap $\langle P - P^*, C \rangle$ of rounded (feasible) plans vs. number of $O(n^2)$ operations for IPOT \citep{xie20ipot} and MDOT (with Sinkhorn and PNCG for KL projections, using $p=1.5, q=2^{1/3}$ as in Section \ref{sec:experiments}). Both methods are run up to a final $\gamma_{\mathrm{f}}=2^{15}$ on the upsampled MNIST dataset ($n=4096$) with $L_1$ (left) and $L_2^2$ (right) cost functions. MDOT reaches the target $\gamma_\mathrm{f}$ satisfying its stopping criterion $\norm{\nabla g(\gamma_{\mathrm{f}})}_1 = O(\gamma^{-p})$ in fewer operations than IPOT takes until termination, so we continue the KL projection at $\gamma_{\mathrm{f}}$ until the same number of operations as IPOT is reached, without increasing $\gamma$ further (dashed lines).}
\label{fig:ipot}
\end{figure*}

\section{Similarities and Differences with IPOT}
\label{sec:ipot}
In this section, we discuss the similarities and differences between MDOT and the Inexact Proximal point method for exact Optimal Transport (IPOT) algorithm of \citet{xie20ipot} in more detail. In IPOT, authors propose to update (in our notation) the $\gamma$ parameter by fixed increments and typically choose  $\Delta_\gamma^{(t)} = 1$ for all $t \geq 0$, while we took $\Delta_\gamma = (q-1)\gamma$ for a hyperparameter ${q > 1}$. Following each temperature update, they run a fixed number of $L$ row+column scaling (Sinkhorn) updates and recommend ${L=1}$, whereas MDOT requires that the KL projection is continued until the dual objective gradient norm reaches below a threshold at each $\gamma$. A small number of $L$ Sinkhorn updates clearly does not amount to an ``exact'' KL projection onto the feasible set $\gU(\vr, \vc)$, but \citet{xie20ipot} show that under some conditions there exists some finite $L$ that guarantees linear convergence of $\langle P(\vu, \vv; \gamma), C \rangle$ to the $\langle P^*, C \rangle$. On the other hand, \textbf{(i)} MDOT naturally adapts the number of optimization updates to the difficulty of the problem and the strictness of the stopping criterion at $\gamma$, and \textbf{(ii)} it stands to benefit from potentially faster convex optimization algorithms besides Sinkhorn iteration, e.g., our PNCG approach proposed in Sec. \ref{fig:pncg} or any other approach that may be developed in the future. A similarity between the approaches is that IPOT also includes an implicit warm-start of the dual variable, which can be considered a special case of our warm-start proposed in Section \ref{sec:warm-starting}, where $\Delta_\gamma^{(t)} = 1$ for all $t$.

In Figure \ref{fig:ipot}, we compare IPOT with MDOT ($p{=}1.5, q{=}2^{1/3}, \gamma_\mathrm{i}{=}2$) on the upsampled MNIST dataset. 
For a fair comparison, we followed the conventions of \citet{xie20ipot} and implemented all algorithms by computing row/column sums of $P(\vu, \vv; \gamma)$ explicitly rather than via LogSumExp reductions as in Algorithms \ref{alg:ncg-proj} and \ref{alg:partial-sinkhorn}.\footnote{In this implementation, we evaluate the Sinkhorn direction (\ref{eq:sinkhorn-direction}) for PNCG by adding a small constant ${\approx} 10^{-30}$ to each entry of $\vr(P)$ and $\vc(P)$ for numerical stability.} Each matrix-vector product, row/column sum, row/column scaling of a matrix and entry-wise operations on matrices is counted as one operation of cost $O(n^2)$. 
Since MDOT satisfies its stopping criterion (reaches $\gamma_{\mathrm{f}}$ and satisfies $\norm{\nabla g(\gamma_{\mathrm{f}})}_1 \leq H_{\min}(\vr, \vc)/\gamma^{-p}$) up to $10\times$ more quickly, 
we continue minimizing $g(\vu, \vv; \gamma, \vr, \vc)$ using the respective KL projection algorithm (Sinkhorn or PNCG) until the same number of $O(n^2)$ operations as IPOT are executed (namely, $5 \times \gamma_{\mathrm{f}}$). 

Overall, the methods behave similarly for the $L_1$ cost, while MDOT is up to $10 \times$ faster for the $L_2^2$ cost. 

These results also reveal an interesting contrast between the $L_1$ and $L_2^2$ cost functions.  For the $L_1$ cost the optimality gap rapidly approaches 0 as the KL projection becomes more precise (see dashed lines), even though $\gamma_{\mathrm{f}}$ is fixed. This suggests that the entropic gap, namely $\langle P^*(\gamma) - P^*, C \rangle$, is 
small at $\gamma_{\mathrm{f}}$ for these MNIST problems,  
and the optimality gap is dominated by the inexactness of the projection onto $\gU(\vr, \vc)$. We believe that the fast rate $O(\exp\{-\gamma K\})$ of \citet{weed2018explicit} discussed in Section \ref{sec:stopping-criteria} is active here for the entropic gap, although this requires further study. On the other hand for the $L_2^2$ cost, continuing to iterate on the  projection error to improve the approximation of 
$P^*(\gamma_{\mathrm{f}})$ does not reduce the optimality gap any further, which implies the entropic gap $\langle P^*(\gamma_{\mathrm{f}}) - P^*, C \rangle$ is still dominant at $\gamma_{\mathrm{f}}$ in these problems.

\newpage
\section{Additional Benchmarking on DOT\MakeLowercase{mark}}
\label{sec:additional-experiments}

Figs. \ref{fig:dotmark-first}-\ref{fig:dotmark-last} add further benchmarking on 10 more datasets from the DOTmark benchmark of \citet{schrieber2017dotmark}, which include various kinds of randomly generated images, classical test images and real data from microscopy. Each dataset contains 45 unique pairs of marginals $(\vr, \vc)$ obtained from pixel values. The cost matrix is constructed from distances in 2D pixel locations; we evaluate on both $L_1$ and $L_2$ distance costs for $n=4096$ following our setup in Sec. \ref{sec:experiments}.

Besides clock time, we additionally plot here the total number of $O(n^2)$-costing operations for each algorithm, e.g., matrix-vector products, row/column sums of matrices, vector outer products, element-wise operations on matrices. We count primitive operations for consistency across algorithms; counting a higher-level function call such as the number of gradient evaluations would be unfair due to inherent differences in the design of various algorithms. For instance, some require costly line search between gradient evaluations. These plots show that operation counts of the baseline algorithms follow similar trends to wall-clock time and no algorithm is unfairly advantaged via low-level optimizations.

For each of 20 problem sets (10 image datasets $\times$ 2 cost functions),  20 out of 45 problems are sampled without replacement. The wall-clock time plots for the respective cost functions ($L_1$ and $L_2^2$) follow similar trends as Fig. \ref{fig:wallclock-time}. In addition to the median, we also include $75\%$ confidence intervals along both axes, which show that MDOT is generally robust.

\newpage

\newcommand\szz{0.95}
\begin{figure*}
\centering
\begin{minipage}{\szz\textwidth}
  \centering   \includegraphics[width=1.\linewidth]{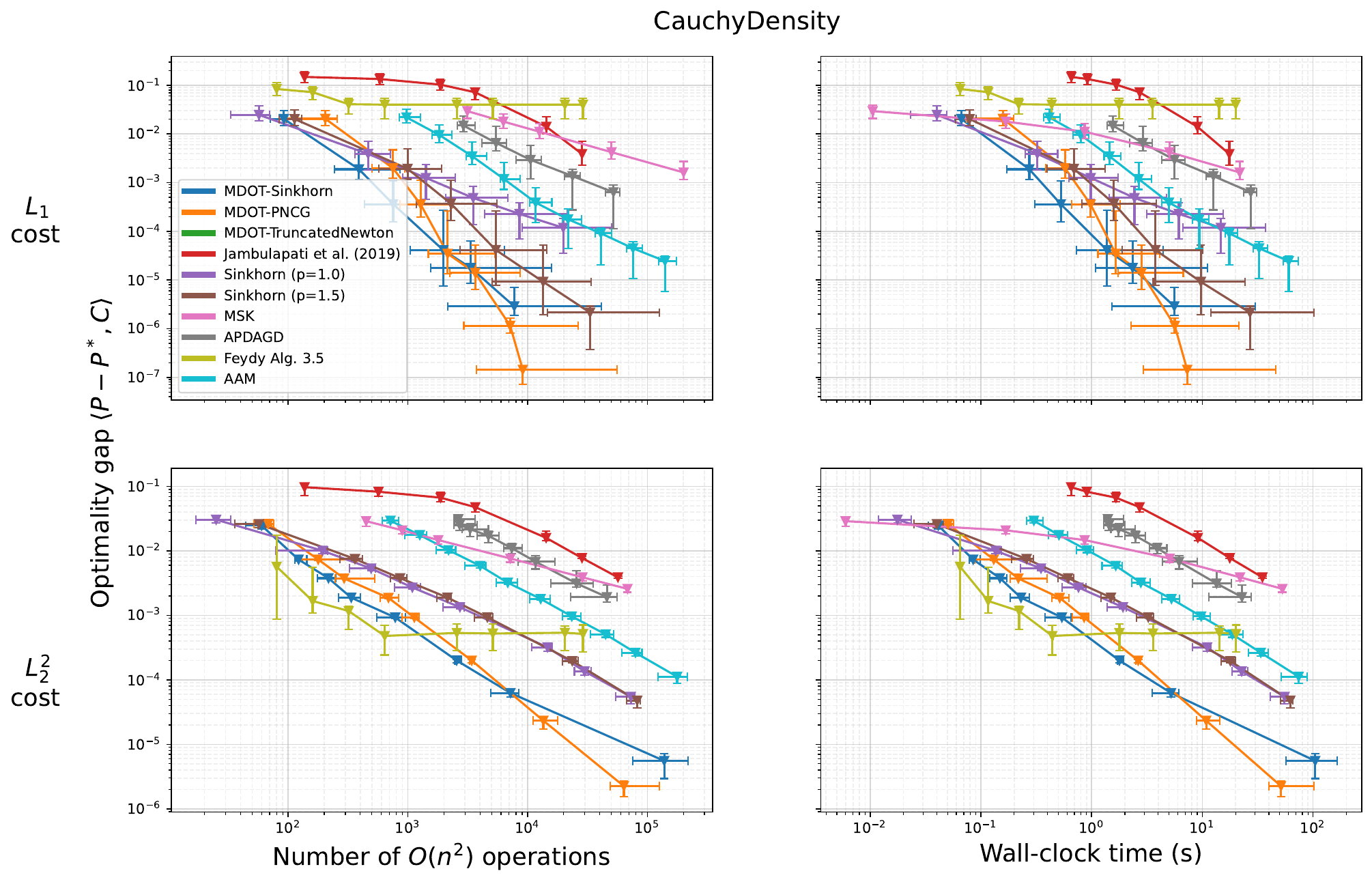}
\end{minipage}
\caption{\texttt{CauchyDensity} problem with $L_1$ (top) and $L_2^2$ (bottom) costs, showing excess cost (error) vs. number of $O(n^2)$ operations (left) and wall-clock time (right). }
\label{fig:dotmark-first}
\end{figure*}
\begin{figure*}
\centering
\begin{minipage}{\szz\textwidth}
  \centering   \includegraphics[width=1.\linewidth]{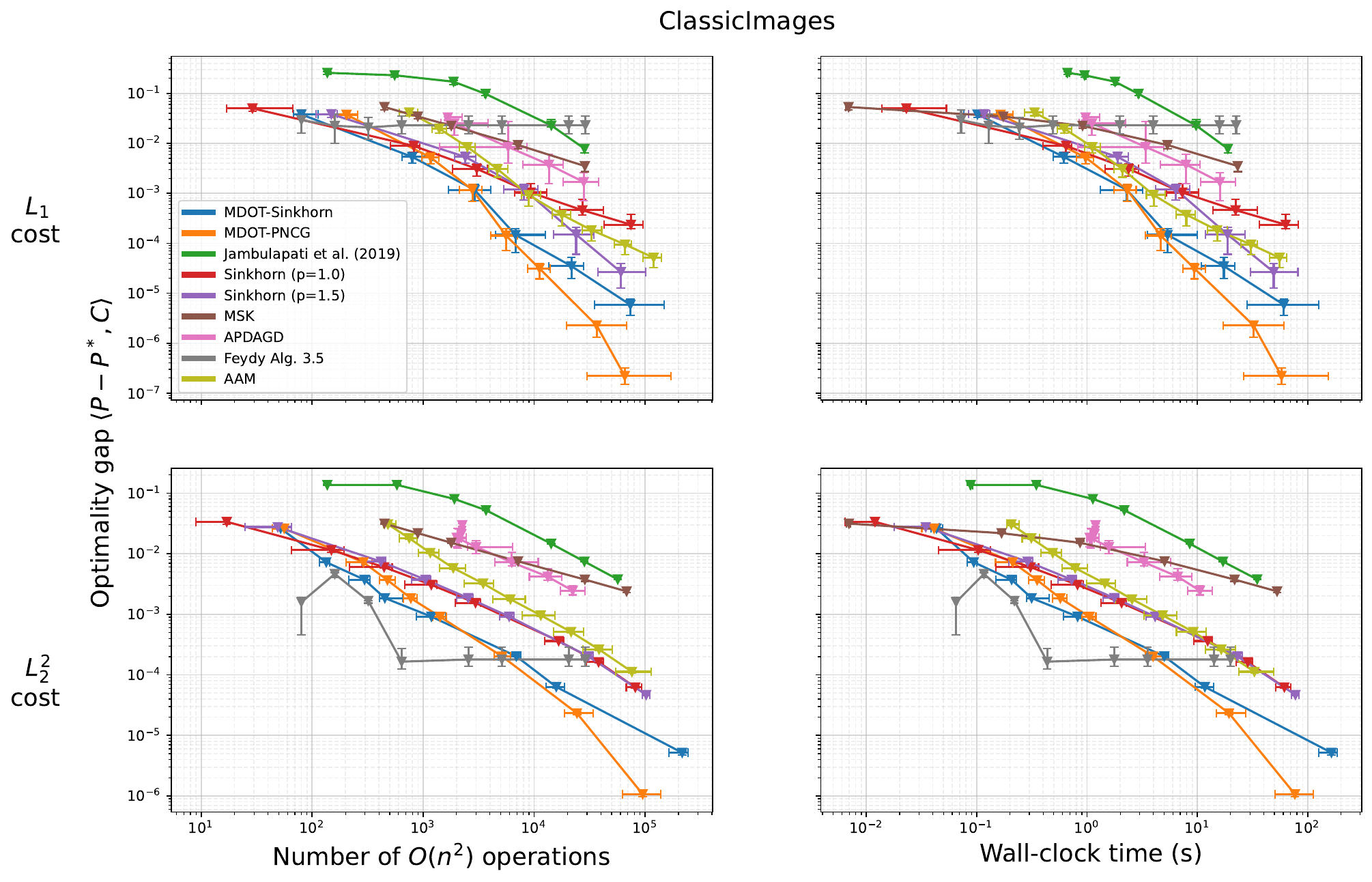}
\end{minipage}
\caption{\texttt{ClassicImage} problem with $L_1$ (top) and $L_2^2$ (bottom) costs, showing excess cost (error) vs. number of $O(n^2)$ operations (left) and wall-clock time (right). }
\end{figure*}

\begin{figure*}
\centering
\begin{minipage}{\szz\textwidth}
  \centering   \includegraphics[width=1.\linewidth]{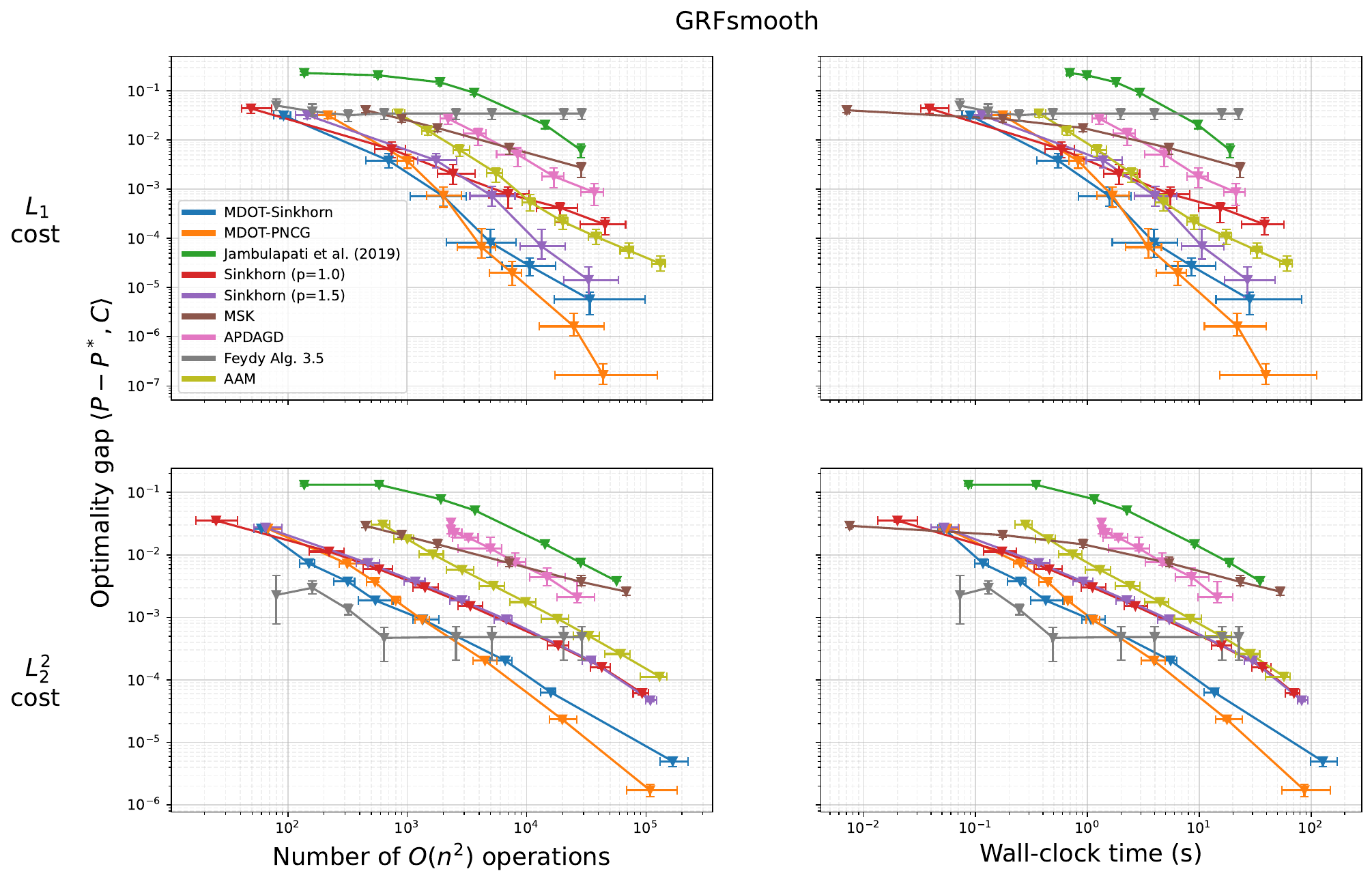}
\end{minipage}
\caption{\texttt{GRFSmooth} problem with $L_1$ (top) and $L_2^2$ (bottom) costs, showing excess cost (error) vs. number of $O(n^2)$ operations (left) and wall-clock time (right). }
\end{figure*}

\begin{figure*}
\centering
\begin{minipage}{\szz\textwidth}
  \centering   \includegraphics[width=1.\linewidth]{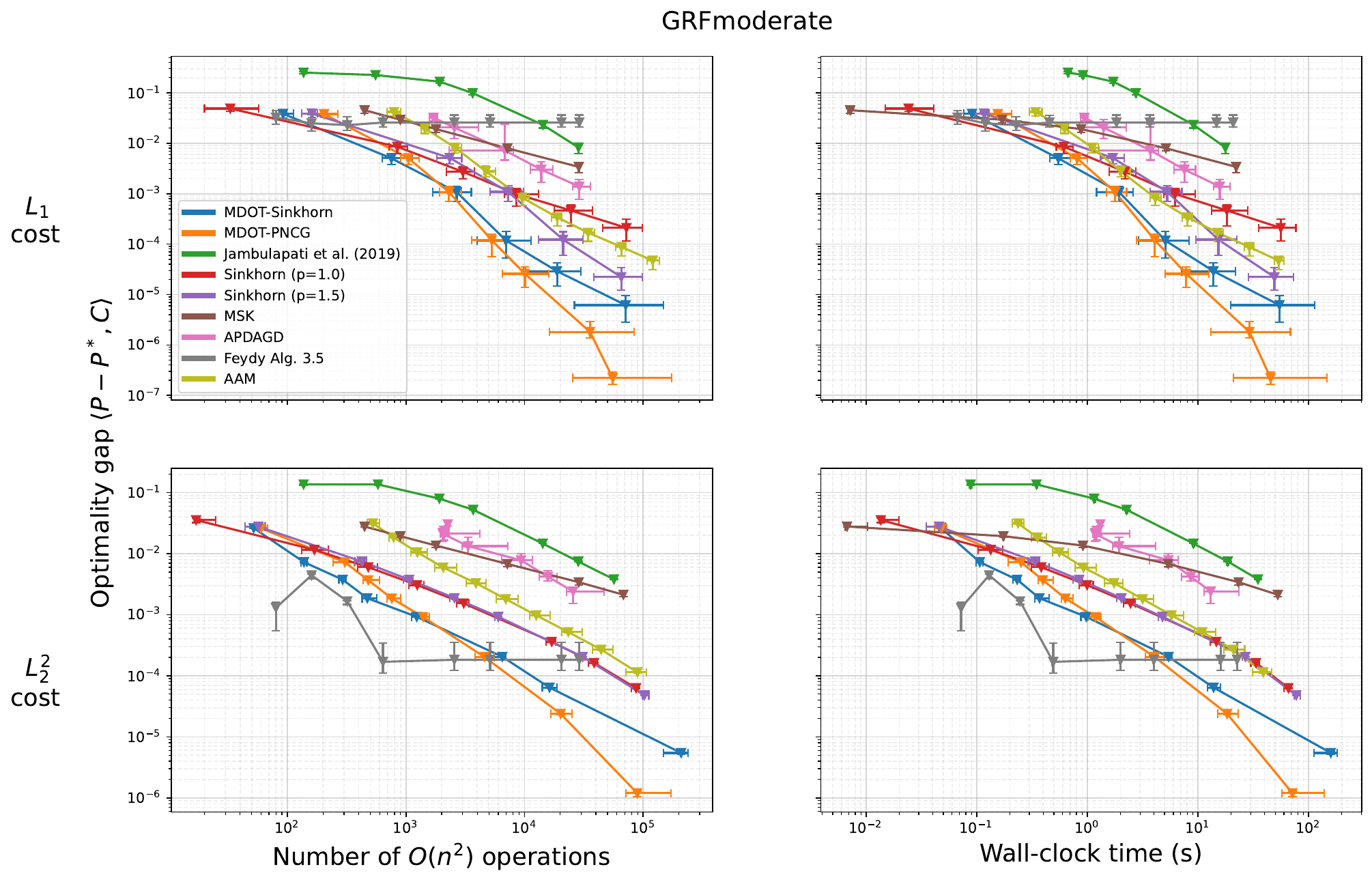}
\end{minipage}
\caption{\texttt{GRFModerate} problem with $L_1$ (top) and $L_2^2$ (bottom) costs, showing excess cost (error) vs. number of $O(n^2)$ operations (left) and wall-clock time (right). }
\end{figure*}

\begin{figure*}
\centering
\begin{minipage}{\szz\textwidth}
  \centering   \includegraphics[width=1.\linewidth]{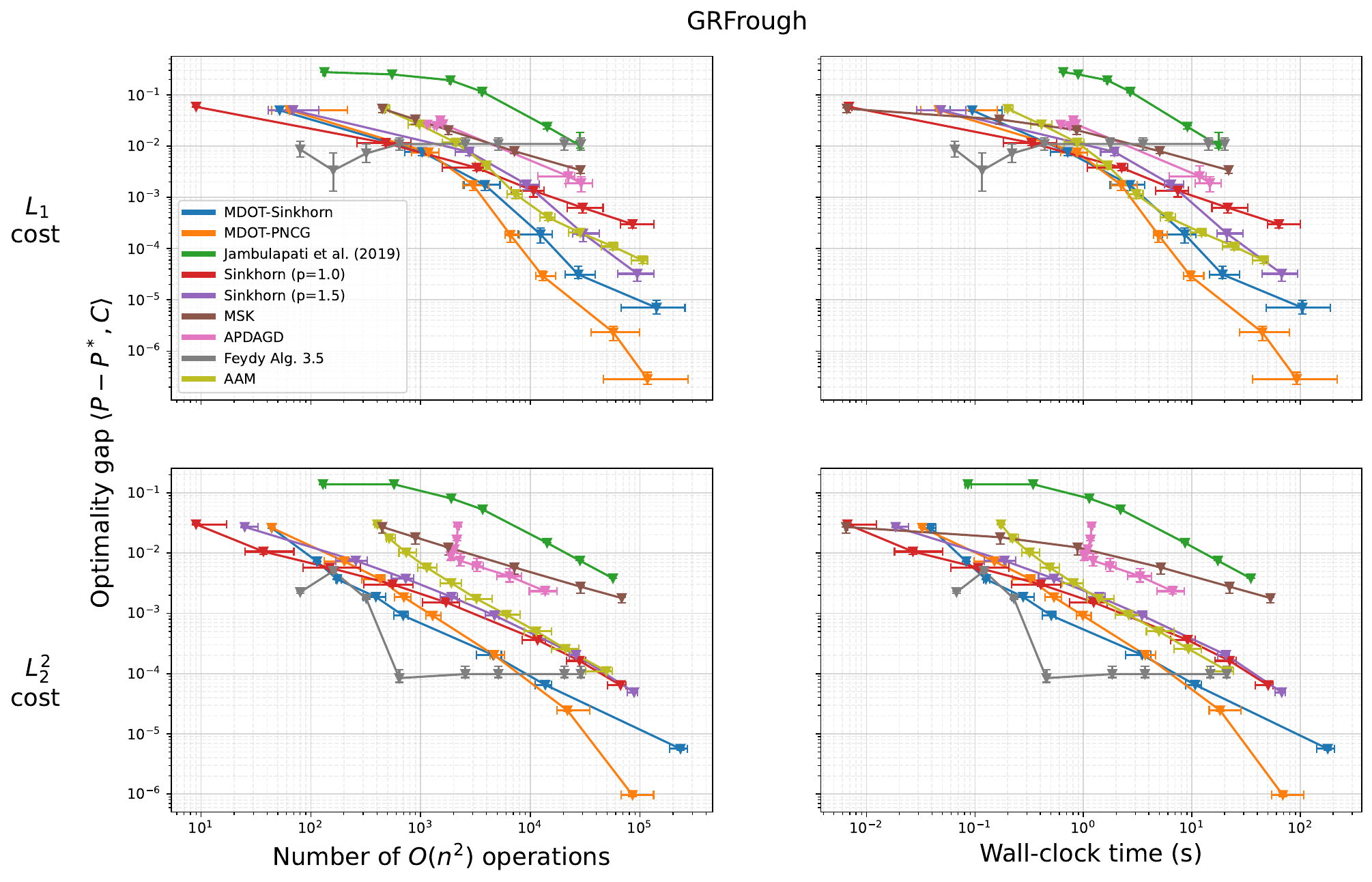}
\end{minipage}
\caption{\texttt{GRFRough} problem with $L_1$ (top) and $L_2^2$ (bottom) costs, showing excess cost (error) vs. number of $O(n^2)$ operations (left) and wall-clock time (right). }
\end{figure*}

\begin{figure*}
\centering
\begin{minipage}{\szz\textwidth}
  \centering   \includegraphics[width=1.\linewidth]{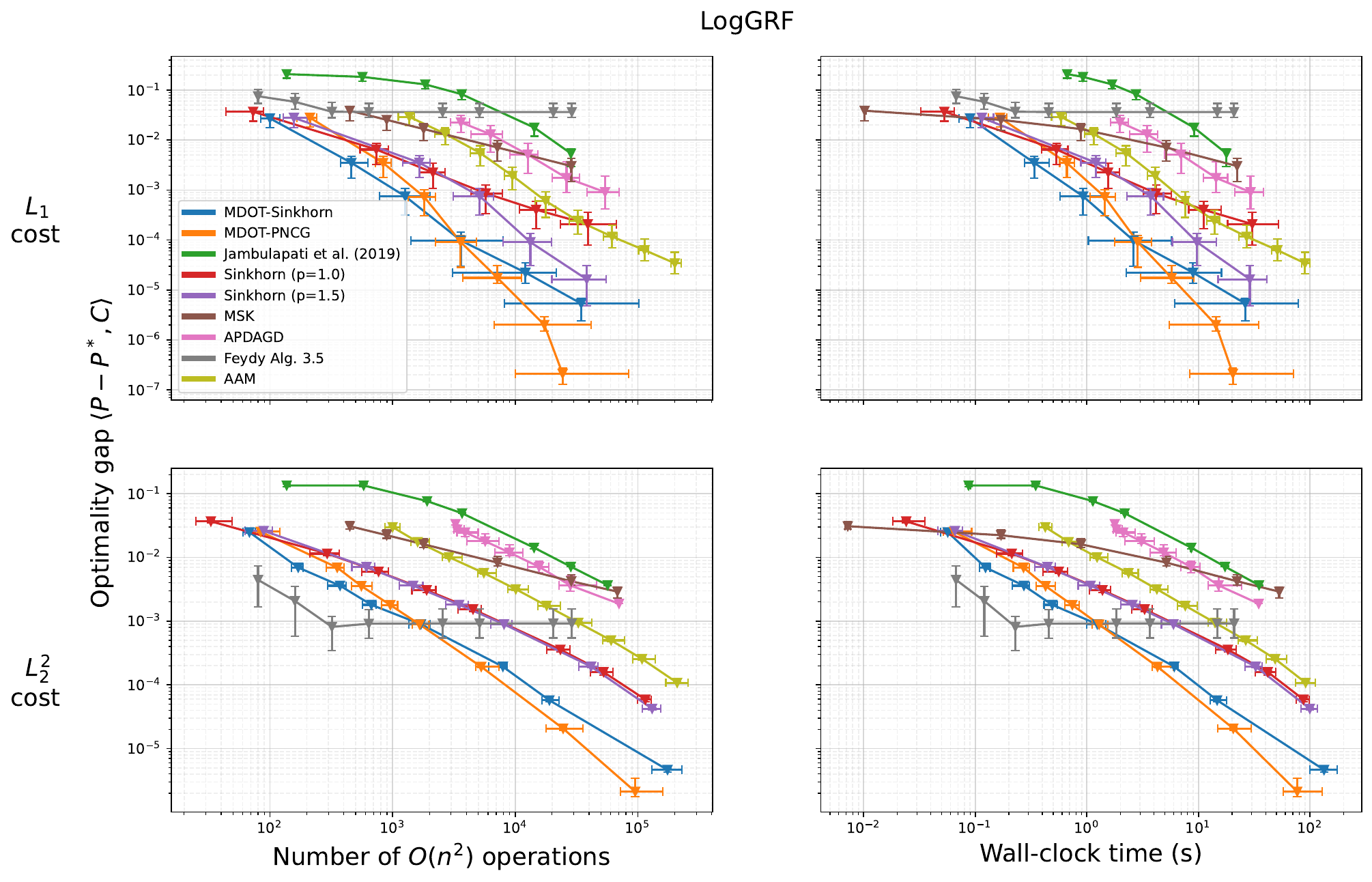}
\end{minipage}
\caption{\texttt{LogGRF} problem with $L_1$ (top) and $L_2^2$ (bottom) costs, showing excess cost (error) vs. number of $O(n^2)$ operations (left) and wall-clock time (right). }
\end{figure*}

\begin{figure*}
\centering
\begin{minipage}{\szz\textwidth}
  \centering   \includegraphics[width=1.\linewidth]{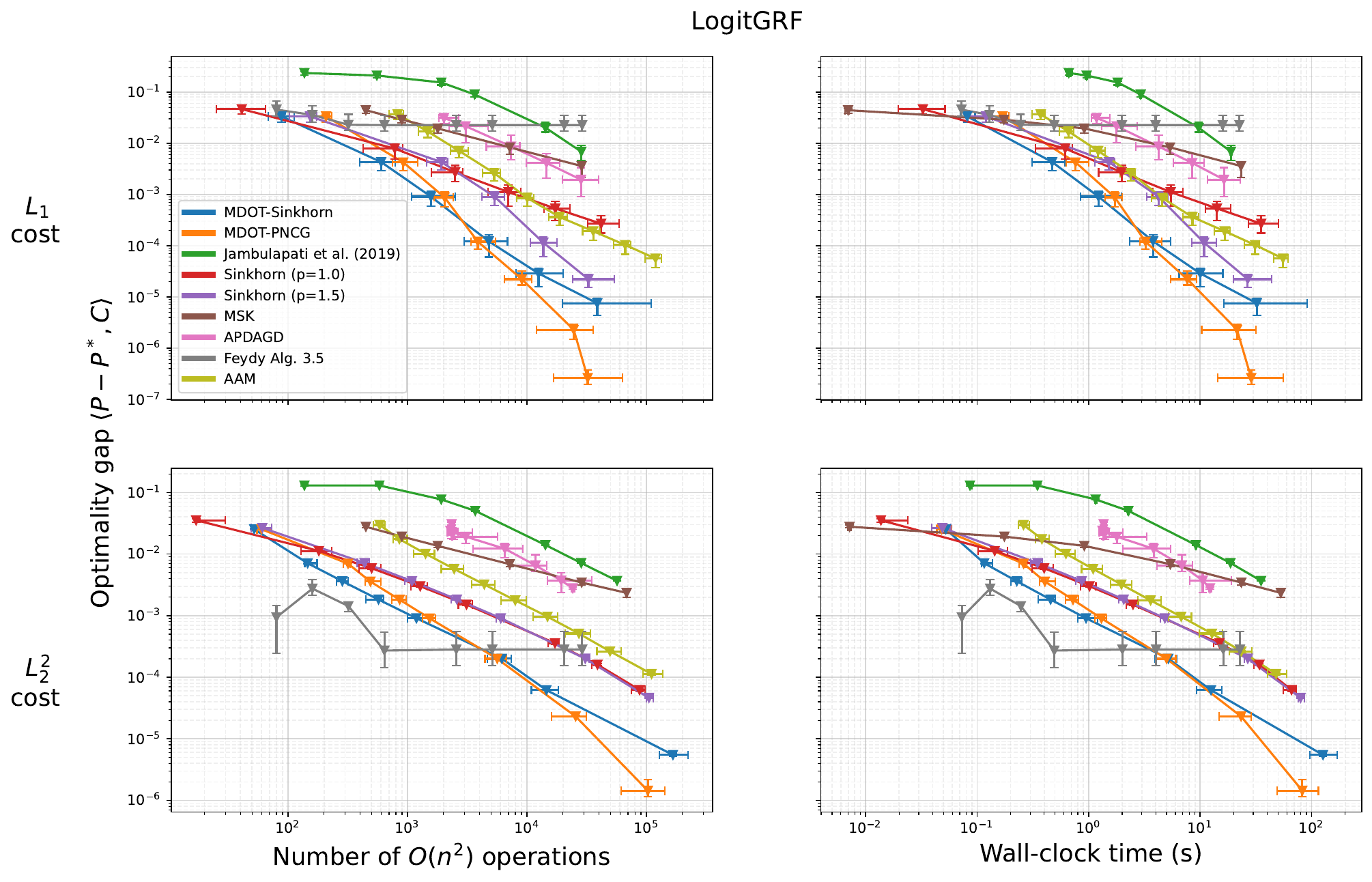}
\end{minipage}
\caption{\texttt{LogitGRF} problem with $L_1$ (top) and $L_2^2$ (bottom) costs, showing excess cost (error) vs. number of $O(n^2)$ operations (left) and wall-clock time (right). }
\end{figure*}

\begin{figure*}
\centering
\begin{minipage}{\szz\textwidth}
  \centering   \includegraphics[width=1.\linewidth]{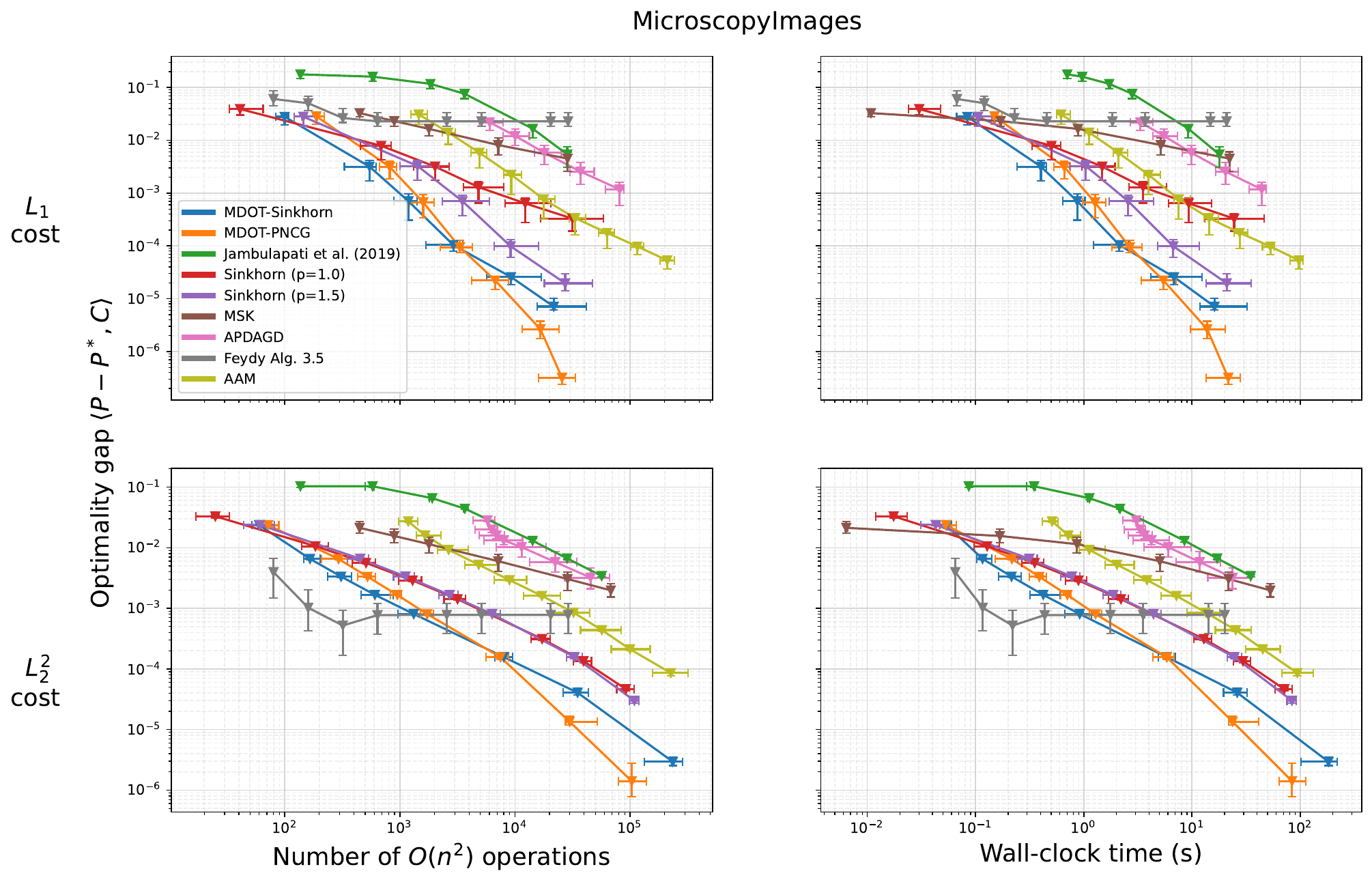}
\end{minipage}
\caption{\texttt{MicroscopyImage} problem with $L_1$ (top) and $L_2^2$ (bottom) costs, showing excess cost (error) vs. number of $O(n^2)$ operations (left) and wall-clock time (right). }
\end{figure*}

\begin{figure*}
\centering
\begin{minipage}{\szz\textwidth}
  \centering   \includegraphics[width=1.\linewidth]{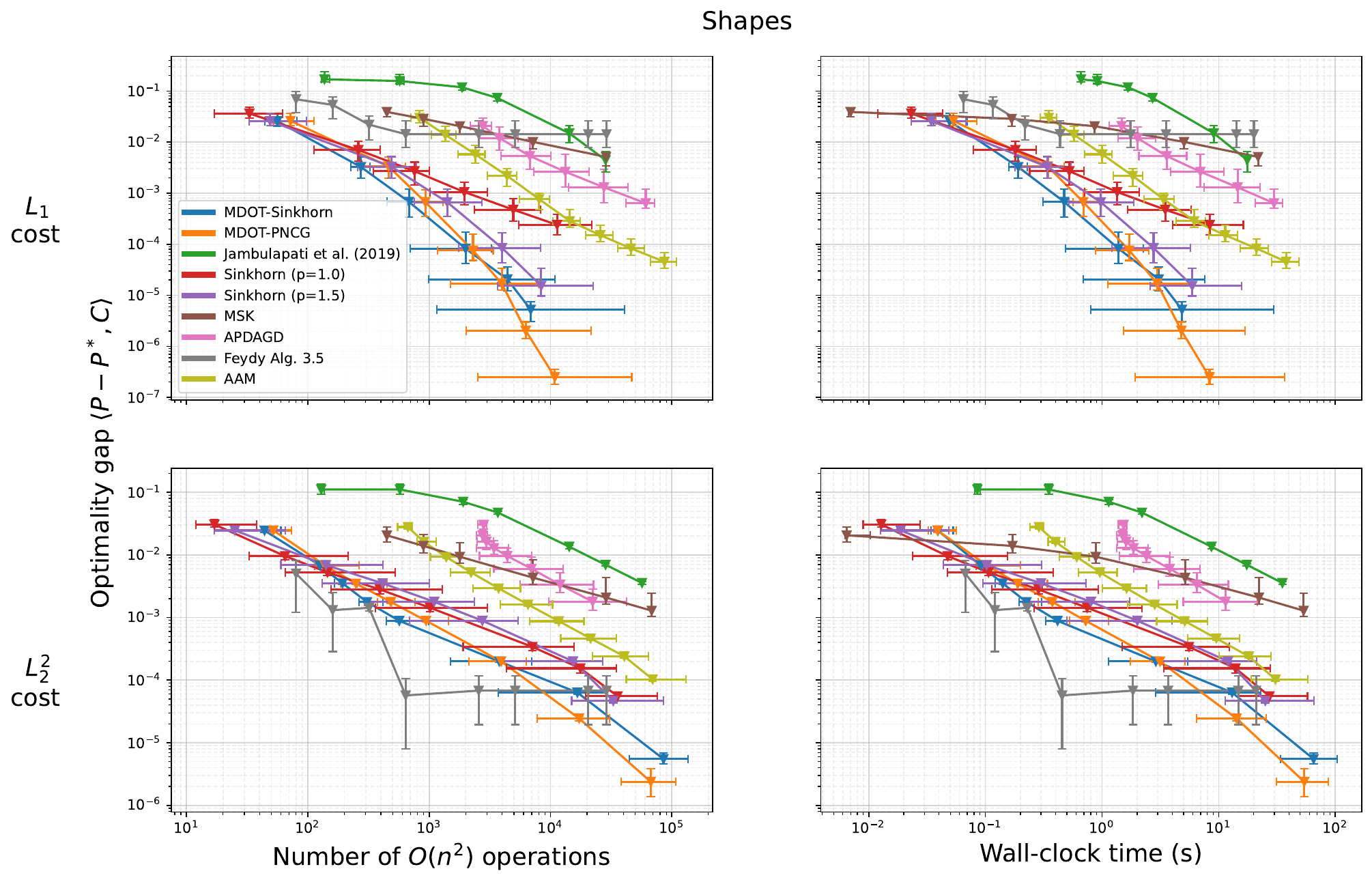}
\end{minipage}
\caption{\texttt{Shape} problem with $L_1$ (top) and $L_2^2$ (bottom) costs, showing excess cost (error) vs. number of $O(n^2)$ operations (left) and wall-clock time (right). }
\end{figure*}

\begin{figure*}
\centering
\begin{minipage}{\szz\textwidth}
  \centering   \includegraphics[width=1.\linewidth]{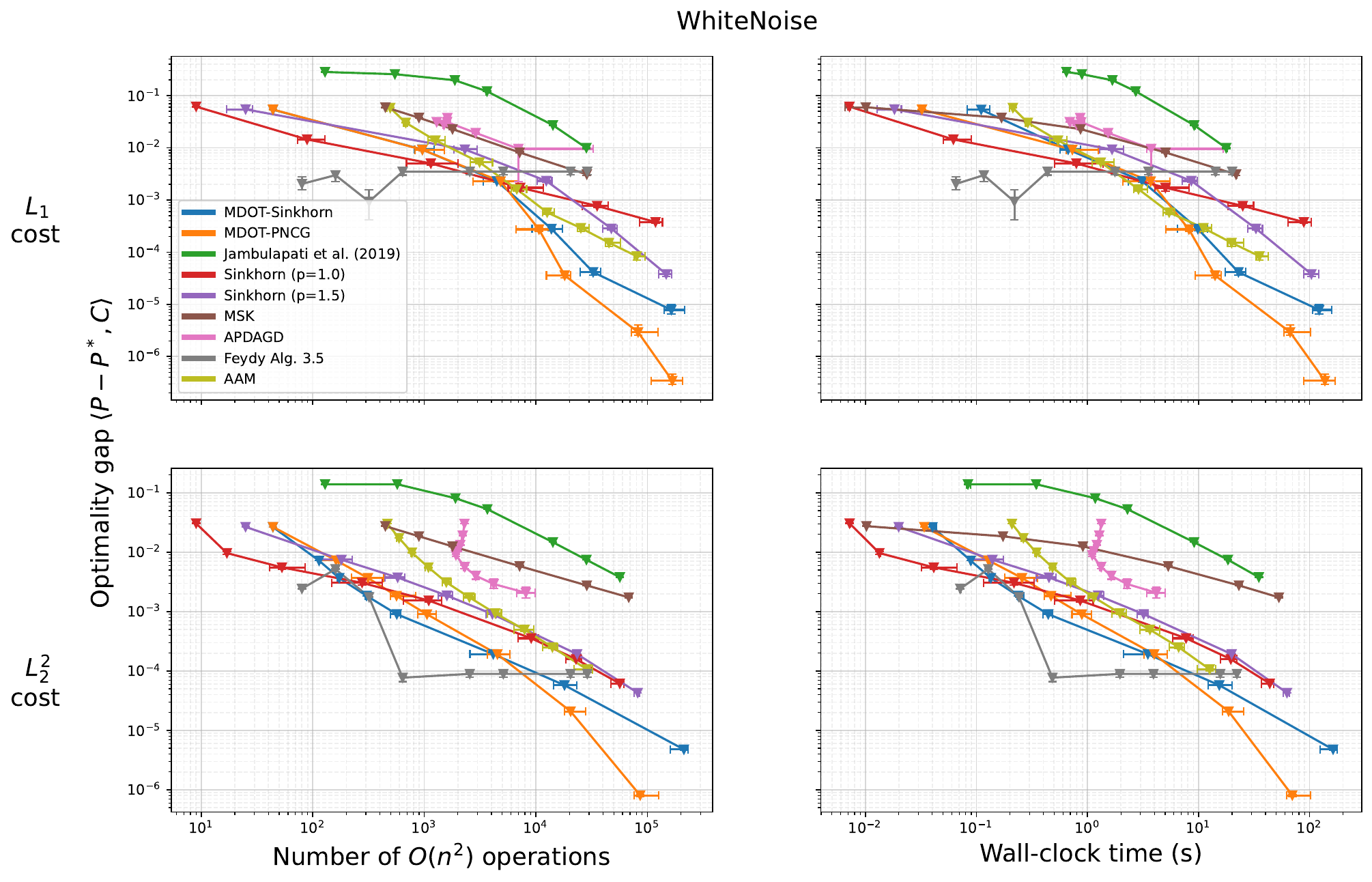}
\end{minipage}
\caption{\texttt{WhiteNoise} problem with $L_1$ (top) and $L_2^2$ (bottom) costs, showing excess cost (error) vs. number of $O(n^2)$ operations (left) and wall-clock time (right). }
\label{fig:dotmark-last}
\end{figure*}

\end{document}